\def\eqref#1{equation~\ref{#1}}
\def\1{\bm{1}}
\DeclareMathAlphabet{\mathsfit}{\encodingdefault}{\sfdefault}{m}{sl}
\SetMathAlphabet{\mathsfit}{bold}{\encodingdefault}{\sfdefault}{bx}{n}
\newcommand{\E}{\mathbb{E}}
\newcommand{\R}{\mathbb{R}}
\newtheorem{theorem}{Theorem}[section]
\newtheorem{proposition}[theorem]{Proposition}
\newtheorem{lemma}[theorem]{Lemma}
\newtheorem{definition}[theorem]{Definition}
\newtheorem{assumption}[theorem]{Assumption}
\newtheorem{remark}[theorem]{Remark}
\newcommand{\XX}{\mathcal{X}}
\renewcommand{\SS}{\mathcal{S}}
\renewcommand{\AA}{\mathcal{A}}
\newcommand{\CC}{\mathcal{C}}
\newcommand{\TT}{\mathcal{T}}
\newcommand{\NN}{\mathcal{N}}
\newcommand{\MM}{\mathcal{M}}
\newcommand{\FF}{\mathcal{F}}
\newcommand{\GG}{\mathcal{G}}
\newcommand{\one}{\mathbb{1}}
\renewcommand{\E}{\mathbb{E}}
\newcommand{\dyn}{\text{dyn}}
\newcommand{\goal}{\text{goal}}
\newcommand{\algo}{CODA\xspace}
\newcommand{\cgo}{CGO\xspace}
\title{How to Solve Contextual Goal-Oriented Problems with Offline Datasets?}
\author{Ying Fan$^{1}$, Jingling Li$^{2}$, Adith Swaminathan$^{3}$, Aditya Modi$^{3}$, Ching-An Cheng$^{3}$\\
$^{1}$University of Wisconsin-Madison  \hspace{4pt}
  $^{2}$ByteDance Research
\hspace{4pt} $^{3} \vspace{.1em}  $Microsoft Research \hspace{4pt} \\
}
\begin{document}

\maketitle

\begin{abstract}

We present a novel method, Contextual goal-Oriented Data Augmentation (CODA), which uses commonly available unlabeled trajectories and context-goal pairs to solve Contextual Goal-Oriented (CGO) problems. By carefully constructing an action-augmented MDP that is equivalent to the original MDP, CODA creates a fully labeled transition dataset under training contexts without additional approximation error. We conduct a novel theoretical analysis to demonstrate CODA's capability to solve CGO problems in the offline data setup. Empirical results also showcase the effectiveness of CODA, which outperforms other baseline methods across various context-goal relationships of CGO problem. This approach offers a promising direction to solving CGO problems using offline datasets.  

\vspace{-4mm}

\end{abstract}

\section{Introduction}
\label{sec:introduction}

Goal-oriented problems~\citep{kaelbling1993learning} are an important class of sequential decision-making problems with widespread applications, ranging from robotics~\citep{yu2023using}, game-playing~\citep{hessel2019multi}, to logistics~\citep{mirowski2018learning}. 
In particular, many real-world goal oriented problems are \emph{contextual}, where the objective of the agent is to reach a goal set communicated by a context. 
For example, consider instructing a truck operator with the context ``Deliver goods to a warehouse in the Bay area''. Given such a context and an initial state, it is acceptable to reach any feasible goal (a reachable warehouse location) in the goal set (warehouse locations including non-reachable ones). We call such problems \emph{Contextual Goal-Oriented} (\cgo) problems, which form an important special case of contextual Markov Decision Process (MDP)~\citep{hallak2015contextual}.




\cgo is a practical setup that includes goal-conditioned reinforcement learning (GCRL) as a special case (the context in GCRL is just the target goal), but in general contexts in \cgo problem can be more abstract (like high-level task instructions in the above example) and the relationship between contexts and goals are not known beforehand. \cgo problems are challenging because 1) the rewards are sparse as in GCRL and 2) the contexts can be difficult to map into {feasible} goals. Nevertheless, \cgo problem has an important structure that the transition dynamics (e.g., navigating a city road network) are independent of the contexts that specify tasks. Therefore, efficient multitask learning can be achieved by sharing dynamics data across tasks.

In this paper, we study solving for \cgo problems in an offline setup. 
We suppose access to two datasets --- an (unlabeled) \emph{dynamics} dataset of trajectories, and a (labeled) \emph{context-goal} dataset containing pairs of contexts and goal examples. 
Such datasets are commonly available in practice.
The typical contextual datasets for imitation learning (IL) (which has pairs of contexts and expert trajectories) is one example, since we can convert the contextual IL data into dynamics data and context-goal pairs. 
Generally, this setup also covers scenarios where expert trajectories are \emph{not} accessible (e.g., because of diverse contexts and initial states), since it does not assume goal examples to appear in the trajectories or the contexts are readily paired with transitions in expert trajectories.
Instead, it allows the dynamics datasets and the context-goal datasets to be independently collected.
For example, in robotics, task-agnostic play data can be obtained at scale~\citep{lynch2020learning,walke2023bridgedata} in an unsupervised manner whereas instruction datasets  (e.g.,~\citep{misra2016tell}) can provide context-goal pairs. 
In navigation, self-driving car trajectories (e.g.,~\citep{wilson2021argoverse,Sun_2020_CVPR}) also allow us to learn dynamics whereas landmarks datasets (e.g.~\citep{mirowski2018learning,NEURIPS2021_e02a35b1}) provide context-goal pairs. 


While offline CGO problems as described above are common in practical scenarios, to our knowledge, no algorithms have been specifically designed to solve such problems and CGO has not been formally studied yet. Some baseline methods could be easily conceptualized from the literature, but their drawbacks are equally apparent. One intuitive approach is to extend the goal prediction methods in GCRL~\citep{nair2018visual,nair2020contextual}:  given a test context, we can predict a goal and navigate to it using a goal-conditioned policy, where the goal prediction model can be learned from the context-goal dataset and the goal-conditioned policy can be learned from the trajectory dataset. However, the predicted goal might not always be feasible given the initial state since our context-goal dataset is not necessarily paired with transitions. Alternatively, the offline problem could be formulated as a special case of missing label problems~\citep{yu2021conservative} 
and we can learn a context-conditioned reward model to label the unsupervised transitions when paired with contexts as in~\cite{hu2023provable}. However, this approach ignores the goal-oriented nature of the problem and the fact that here only positive data (i.e. goal examples) are available for reward learning, which poses extra significant challenges. 
\cgo can be framed as an offline reinforcement learning (RL) problem with missing labels; However, existing algorithms~\citep{yu2022leverage,hu2023provable,li2023mahalo} in family assume access to both positive data (contexts-goal pairs) and negative data (contexts and non-goal examples), whereas only positive data are available here.

In this work, we present the first precise formalization of the \cgo setting, and propose a novel Contextual goal-Oriented Data Augmentation (CODA) technique that can provably solve \cgo problems subject to natural assumptions on the datasets' quality. The core idea is to \textbf{convert the context-goal dataset and the unsupervised dynamics dataset to a fully labeled transition dataset of an equivalent action-augmented MDP}, which circumvents the drawbacks in other baseline methods by fully making use of the CGO structure of the problem. We give a high-level illustration of this idea in Figure~\ref{fig:method}. In Figure~\ref{fig:method}, given a randomly sampled context-goal pair from the context-goal dataset, we create fictitious transitions from the corresponding goal example to a fictitious terminal state with a fictitious action and reward 1, and pair with the corresponding context. Also, we label all unsupervised transitions with reward 0 and non-terminal, and pair with the contexts randomly. Combining the two, we then have a fully labeled dataset (of an action-augmented contextual MDP, which this data augmentation and relabeling process effectively creates), making it possible to propagate supervision signals from the context-goal dataset to unsupervised transitions via the Bellman equation. We can then apply any offline RL algorithm based on Bellman updates like CQL~\citep{kumar2020conservative}, IQL~\citep{kostrikov2021offline}, PSPI~\citep{xie2021bellman}, ATAC~\citep{cheng2022adversarially} etc. 
In comparison with the baseline methods discussed earlier, our method naturally circumvents their intrinsic challenges: 1) \algo directly learns context-conditioned policy and avoids the need to predict goals; 2) \algo effectively uses a fully labeled dataset, avoiding the need to learn a reward model and extra costs from inaccurate reward modeling.
\begin{figure}
    \centering
    \includegraphics[width=0.8\linewidth]{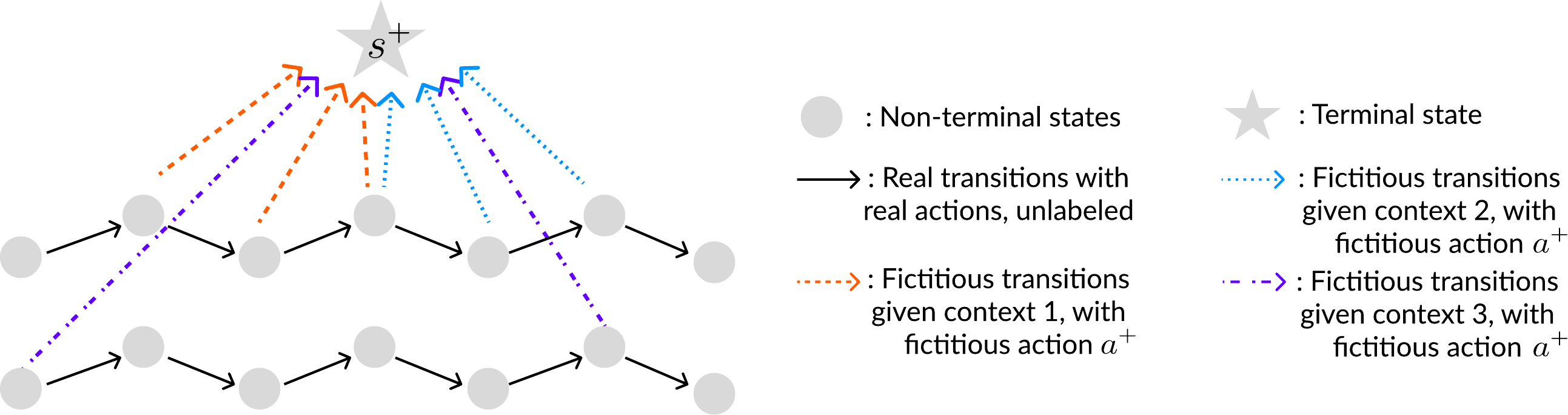}
    \caption{Illustration of \algo: We create fictitious transitions from goal examples to terminal states under the given context in the action-augmented MDP with reward 1, which enables the supervised signal to propagate back to unsupervised transitions via Bellman equation.\protect\footnotemark  
}
    \label{fig:method}
\vspace{-4mm}
\end{figure}

\section{Related Work}
\label{sec:related}
\paragraph{Offline RL.} 
Offline RL methods have proven to be effective in goal-oriented problems as it also allows learning a common set of sub-goals/skills \citep{chebotar2021actionable,ma2022offline,yang2023essential}. A variety of approaches are used to mitigate the distribution shift between the collected datasets and the trajectories likely to be generated by learned policies: 1) constrain target policies to be close to the dataset distribution \citep{fujimoto2019off,wu2019behavior,fujimoto2021minimalist}, 2) incorporate value pessimism for low-coverage or Out-Of-Distribution states and actions \citep{kumar2020conservative,yu2020mopo,jin2021pessimism} and 3) adversarial training via a two-player game \citep{xie2021bellman,cheng2022adversarially}.


\textbf{Offline RL with unlabeled data.}
Our CGO setting is a special case of offline RL with unlabeled data, or more broadly the offline policy learning from observations paradigm~\citep{li2023mahalo}: There is only a subset of the offline data labeled with rewards (in our setting, that is the contexts dataset, as we don't know which samples in the dynamics dataset are goals.). However, the MAHALO scheme in~\citep{li2023mahalo} is much more general than necessary for CGO problems, and we show instead that our \algo scheme has better theoretical guarantees than MAHALO in Section~\ref{sec:analysis}. 
In our experiments, we compare CGO with several offline RL algorithms designed for unlabeled data: UDS \citep{yu2022leverage} where unlabeled data is assigned zero rewards and PDS \citep{hu2023provable} where a pessimistic reward function is learned from a labeled dataset.

\textbf{Goal-conditioned RL (GCRL).}
GCRL is a special case of our CGO setting, which has been extensively studied since ~\cite{kaelbling1993learning}. 
There are two critical aspects of GCRL: 1) data relabeling to make better use of available data and 2) learning reusable skills to solve long-horizon problems by chaining sub-goals or skills. On the one hand, hindsight relabeling methods \citep{andrychowicz2017hindsight,li2020generalized} are effective by reusing visited states in the trajectories as successful goal examples. For 2), hierarchical methods for determining sub-goals, and training goal reaching policies have been effective in long-horizon problems \citep{nair2019hierarchical,singh2020cog,chebotar2021actionable}. 
Another key objective of GCRL is goal generalization. Popular strategies include universal value function approximators \citep{schaul2015universal}, unsupervised representation learning \citep{nair2018visual,nair2019hierarchical,han2021learning}, and pessimism-induced generalization in offline GCRL formulations \citep{yang2023essential}. Our CGO framing enables both data reuse and goal generalization, by using contextual representations and a reduction to offline RL to combine dynamics and context-goal datasets.

\paragraph{Data-sharing in RL} Sharing information across multiple tasks is a promising approach to accelerate learning and to identify transferable features across tasks. In RL, both multi-task and transfer learning settings have been studied under varying assumption on the shared properties and structures of different tasks \citep{zhu2023transfer,teh2017distral,barreto2017successor,D'Eramo2020Sharing}. For data sharing in CGO, we adopt the contextual MDP formulation \citep{hallak2015contextual,sodhani2021multi}, which enables knowledge transfer via high-level contextual cues. Prior work on offline RL has also shown the utility of sharing data across tasks: hindsight relabeling and manual skill grouping \citep{kalashnikov2021mt}, inverse RL \citep{li2020generalized}, sharing Q-value estimates \citep{yu2021conservative,singh2020cog} and reward labeling \citep{yu2022leverage,hu2023provable}. 

\vspace{-1mm}
\section{Preliminaries}
In this section, we introduce the setup of CGO problems, infinite-horizon formulation for CGO, and the offline learning setup with basic assumptions for our offline dataset.
\label{sec:background}
\vspace{-1mm}

\paragraph{CGO Setup}\label{sec:bg_cgo}
A Contextual Goal-Oriented (CGO) problem describes a multi-task goal-oriented setting with a \emph{shared} transition kernel.
We consider a Markovian CGO problem, defined by the tuple $\MM = (\SS,\AA, P, R, \gamma, \CC, d_0)$, where $\SS$ is the state space, $\AA$ is the action space, $P:\SS\times\AA\to\Delta(\SS)$ is the transition kernel, $R:\SS\times\CC\to\{0,1\}$ is the sparse reward function,  $\gamma \in [0, 1)$ is the discount factor, $\CC$ is the context space,
 and $\Delta$ denotes the space of distributions.

Each context $c\in\CC$ specifies a goal-reaching task with a goal set $G_c \subset \SS$, and reaching any goal in the goal set $G_c$ is regarded as successful, inducing the
reward function  $R(s,c) = \one(s \in G_c) $. 
An episode of a CGO problem starts from an initial state $s_0$ and a context $c$ sampled from $d_0(s_0,c)$, and terminates when the agent reaches the goal set $G_c$. 
$c$ does not change during the transition; only $s_t$ changes according to $P(s'|s,a)$ and the transition kernel is context-independent. 

%

\paragraph{Infinite-horizon Formulation for CGO setup}
\label{sec:formulation}

A fictitious zero-reward absorbing state $s^+ $$\notin \mathcal{S}$ can translate termination after reaching the goal to an infinite horizon formulation: \emph{whenever the agent enters $G_c$ it transits to $s^+$ in the next step (for all actions) and stays at $s^+$ forever}. This is a standard technique to convert a goal-reaching problem (with a random problem horizon) to an infinite horizon problem. This translation does \emph{not} change the problem, but allows cleaner analyses. We adopt this formulation in the following. 

We give details of this infinite-horizon conversion in the following.
First, we extend the reward and the dynamics: 
Let $\bar{\SS} = \SS \bigcup \{ s^+ \}$, $\XX \coloneq \SS \times \mathcal{C}$, and $\bar\XX \coloneq \bar{\SS} \times \mathcal{C}$. Define $\XX^+ \coloneq \{x : x=(s,c), s=s^+, c\in\CC\}$.
With abuse of notation, we define the reward and transition on $\bar\XX$ as
    $R(x) = \one(s \in G_c)$ where $x = (s,c)$. The transition kernel 
    $P(x'|x,a) \coloneq P(s'|s,c,a) \one(c'=c)$,
where 
\begin{align*}
P(s'|s,c,a) =
    \begin{cases}
        \mathbb{1}(s'=s^+) &\text{\text{if} $s\in G_c$ \text{or} $s = s^+$,}\\
        P(s'|s,a) &\text{otherwise.}
    \end{cases}
\end{align*}
Given a policy $\pi: \XX \to\Delta(\AA)$,
the state-action value function (i.e., Q function) is 
$Q^\pi(x,a) \coloneqq \E_{\pi,P} \left[ \sum_{t=0}^\infty \gamma^t R(x) | x_0 = x, a_0 = a  \right].$    
%
%
$V^\pi(x) \coloneqq Q^\pi(x,\pi)$ is the value function given $\pi$, where  $Q(x,\pi) \coloneqq \E_{a\sim \pi}[Q(x,a)] \in [0,1]$.
The return $J(\pi) = V^\pi(d_0) = Q^\pi(d_0,\pi)$.
$\pi^*$ is the optimal policy that maximized  $J(\pi)$ and $Q^* \coloneqq Q^{\pi^*}$, $V^* \coloneqq V^{\pi^*}$. Let $G$ represent the goal set on $\XX$, that is, $G \coloneq \{ x \in \XX : x =(s,c), s \in G_c \}$.
%




\vspace{-2mm}
\paragraph{Offline Learning for CGO}
\label{sec:offline_setup}
\vspace{-1mm}

We aim to solve CGO problems using offline datasets without additional online environment interactions, namely, by offline RL.
We identify two types of data that are commonly available: 
$D_\dyn \coloneq \{ (s,a,s') \} $ is an \emph{unsupervised} dynamics dataset of agent trajectories collected from $P(s'|s,a)$, 
and $D_\goal \coloneq \{ (c,s) : s\in G_c \}$ is a \emph{supervised} dataset of context-goal pairs, which can be easier to collect than expert trajectories. 
We suppose that there are two distributions 
$\mu_{\dyn}(s,a,s') $ and 
$\mu_{\goal} (s,c)$, where 
$\mu_\dyn(s'|s,a) = P(s'|s,a)$ and $\mu_{\goal}$ has support within $G_c$, i.e., $\mu_{\goal}(s|c)>0 \Rightarrow  s \in G_c$. 
We assume that $D_{\dyn}$ and $D_{\goal}$ are i.i.d. samples drawn from the distributions $\mu_\dyn$ and $\mu_\goal$, i.e., 
\begin{align*}
    D_{\dyn} = \{  (s_i,a_i,s_i') \sim \mu_{\dyn} \}, 
    D_{\goal} = \{  (s_j,c_j) \sim \mu_{\goal} \}. 
\end{align*}
Notice that we do not assume the goal states in $D_{\goal}$ to be in $D_{\dyn}$, thus we cannot always naively pair transitions in $D_{\dyn}$ with contexts in $D_{\goal}$ and assign them with reward $1$.
To our knowledge, no existing algorithm can provably learn near-optimal $\pi$ using only the positive $D_{\goal}$ data (i.e., without non-goal examples) when combined with $D_{\dyn}$ data.

\vspace{-1mm}
\section{Contextual Goal-Oriented Data Augmentation (\algo)}
\label{sec:approach}
\vspace{-0.5mm}
The key idea of \algo is the construction of an \emph{action}-augmented MDP with which the dynamics and context-goal datasets can be combined into a fully labeled offline RL dataset. 
In the following, we first describe this action-augmented MDP
(Section~\ref{sec:augmented_mdp}) and show that it preserves the optimal policies of the original MDP (Appendix~\ref{app:augmented_mdp}). Then we outline a practical algorithm to convert the two datasets of an offline CGO problem into a dataset for this augmented MDP (Section~\ref{sec:method_des}) such that any generic offline RL algorithm based on Bellman equation can be used as a solver.

\subsection{Action-Augmented MDP}
\label{sec:augmented_mdp}


We propose an action-augmented MDP (shown in Figure~\ref{fig:method}), 
which augments the action space of the contextual MDP in Section~\ref{sec:formulation} with \emph{a fictitious action $a^+\notin \mathcal{A}$}.

Let $\bar{\AA} = \AA \bigcup \{ a^+ \}$. We define the reward of this action-augmented MDP to be \emph{action-dependent}: for $x=(s,c)\in \XX$, $\bar{R}(x,a) \coloneq \one( s\in G_c) \one( a= a^+),$
which means the reward is 1 only if $a^+$ is taken in the goal set, otherwise 0.

We also extend the transition upon taking action $a^+$: $\bar{P}(x'|x,a^+) \coloneq \mathbb{1}(s'=s^+)$, and maintain the transition with real actions:
$\bar{P}(x'|x,a) \coloneq P(s'|s,a) \one(c'=c),$
which means whenever taking $a^+$, the agent would always transit to $s^+$, and the transition remains the same as in the original MDP given real actions. Further, we implement $s^+$ as $\texttt{terminal}=\text{True}$.

We define this augmented MDP as $\overline{\MM} \coloneq (\bar\XX, \bar\AA, \bar{R}, \bar{P}, \gamma)$. 

\paragraph{Policy conversion.} For a policy $\pi: \XX \to \Delta(\AA)$ in the original MDP, define its extension on $\overline{\MM}$:
\begin{align} 
    \bar{\pi}(a|x) = 
    \begin{cases}
        \pi(a|x), & \text{$x\notin G$,}\\
        a^+, & \text{otherwise}.
    \end{cases}
\end{align}

\vspace{-2mm}
\paragraph{Regret equivalence.} An observation that comes with the construction is that 
if a policy is optimal in the original MDP, we can easily use the extension above to create an optimal policy in the augmented one. If a policy is optimal in the augmented MDP, it must take $a^+$ only when $x\in G$ (otherwise the return is lower, due to entering $s^+$ too early), thus we can revert this optimal policy of the augmented MDP  to find an optimal policy in the original MDP without changing its behavior and performance. We stated this property below; details can be found as Lemma~\ref{lm:Q equivalence} in Appendix~\ref{app:augmented_mdp}.

\begin{theorem}[Informal] \label{th:regret}
    The regret of a policy extended to the augmented MDP is equal to the regret of the policy in the original MDP, and any policy defined in the augmented MDP can be converted into that in the original MDP without increasing the regret. Thus, solving the augmented MDP can yield correspondingly optimal policies for the original problem. 
\end{theorem}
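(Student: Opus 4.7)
My plan is to decompose the proof into three steps: (i) for any $\pi$ on $\MM$ and its extension $\bar{\pi}$ on $\overline{\MM}$, show $V^\pi(x) = \bar{V}^{\bar{\pi}}(x)$ for all $x \in \XX$; (ii) show $V^*(x) = \bar{V}^*(x)$; and (iii) for any $\bar{\pi}$ on $\overline{\MM}$, construct a $\pi$ on $\MM$ with $V^\pi(x) \geq \bar{V}^{\bar{\pi}}(x)$. Taking expectations over $d_0$ and combining (i) with (ii) gives the first claim $J^* - J(\pi) = \bar{J}^* - \bar{J}(\bar{\pi})$, while (ii) and (iii) together give the second claim that conversion cannot increase regret.

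For step (i), I would couple the two trajectories starting from $x_0$. Before the state hits $G$, both processes sample actions in $\AA$ from $\pi(\cdot \mid x_t)$ and transition by $P$, so the hitting time $\tau^G$ is equidistributed under both. At time $\tau^G$, $\MM$ awards $R(x_{\tau^G}) = 1$ and then absorbs into $s^+$; under $\bar{\pi}$ the fictitious action $a^+$ is taken deterministically at $x_{\tau^G} \in G$, awarding $\bar{R}(x_{\tau^G}, a^+) = 1$ and transitioning to $s^+$. Both processes accumulate exactly $\gamma^{\tau^G}$, so $V^\pi(x_0) = \E[\gamma^{\tau^G}] = \bar{V}^{\bar{\pi}}(x_0)$.

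For step (ii), the direction $V^*(x) \leq \bar{V}^*(x)$ follows from step (i) applied to $\pi^*$. For the reverse, I would use the Bellman optimality equation on $\overline{\MM}$ to argue that some optimal $\bar{\pi}^*$ is the extension of a policy on $\MM$. At $x \in G$, $a^+$ yields immediate reward $1$, whereas any real action yields $0 + \gamma \bar{V}^*((s^+, c)) = 0$, so $a^+$ strictly dominates. At $x \notin G$, $a^+$ yields value $0$, which is weakly dominated by any real action whose $Q$-value is nonnegative; replacing $a^+$ by a maximizing real action at non-goal states produces an extension-form optimal policy, and step (i) then gives $\bar{V}^* = V^\pi \leq V^*$.

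Step (iii) is where I expect the main technical work. I would define $\pi(a \mid x) = \bar{\pi}(a \mid x) / (1 - \bar{\pi}(a^+ \mid x))$ for $a \in \AA$ whenever $\bar{\pi}(a^+ \mid x) < 1$ (arbitrary otherwise), and couple the processes so that, conditional on not having taken $a^+$ yet, the state sequence under $\bar{\pi}$ in $\overline{\MM}$ is a copy of the state sequence under $\pi$ in $\MM$. Letting $T$ be the first time $\bar{\pi}$ picks $a^+$ and $\tau^G$ the first entry into $G$, the return in $\overline{\MM}$ equals $\gamma^T \one(x_T \in G)$. Since the infinite-horizon dynamics send $G$ directly to $s^+ \notin G_c$, the event $\{x_T \in G\}$ forces $T = \tau^G$, so the return is bounded by $\gamma^{\tau^G}$, which is exactly the reward $\pi$ collects in $\MM$. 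Taking expectations yields $\bar{V}^{\bar{\pi}}(x) \leq V^\pi(x)$. The subtle part will be making this coupling rigorous via a thinning argument on the stopping time $T$, and handling degenerate regions where $\bar{\pi}(a^+ \mid x) = 1$ so that the reweighting is undefined.
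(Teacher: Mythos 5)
Your decomposition is essentially the paper's: its Proposition~\ref{lm:Q equivalence} proves exactly your (i) (extension preserves the value, $V^\pi = \bar{V}^{\bar\pi}$, via Lemma~\ref{lm:equivalence}) and your (iii) (a restriction of any augmented policy weakly improves the value), and then (ii) and the regret identity fall out by combining the two directions, so your separate Bellman-optimality argument for (ii) is correct but redundant. The differences are in technique and are minor: for (iii) the paper redistributes the mass on $a^+$ uniformly over $\AA$ rather than renormalizing, and it avoids your coupling/thinning construction entirely by a one-line observation --- taking $a^+$ at $x \notin G$ yields continuation value $0$ in $\overline{\MM}$ while every real action yields a nonnegative value in $\MM$ --- followed by a telescoping (performance-difference) argument; this sidesteps the measure-theoretic care your stopping-time coupling would require, including the degenerate case $\bar{\pi}(a^+ \mid x) = 1$ that you flag. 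One intermediate claim of yours is imprecise: in $\overline{\MM}$ the real-action transitions are $\bar{P}(x'|x,a) = P(s'|s,a)\one(c'=c)$ \emph{without} the absorb-at-goal modification, so a policy can pass through $G$ without taking $a^+$ and re-enter it later; hence $\{x_T \in G\}$ does not force $T = \tau^G$. It does force $T \ge \tau^G$, which is all your bound $\gamma^T \one(x_T \in G) \le \gamma^{\tau^G}$ needs, so the conclusion survives, but the justification as written appeals to the wrong dynamics.
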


\begin{remark}
The benefit of using the equivalent $\overline{\MM}$ is to avoid missing labels: given contexts in $D_{\goal}$, the rewards in $\overline{\MM}$ are known from our dataset setup in Section~\ref{sec:offline_setup}, whereas the rewards of the original MDP $\MM$ are missing.
\end{remark}

\vspace{-0.5em}
\subsection{Method} 
\label{sec:method_des}
\vspace{-0.25em}

\algo is designed based on the observation on regret relationship in \cref{th:regret}: 
As described in Figure~\ref{fig:method}, given a context-goal pair $(s,c)$ from the dataset $D_{\text{goal}}$, we  create  a fictitious transition from $s$ to $s^+$ with action $a^+$, reward $1$ under context $c$. We also label all unsupervised transitions in the dataset $D_{\text{dyn}}$ with the original action and reward $0$ under $c$. In this way, we can have a fully labeled transition dataset in the augmented MDP given any $c$ from the context-goal dataset and then run offline algorithms (based on the Bellman equation) on this dataset.
This \algo algorithm is formally stated in Algorithm~\ref{alg:sds}. It takes two datasets $D_{\text{dyn}}$ and $D_{\text{goal}}$ as input, and produces a labeled transition dataset $\bar{D}_\dyn \bigcup \bar{D}_\goal$ that is suitable for use by any offline RL algorithm 
based on Bellman equation
like CQL~\citep{kumar2020conservative}, IQL~\citep{kostrikov2021offline}, PSPI~\citep{xie2021bellman}, ATAC~\citep{cheng2022adversarially}, etc.

\vspace{-3mm}
\paragraph{Interpretation.}Why would our action augmentation make sense? We consider dynamic programming on the created dataset. Imagine we have a fictious transition from $s$ to $s^+$ with $a^+$ under context $c$. When we calculate $V^*(x)$ via Bellman equation where $x = (s,c)$, it will choose the action with the highest $Q^*$ value in the augmented action space. The fictitious action would be the optimal action since it induces the highest $Q^*$ value\footnote{For all $a \ne a^+$ $Q^*$, $Q^*(x,a) < Q^*(x,a^+)$ when $\gamma<1$. If $\gamma=1$, the agent might also learn to travel to other goal states starting from $x$ with some probability, which is also acceptable in CGO.}, meaning $s$ is already in $G_c$, and no further action is needed. Then the value of $V^*(x)$ would \emph{naturally propagate to some state $x_{\text{prev}}=(s_{\text{prev}},c)$ via Bellman equation if $x$ is reachable starting from $x_{\text{prev}}$ } as shown in Figure~\ref{fig:method}, so $x_{\text{prev}}$ would still have meaningful values even with the intermediate reward 0. For $x$ to be reachable starting from $x_{\text{prev}}$, we do not require the exact $s$ to appear in the trajectory dataset due to the generalization ability of the value function (details in Section~\ref{sec:analysis}). 
For non-goal states, such fictitious action never appears in the dataset, thus it would not be the optimal action in Bellman equation in pessimistic offline RL. For example, the fictitious action never appears as the candidate in argmax in algorithms like IQL, and would be punished as OOD actions in algorithms like CQL.
We will prove this insight formally below in \cref{sec:analysis}.

\vspace{-2mm}

\begin{algorithm}[ht]
  \caption{\algo for CGO}
\textbf{Input}: Dynamics  dataset $D_{\text{dyn}}$, context-goal dataset $D_{\text{goal}}$
\begin{algorithmic}
\FOR{each sample $(s,c) \sim D_{\text{goal}}$}
\STATE Create transition\footnotemark  $(x, a^+, 1, x^+)$, where $x=(s,c)$ and $x^+=(s^+,c)$, add it to $\bar{D}_\goal$
\ENDFOR
\FOR{each $(s,a,s') \sim D_{\text{dyn}}$}
\FOR{each $(\cdot,c)\sim \mathcal{D}_{\text{goal}}$}
\STATE Create transition $(x, a^+, 0, x')$, where $x=(s,c)$ and $x'=(s',c)$, add it to $\bar{D}_\dyn$
\ENDFOR
\ENDFOR
\end{algorithmic}
\label{alg:sds}
\textbf{Output}: $\bar{D}_\dyn$ and  $\bar{D}_\goal$
\end{algorithm}

\begin{remark}
    We do not need to learn to perform $a^+$ for the policy in practice since it is only for fictitious transitions which is already inside the goal set in the original MDP. (From the proof of Lemma~\ref{lm:Q equivalence}, we know taking $a^+$ is always strictly worse than taking actions in the original action space $\AA$.)
Therefore, we simply use the original action space for policy modeling and only use the fictitious transitions in value learning.
We note that in practice \cref{alg:sds} can be implemented as a pre-processing step in the minibatch sampling of a deep offline RL algorithm (as opposed to computing the full $\bar{D}_\dyn$ and $\bar{D}_\goal$ once before learning).
\end{remark}

\vspace{-2mm}






\vspace{-0.25em}
\section{CGO is Learnable with Positive Data Only}
\label{sec:analysis}
\vspace{-0.25em}
In Section~\ref{sec:approach}, we show that a fully labeled dataset can be created in the augmented MDP without inducing extra approximation errors. But we still have no access to negative data, i.e., context and non-goal pairs. A natural question arises: \emph{Can we learn to solve CGO problems with positive data only? What conditions are needed for CGO to be learnable with offline datasets? }

We show in theory that we do \emph{not} need negative data to solve CGO problems by conducting a formal analysis for our method, instantiated with PSPI \citep{xie2021bellman} as an example of the base algorithm. We present the detailed algorithm CODA+PSPI in Appendix~\ref{app:sds+pspi}. This algorithm uses function classes $\FF:\SS\times\AA\to\R$ and $\GG:\SS\to\R$ to model value functions and optimizes the policy given a policy class $\Pi$ based on absolute pessimism defined on initial states. 

We present our assumptions and the main theoretical result as follows. 
\begin{assumption}[Realizability] \label{as:realizability}
We assume for any $ \pi \in \Pi$, $Q^\pi\in\FF$ and $R \in\GG$, where $\FF, \GG$ are the function classes for action-value and reward respectively.
\end{assumption}
\begin{assumption}[Completeness] \label{as:completeness}
We assume: For any $f\in\FF$, $g\in\GG$ and $\pi \in \Pi$, $\max(g(x),f(x,\pi)) \in \FF$; And for any $f\in\FF$, $\pi \in \Pi$, $\TT^\pi f (x,a) \in \FF$, where $\TT^\pi$ is a zero-reward Bellman backup operator with respect to $P(s'|s,a)$:
$\TT^\pi f (x,a) \coloneq \gamma \E_{x'\sim  P(s'|s,a) \one(c'=c)} [ f(x',\pi) ]$.
\end{assumption}
These two assumptions mean that the function classes $\FF$ and $\GG$ are expressive enough, which are standard assumptions in offline RL based on Bellman equation~\citep{xie2021bellman}. For deriving our main result, we define the coverage assumption needed below.
\begin{definition}\label{def:concentrability}
\label{as:concentrability}
We define the generalized concentrability coefficients:
\begin{small}
\begin{align}
\label{as:coverage}
    \mathfrak{C}_{\dyn}(\pi) \coloneq \max_{f,f'\in\FF} \frac{\| f - \TT^\pi f' \|_{\rho^\pi_{\notin G}}^2}{ \| f - \TT^\pi f' \|_{\mu_\dyn}^2} \qquad \text{and} \qquad \mathfrak{C}_{\goal}(\pi) \coloneq \max_{g\in\GG} \frac{\| g-R \|_{\rho^\pi_{\in G}}^2}{ \| g-R \|_{\mu_\goal}^2}
\end{align}
\end{small}%
where $\|h \|_\mu^2 \coloneq \E_{x\sim\mu}[ h(x)^2]$, $\rho_{\notin G}^\pi(x,a) = \E_{\pi,P} \left[ \sum_{t=0}^{T-1} \gamma^t \one(x_t=x, a_t=a)  \right]$, $\rho_{\in G}^\pi(x)=\E_{\pi,P} \left[ \gamma^T \one(x_T=x)  \right]$, and $T$ is the first time the agent enters the goal set. 
\end{definition}

Concentrability coefficients is a generalization notion of density ratio: It describes how much the (unnormalized) distribution in the numerator is ``covered'' by that in the denominator in terms of the generalization ability of function approximators~\citep{xie2021bellman}.
If $\mathfrak{C}_{\dyn}(\pi),\mathfrak{C}_{\goal}(\pi)$ are finite given $\mu_\goal, \mu_\dyn, \FF, \GG$ and $\pi$, then we say $\pi$ is covered by the data distributions, and conceptually offline RL can learn a policy to be no worse than $\pi$. 

We now state our theoretical result, which is proven by a careful reformulation of the Bellman equation of the action-augmented MDP, and construct augmented value function and policy classes in the analysis using the CGO structures (see Appendix~\ref{app:theory}).

\begin{theorem}\label{th:main theorm pspi (informal)}
Let $\pi^\dagger$ denote the learned policy of \algo + PSPI with datasets $D_\dyn$ and $D_\goal$, using value function classes $\FF=\{\XX\times\AA\to[0,1]\}$ and $\GG=\{\XX\to[0,1]\}$. 
Under Assumption~\ref{as:realizability},~\ref{as:completeness} and~\ref{def:concentrability}, with probability $1-\delta$, it holds, for any $\pi\in\Pi$,
\begin{align*}
    J(\pi) - J(\pi^\dagger)
    &\lesssim  \mathfrak{C}_\dyn(\pi) \left( \sqrt{\frac{\log (|\FF||\GG||\Pi|/\delta)} {|D_\dyn|}} +  \sqrt{\frac{\log(|\FF||\GG||\Pi|/\delta)} {|D_\goal|}}\right) +  \mathfrak{C}_\goal(\pi) \sqrt{\frac{\log (|\GG|/\delta) }{|D_\goal|}}
\end{align*}
where 
$\mathfrak{C}_\dyn(\pi)$ and $\mathfrak{C}_\goal(\pi) $ are concentrability coefficients\footnote{We state a more general result for non-finite function classes in Theorem~\ref{thm:main_theorem_app} in the appendix}.
\end{theorem}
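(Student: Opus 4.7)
The plan is to reduce the claim to a PSPI-style analysis on the action-augmented MDP $\overline{\MM}$ and then exploit the CGO structure to split the Bellman residual into two pieces, one per dataset. First I would invoke the regret equivalence (the formalization of \cref{th:regret}, i.e., Lemma~\ref{lm:Q equivalence}) to replace $J(\pi) - J(\pi^\dagger)$ by the corresponding gap $\bar J(\bar\pi) - \bar J(\bar\pi^\dagger)$ in $\overline{\MM}$. To apply PSPI on $\overline{\MM}$, I would construct an augmented hypothesis class $\bar\FF \subset \{\bar\XX\times\bar\AA\to[0,1]\}$ by gluing $f\in\FF$ (for the ``real'' actions $a\in\AA$, modeling $Q^\pi$) with $g\in\GG$ (for the fictitious action $a^+$, modeling the action-value $\bar Q^{\bar\pi}(\cdot, a^+)=R$); absorption at $x^+$ fixes the value there to $0$. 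Completeness of $\bar\FF$ under $\bar\TT^{\bar\pi}$ follows from Assumption~\ref{as:completeness}: for $a\in\AA$ the backup is $\gamma\,\mathbb{E}_{x'\sim P\cdot\mathbb{1}(c'=c)}[\max(g(x'),f(x',\pi))]$, which lives in $\FF$; for $a^+$ the backup is exactly the reward $R\in\GG$.

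Next I would run the standard PSPI decomposition on $\overline{\MM}$ (see Xie et al., 2021): write
\begin{align*}
J(\pi) - J(\pi^\dagger) \;\le\; \underbrace{\bar J(\bar\pi) - \mathbb{E}_{d_0}[\bar f^\dagger(x_0,\bar\pi)]}_{\text{on-policy Bellman residual of }\bar\pi} \;+\; \underbrace{\mathbb{E}_{d_0}[\bar f^\dagger(x_0,\bar\pi)] - \bar J(\bar\pi^\dagger)}_{\le 0 \text{ by pessimism + realizability}},
\end{align*}
and unroll the first term by the performance-difference lemma to obtain $\sum_{t\ge 0}\gamma^t\,\mathbb{E}_{(x_t,a_t)\sim d^{\bar\pi}_t}[(\bar\TT^{\bar\pi}\bar f^\dagger - \bar f^\dagger)(x_t,a_t)]$. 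The crucial observation is that under $\bar\pi$, the trajectory visits $(x,a)$ with $a\in\AA$ exactly up to the hitting time $T$ of $G$ and then visits a single $(x_T,a^+)$; thus the sum splits cleanly as
\begin{align*}
\mathbb{E}_{(x,a)\sim \rho^\pi_{\notin G}}\bigl[(\bar f^\dagger - \bar\TT^\pi \bar f^\dagger)(x,a)\bigr] \;+\; \mathbb{E}_{x\sim \rho^\pi_{\in G}}\bigl[\bar f^\dagger(x,a^+) - R(x)\bigr],
\end{align*}
where on the first summand $\bar\TT^{\bar\pi}$ restricted to real actions equals the zero-reward backup $\TT^\pi$ of Assumption~\ref{as:completeness} (the goal-reward is carried by $a^+$, not by $a\in\AA$), and on the second summand the backup is exactly $R$.

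Then I would pass from on-policy to offline distributions using the two coefficients in \cref{def:concentrability}: the first expectation is bounded by $\sqrt{\mathfrak{C}_\dyn(\pi)}\cdot\|\bar f^\dagger-\TT^\pi \bar f^\dagger\|_{\mu_\dyn}$, the second by $\sqrt{\mathfrak{C}_\goal(\pi)}\cdot\|\bar f^\dagger(\cdot,a^+)-R\|_{\mu_\goal}$. The minimax/pessimistic PSPI objective, combined with realizability of $Q^\pi$ (in $\FF$) and $R$ (in $\GG$), controls both of these empirical Bellman errors on the respective datasets. The sample-size dependence then arises from a standard uniform Bernstein/MDS concentration argument over $\FF,\GG,\Pi$: the transition target $\bar\TT^\pi f'$ is estimated from $D_\dyn$ but the context $c$ in $(x,a,x')=((s,c),a,(s',c))$ comes from $D_\goal$ through the pairing in Algorithm~\ref{alg:sds}, so a Hoeffding-union bound yields the $\sqrt{\log(|\FF||\GG||\Pi|/\delta)/|D_\dyn|}+\sqrt{\log(|\FF||\GG||\Pi|/\delta)/|D_\goal|}$ factor multiplying $\mathfrak{C}_\dyn(\pi)$, while the reward-regression error on $\bar D_\goal$ is purely a $\sqrt{\log(|\GG|/\delta)/|D_\goal|}$ term multiplying $\mathfrak{C}_\goal(\pi)$.

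The main obstacle I anticipate is the careful bookkeeping around the action-augmented, action-dependent reward: one must verify that (i) the completeness and realizability I impose on the \emph{augmented} class $\bar\FF$ follow cleanly from Assumptions~\ref{as:realizability}–\ref{as:completeness}, (ii) the on-policy Bellman residual genuinely decomposes along the $G$ vs.\ $\notin G$ partition so that $\rho^\pi_{\notin G}$ and $\rho^\pi_{\in G}$ appear exactly as in \cref{def:concentrability}, and (iii) the coupling between $D_\dyn$ and $D_\goal$ induced by the Cartesian-product pairing in Algorithm~\ref{alg:sds} is handled properly in the concentration step (one randomizes over contexts and transitions separately and takes a union). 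A further mild subtlety is boundedness of $\bar f^\dagger\in[0,1]$ to get the claimed rates; this is why $\FF,\GG$ are taken to be $[0,1]$-valued in the statement.
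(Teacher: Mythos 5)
Your plan follows essentially the same route as the paper's proof in Appendix~\ref{app:theory}: reduction to the action-augmented MDP via the regret equivalence, the glued value class $\bar f_g$ with backup $\gamma\,\mathbb{E}[\max(g(x'),f(x',\pi))]$ for real actions and $R$ for $a^+$ (the paper's Proposition~\ref{lm:realizability}), the pessimism-plus-performance-difference decomposition, the split of the on-policy Bellman residual at the goal-hitting time $T$ into the $\rho^\pi_{\notin G}$ and $\rho^\pi_{\in G}$ pieces matched to $\mathfrak{C}_\dyn$ and $\mathfrak{C}_\goal$, and the two-stage concentration over the product measure $\mu_\dyn\times\mu_\goal$ induced by the Cartesian pairing (the paper's Lemmas~\ref{lem:sq_err_diff_dyn} and~\ref{lem:sq_err_diff_goal}). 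The three obstacles you flag are exactly the points the paper's appendix resolves, so the proposal is correct and matches the paper's argument.
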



\vspace{-2mm}
\paragraph{Interpretation.}We can interpret Theorem~\ref{th:main theorm pspi (informal)} as follows: The statistical errors in value function estimation would decrease as we have more data from $\mu_\goal$ and $\mu_\dyn$; For any comparator $\pi$ with finite coefficients $\mathfrak{C}_{\dyn}(\pi),\mathfrak{C}_{\goal}(\pi)$, the final regret upper bound would also decrease.
Taking $\pi = \pi^*$ as an example. For the coefficients $\mathfrak{C}_{\dyn}(\pi),\mathfrak{C}_{\goal}(\pi)$ to be finite, it indicates 1) the state-action distribution from the dynamics data ``covers'' the trajectories generated by $\pi^*$, which includes the case of stitching\footnote{This does not mean the dynamics data have to be generated by the optimal policy; they can be generated by highly suboptimal policies so long as they together provide sufficient coverage.}; 2) the support of $\mu_{\goal}$ ``covers'' the goals $\pi^*$ would reach. We note that these conditions are \emph{not} any stronger than general requirements to solve offline algorithms: The ``coverage'' above is measured based on the generalization ability of $f$ and $g$ respectively as in Definition~\ref{def:concentrability}; e.g., if $f(x_1)$ and $f(x_2)$ are similar for $x_1\neq x_2$, then $x_2$ is within the coverage of $\mu$ so long as $x_1$ can be generated by $\mu$ in terms of the generalization ability of $f$. Such a coverage condition is weaker than coverage conditions based on density ratios.
Besides, \cref{th:main theorm pspi (informal)} simultaneously apply to all $\pi\in\Pi$ not just $\pi^*$. Therefore, as long as the above ``coverage'' conditions hold for any policy $\pi$ that can reach the goal set, the agent can learn to reach the goal set. 
Thus, we show that \algo with PSPI can provably solve CGO without the need for additional non-goal samples, i.e., CGO is learnable with positive data only.

\begin{remark} Here we only require function approximation assumptions made in the original MDP, without relying on functions defined on the fictitious action or completeness assumptions based on the fictitious transition. As a result, our theoretical results are comparable with those of other approaches. 
\end{remark}

\begin{remark}
MAHALO~\citep{li2023mahalo} is a SOTA offline RL algorithm that can provably learn from unlabeled data. One version of MAHALO is realized on top of PSPI in theory; however, their theoretical result (Theorem D.1) requires a stronger version concentrability, $ \max_{g\in\GG} \nicefrac{\| g-r \|_{\rho^\pi_{\notin G}}^2}{ \| g-r \|_{\mu_\goal}^2}$, to be small. In other words, it needs negative examples of (context, non-goal state) tuples for learning. 
\end{remark}

\vspace{-2mm}
\paragraph{Intuition for other base algorithms. } Notice that PSPI is just one instantiation. Conceptually, the coverage conditions above also make sense for other pessimistic offline RL instantiations based on the Bellman equation (like IQL), since the key ideas used in the above analyses are that the regret relationship (\cref{th:regret}) between the original MDP  and the action augmented MDP (which is algorithm agnostic) and that pessimism together with Bellman equations can effectively propagate information from the context-goal dataset 
(without the need for negative data). However, performing complete theoretical analyses of \algo for all different offline RL algorithms is out of the scope of this paper. 

\begin{figure*}[t]
\centering
\vspace{-1em}
\includegraphics[width=0.9\textwidth]{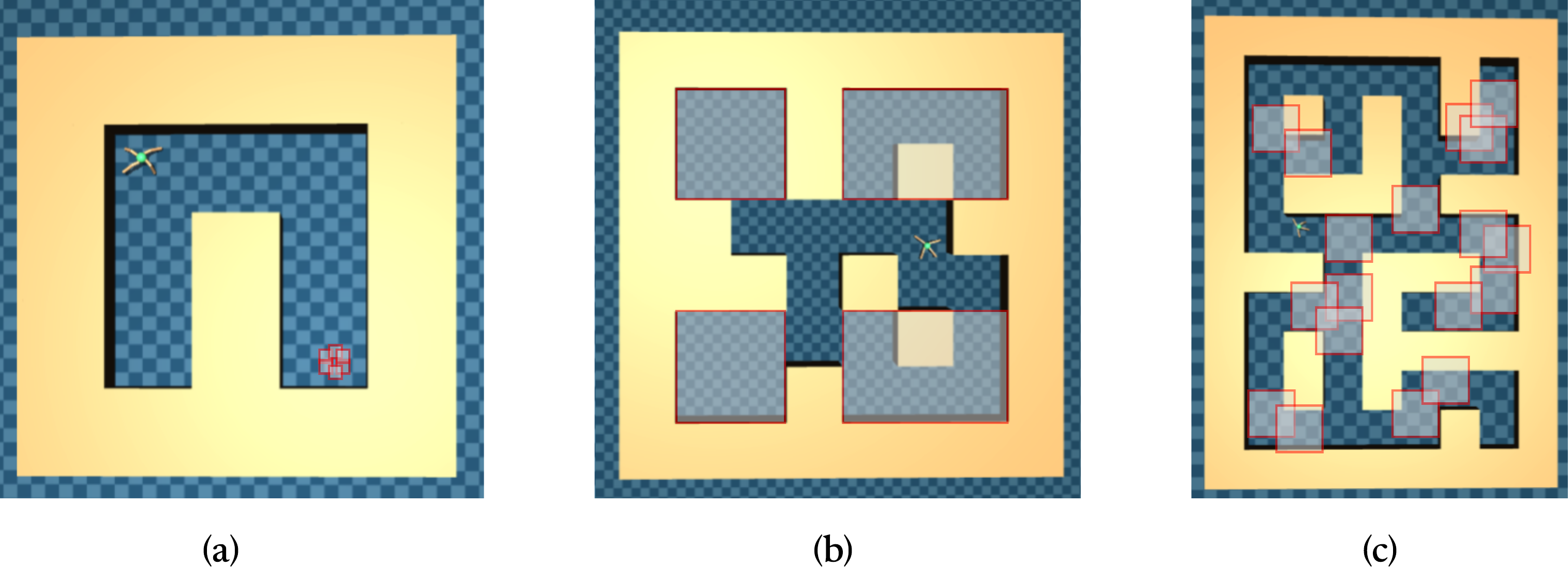}
\caption{Illustration of the context-goal relationship with increasing complexity (Each red boundary defines a goal set with its center location as context). (a) Contexts and goal sets are very similar such that it could be approximately solved by a context-agnostic policy. 
(b) Contexts are finite, and different contexts map to distinct goal sets, which requires context-dependent policies. 
(c) Contexts are continuous and infinite. The context-goal mapping is neither one-to-many nor many-to-one, creating a CGO problem with full complexity.}
\label{fig:cgo}
\vspace{-1.5em}
\end{figure*}

\vspace{-1em}
\section{Experiments}
\label{sec:experiments}
\vspace{-0.5em}
In this section, we present the experimental setup and results for CODA. Code is publicly available at: \url{https://github.com/yingfan-bot/coda}.

For a comprehensive empirical study, we first introduce the diverse spectrum of practical CGO setups.

\textbf{Diverse spectrum of practical CGO problems.} The main challenge of the CGO problem compared with traditional goal-conditioned RL is the potential complexity in the context-goal relationship. Therefore, to showcase the efficacy of different methods, we construct three levels with \emph{increasing difficulty} as shown in Figure~\ref{fig:cgo}: (a) has a similar complexity as a single-task problem where the context does not play a significant role; (b) requires a context-dependent policy but only has finite contexts; (c) has infinite continuous context, requiring a context-dependent policy and generalization ability to contexts outside the offline data set.
We aim to answer the following questions:
1) Does our method work under the data assumptions in Section~\ref{sec:formulation}, with different levels of context-goal complexity? 
2) Is there any empirical benefit from using \algo, compared with baseline methods including reward learning, goal prediction, etc?

\vspace{-0.75em}
\subsection{Environments and Datasets}
\label{exp:dataset}
\vspace{-0.25em}
\textbf{Dynamics dataset.} For all experiments, we use the original AntMaze-v2 datasets (3 different mazes and 6 offline datasets) of D4RL~\citep{fu2020d4rl} as dynamics datasets $D_\dyn$, removing all rewards and terminals. 
\textbf{Context-goal dataset.} We construct three levels of context and goal relationships as shown in Figure~\ref{fig:cgo}. For each setup, we first define the context set, and then sample a fixed set of states from the offline trajectory dataset that satisfies the context-goal relationship, and then randomly \emph{perturb} the states such that there would be no way to directly match goal examples to some states in the trajectories given contexts. Notice that this context-goal relationship is only used for dataset construction and is not accessible to the learning algorithm.\footnote{Also note that the state space in Antmaze not only includes the 2D location; it also includes data from robotic arms, etc. We define the context-goal relationship only on the 2D location and ignore other information.} The specific context-goal relationship are discussed in Section~\ref{sec:results} with the construction/evaluation details in Appendix~\ref{app:context_setup}.

\vspace{-0.25em}
\subsection{Method and Baselines}
\label{exp:baseline}
\vspace{-0.25em}
For controlled experiments, we use  IQL~\citep{kostrikov2021offline} 
as the same backbone offline algorithm for all the methods with the same set of hyperparameters. Our choice of IQL is motivated by both its benchmarked performance on several RL domains and its structural similarity to PSPI (use of value/policy function classes along with pessimism).
Please see Appendix~\ref{app:hyper} for hyperparameters. 

We describe the algorithms compared in the experiments.

\textbf{\algo.} We apply \algo in Algorithm~\ref{alg:sds} with IQL as the offline RL algorithm to solve the augmented MDP defined in Section~\ref{sec:augmented_mdp}
More specifically, we set $a^+$ to be an extra dimension in the action space of the action-value function, and model the policy with the original action space. Empirically, we found that equally balancing the samples $\bar{D}_\dyn$ and $\bar{D}_\goal$ generates the best result\footnote{We study the effect of this sampling ratio on \algo's performance in Table~\ref{tab:sample_ratio} in Appendix~\ref{app:hyper}}.
Then we apply IQL on this labeled dataset. 


\textbf{Reward prediction.} For this family of baselines, we need to use the learned reward to predict the label of context-goal samples in the randomly sampled context-transition pairs during training, so we need to pre-train a reward model using the context-goal dataset. We use PDS~\citep{hu2023provable} for reward modeling, and learn a \emph{pessimistic} reward function using ensembles of models on the context-goal dataset. Then we apply the reward model to label the transitions with contexts, run IQL on this labeled dataset, and get a context-dependent policy. Besides PDS, we also test naive reward prediction (RP, which follows the same setup of PDS but without ensembles) and UDS~\citep{yu2022leverage} +RP in Section~\ref{sec:results} (See details in Appendix~\ref{app:hyper}).  Additionally, we add results from training with the oracle reward (marked as ``Oracle Reward'') where we provide the oracle reward for any query context-goal pairs, as a reference of the performance upper bound for reward prediction methods.

\textbf{Goal prediction.}{ We consider another GCRL-based baseline. Notice that the relationship between contexts and goals is unknown in CGO, we cannot directly apply traditional GCRL methods to CGO problems. Therefore, we adopt a workaround to use GCRL methods: We learn a conditional generative model as the goal predictor using classifier-free diffusion guidance~\cite{ho2022classifier}, where the contexts serve as the condition, and the goal examples are used to train the generative model. We also learn a general goal-conditioned policy with the dynamics-only dataset using HER~\cite{andrychowicz2017hindsight}+IQL. Given a test context, the goal predictor samples the goal given the context, which is then passed as the condition to the policy.}

\vspace{-0.25em}
\subsection{Results}
\label{sec:results}
\vspace{-0.25em}
\textbf{Original AntMaze: Figure~\ref{fig:cgo}(a).} In the original AntMaze, 2D goal locations (contexts) are limited to a small area as in Figure~\ref{fig:cgo} (a). To make it a CGO problem, we make the test context visible to the agent.
This setting in Figure~\ref{fig:cgo} is approximately a single-task problem. 


{\algo generally achieves better performance than reward learning and goal prediction methods.}
Comparing the normalized return in each AntMaze environment for all methods, our method consistently achieves equivalent or better performance in each environment compared to other baselines (Table~\ref{tab:original antmaze}). \footnote{We find umaze is too easy: even if the reward labeling is bad it still has a relatively high reward, so we also omit it in other experiments. 
We also find UDS and RP are not very effective in our data setup, so we also omit them in other experiments.} Moreover, the performance of Goal Prediction is rather poor, which mainly comes from not enough goal examples to learn from in this setup due to a limited goal area.

\begin{table*}[h]
\centering
\vspace{-0.5em}
\caption{Average success rate (\%) in AntMaze-v2 from all environments.}
\scalebox{0.9}{
\begin{tabular}{l|ccccc|c}
\toprule
  Env/Method             & \algo (Ours)      & PDS &Goal Prediction       & RP & UDS+RP &  Oracle Reward\\
   \hline
umaze          & \textbf{94.8±1.3}  & 93.0±1.3 & 46.4±6.0& 50.5±2.1        & 54.3±6.3            &          94.4±0.61\\
umaze diverse  & \textbf{72.8±7.7} & 50.6±7.8  &42.8±4.4& \textbf{72.8±2.6}        & 71.5±4.3               &  76.8±5.44     \\
medium play    & \textbf{75.8±1.9}  & 66.8±4.9 &43.8±4.7& 0.5±0.3         & 0.3±0.3                       & 80.6±1.56     \\
medium diverse & \textbf{84.5±5.2}  & 22.8±2.4 & 28.6±3.9   & 0.5±0.5         & 0.8±0.5                & 72.4±4.26           \\
large play     & \textbf{60.0±7.6} & 39.6±4.9 &13.0±4.0 & 0±0             & 0±0                    &    41.2±3.58       \\
large diverse  & \textbf{36.8±6.9} & 30.0±5.3&12.6±2.7 & 0±0             & 0±0                    & 34.2±2.59    \\
\hline
average & \textbf{70.8}&50.5&31.2&20.7&21.2 & 66.6\\
\bottomrule
\end{tabular}
}

\label{tab:original antmaze}
\end{table*}

\textbf{Four Rooms: Figure~\ref{fig:cgo}(b).} We partition the maze into four rooms as in Figure~\ref{fig:cgo}(b), where the discrete room numbers (1,2,3,4) serve as contexts and we uniformly select test contexts. A context-dependent policy is needed, but there is no generalization required for unseen contexts in this setup.

We show the normalized return (average success rate in percentage) in each modified Four Rooms environment for our method and baseline methods in Table~\ref{tab:four_rooms}, where our method consistently outperforms the performances of baseline methods. 

\begin{table*}[h]
\centering
\vspace{-0.5em}
\caption{Average scores from Four Rooms with perturbation. The score for each run is the average success rate (\%) of the other three rooms.}
\scalebox{0.9}{
\begin{tabular}{l|ccc|cc}
\toprule
Env/Method   & \algo (Ours)     & PDS      & Goal Prediction &Oracle Reward\\
\hline
medium-play & \textbf{78.7±0.9} &46.0±4.47 & 59.3±2.6&77.7±2.0 \\
medium-diverse & \textbf{83.6±1.9}&51.3±3.6 & 66.7±2.4 & 87.4±1.2\\
large-play  &\textbf{65.5±2.5}&13.9±2.4&41.4±3.6 &67.2±2.7\\
large-diverse &\textbf{72.2±2.9} &11.1±3.8&42.0±3.0 & 69.6±3.1\\
\hline
average & \textbf{75.0} & 30.6& 52.4 &75.5\\
 \bottomrule
\end{tabular}
\vspace{-10mm}
}

\label{tab:four_rooms}
\end{table*}

\textbf{Random Cells: Figure~\ref{fig:cgo}(c).} We use a diverse distribution of contexts as shown in Figure~\ref{fig:cgo}(c), where the contexts are randomly sampled from non-wall states. 
For test contexts, we have two settings: 1) sampling from the training distribution; 2) sampling from a far-away area from the start states. 

Overall, \algo outperforms the baselines under the setup in Figure~\ref{fig:cgo}(c).
We show the normalized return (average success rate in percentage) in each modified Random Cells environment in Table~\ref{tab:cell_same}, which also shows the generalization ability of our method in the context space. 
\algo also generalizes to a different test context distribution: We also test with a distribution shift of the contexts in Table~\ref{tab:cell_diff}. We can observe that when tested with this different context distribution, \algo still generates better overall results compared to reward learning and goal prediction baselines.



\begin{table*}[h]
\centering
\caption{{Average scores from Random Cells. The score for each run is the average success rate (\%) of random test contexts from the same training distribution.}}
\scalebox{0.9}{
\begin{tabular}{l|ccc|cc}
\toprule
Env/Method   & \algo (Ours)     & PDS      & Goal Prediction &Oracle Reward\\
\hline
medium-play & \textbf{76.8±6.1}&52.0±8.8&66.7±7.2 &71.9±0.1\\
medium-diverse &\textbf{78.2±6.5}&60.9±11.3 & 69.7±8.7&79.3±6.1 \\
large-play  &\textbf{57.6±12.4} & 50.6±6.4 & 42.4±8.2 &49.4±9.3\\
large-diverse & 54.7±8.8 & \textbf{58.3±9.2} & 44.2±8.1&58.2±3.4\\
\hline
average &\textbf{66.8}&55.5&55.8 & 64.7\\
 \bottomrule
\end{tabular}
}

\label{tab:cell_same}
\end{table*}

\begin{table*}[h]
\centering
\caption{{Average scores from Random Cells with perturbation. The score for each run is the average success rate (\%) of random test contexts with a distribution shift.}}
\scalebox{0.9}{
\begin{tabular}{l|ccc|cc}
\toprule
Env/Method   & \algo (Ours)     & PDS      & Goal Prediction &Oracle Reward\\
\hline
medium-play & 67.9±8.2 & 50.1±13.4&\textbf{70.5±1.9} &67.2±7.2\\
medium-diverse &\textbf{72.5±6.5}&57.5±14.8&63.0±7.2 &68.7±7.9\\
large-play  &\textbf{60.2±4.8}&48.1±8.0&44.3±4.1&59.8±4.4\\
large-diverse &\textbf{58.0±5.8}&44.1±9.9&55.4±5.7 &57.6±7.6\\
 \hline
average & \textbf{64.7 }& 49.9 & 58.3 & 63.3 \\
 \bottomrule
\end{tabular}
}
\label{tab:cell_diff}
\end{table*}
\textbf{Reference to training with oracle reward.} Notice that training with oracle reward is the skyline performance. From the results, training with oracle reward does not generally improve the performance much compared to \algo, though it generally outperforms PDS and Goal Prediction. This is mainly due to the sparsity of the positive samples in the randomly sampled context-transition pairs. On the other hand, \algo easily uses these positive examples via our augmentation, which is another advantage of our method over reward prediction baselines.

\textbf{Evaluation of the Reward Model.} We also visualize the learned reward model from reward learning baselines in Appendix~\ref{app:reward_eval}: PDS is consistently better at separating positive and negative datasets than UDS and naive RP, but PDS can still fail at fully separating positive and negative examples. Intuitively, our method does not require reward learning thanks to the construction of the augmented MDP, which avoids the extra errors in reward prediction and leads to better performance.

\vspace{-0.25em}
\subsection{Discussion and Limitation}
\vspace{-0.25em}
\label{sec:limitation}
Our experiments are limited to low-dimensional simulations. Nevertheless, the success of our method with diverse context-goal relationships serves as a first milestone to showcase its effectiveness, and we believe \algo would be useful in real-world settings (e.g., learning visual-language robot policies) for its simplicity and theoretical guarantees. Potential scaling up by incorporating features from large pretrained models would be an exciting future direction, which can make our method generalizable to the real world.

\vspace{-2mm}
\section{Conclusion}
\vspace{-0.5em}
We propose \algo for offline CGO problems, and prove \algo can learn near-optimal policies without the need for negative labels with natural assumptions. We also validate the efficacy of \algo experimentally, and find it outperforms other reward-learning and goal prediction baselines across various CGO complexities. We believe our method has the potential to generalize to real-world applications by further scaling up.







\label{sec:conclusion}

\clearpage
\bibliographystyle{plainnat}
\bibliography{icml_2024}
\appendix

\clearpage
\section{Theoretical Analysis}  \label{app:theory}

In this section, we provide a detailed analysis for the instantiation of \algo using PSPI \citep{xie2021bellman}. We follow the same notation for the value functions, augmented MDP, and extended function classes as stated in \cref{sec:background} and \cref{sec:approach} in the main text.

\subsection{Equivalence Relations between Original and Augmented MDP}
\label{app:augmented_mdp}

We begin by showing that the optimal policy and any value function in the augmented MDP can be expressed using their analog in the original MDP. With the augmented MDP defined as $\overline{\MM} \coloneq (\bar\XX, \bar\AA, \bar{R}, \bar{P}, \gamma)$ in \cref{sec:augmented_mdp}, we first define the value function in the augmented MDP. For a policy $\bar{\pi}: \bar\XX \to \bar\AA$, we define the Q function for the augmented MDP as 
\begin{align*}
    \bar{Q}^{\bar{\pi}}(x,a) \coloneqq \E_{\bar\pi,\bar{P}} \left[ \sum_{t=0}^\infty \gamma^t \bar{R}(x,a) | x_0 = x, a_0 = a  \right]
\end{align*}
Notice that we don't have a reaching time random variable $T$ in this definition; instead the agent would enter an absorbing state $s^+$ after taking $a^+$ in the augmented MDP. We can define similarly $\bar{V}^{\bar\pi}(s) \coloneq \bar{Q}^{\bar{\pi}}(x,\bar{\pi})$.

\begin{remark}
Let $\bar{Q}^\pi_R$ be the extension of $Q^\pi$ based on $R$.
We have, for $x \notin G$, $\bar{Q}^\pi_{{R}} (x,a) = \bar{Q}^{\bar{\pi}} (x,a)$ $\forall a \in \bar\AA$, and for $x \in G$, $\bar{Q}^\pi_{{R}} (x,a) =  \bar{Q}^{\bar{\pi}} (x,a^+) = 1$, $\forall a\in\bar\AA$.
\end{remark}

By the construction of the augmented MDP, it is obvious that the following is true.
\begin{lemma} \label{lm:equivalence}
    Given $\pi:\XX\to\Delta(\AA)$, let $\bar\pi$ be its extension.
    For any $h:\XX\times\AA\to \R$, it holds
    \begin{align*}
        \E_{\pi,P} \left[ \sum_{t=0}^T \gamma^t h(x,a) \right] = \E_{\bar\pi,\bar{P}} \left[ \sum_{t=0}^\infty \gamma^t \tilde{h}^\pi(x,a) |  x \notin \XX^+ \right]
    \end{align*}
    where $T$ is the goal-reaching time (random variable) and we define $\tilde{h}^\pi(x,a^+)=h(x,\pi)$. 
\end{lemma}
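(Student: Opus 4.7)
The plan is to construct an explicit coupling between trajectories drawn under $(\pi, P)$ in the original MDP and those drawn under $(\bar\pi, \bar P)$ in the augmented MDP, and then to show that the two discounted sums agree term-by-term in expectation. I would first read the conditioning ``$\,\mid x\notin\XX^+$'' as an indicator restricting the right-hand sum to time steps whose state is non-absorbing, i.e.\ $\sum_{t\ge 0}\gamma^t \tilde h^\pi(x_t,a_t)\,\one(x_t\notin\XX^+)$, which is the only reading under which the identity can possibly hold. Under the coupling, the augmented trajectory is forced to coincide with the original one until the first hitting time $T$ of $G$; at time $T$ the extension $\bar\pi$ deterministically plays $a^+$, after which $\bar P$ sends the state to $s^+\in\XX^+$ and keeps it there, whereas the original sum truncates at $T$ by the convention in the statement.

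Next I would verify the match in three time regimes. For $t=0,1,\dots,T-1$, the augmented MDP agrees with the original on $\XX\setminus G$ by construction, so $\bar\pi(\cdot\mid x_t)=\pi(\cdot\mid x_t)$ and $\bar P(\cdot\mid x_t,a_t)=P(\cdot\mid x_t,a_t)$; since $a_t\in\AA$ we have $\tilde h^\pi(x_t,a_t)=h(x_t,a_t)$, and the summands coincide pointwise under the coupling. At the boundary $t=T$ the two MDPs diverge: the original samples $a_T\sim\pi(x_T)$ and contributes $\gamma^T h(x_T,a_T)$, while the augmented MDP contributes $\gamma^T \tilde h^\pi(x_T,a^+)=\gamma^T h(x_T,\pi)$ by the defining relation for $\tilde h^\pi$. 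Taking the conditional expectation over $a_T$ on the original side collapses $\E_{a_T\sim\pi(x_T)}[h(x_T,a_T)]$ to $h(x_T,\pi)$, matching the augmented contribution. For $t>T$ the augmented trajectory satisfies $x_t\in\XX^+$, so those summands are annihilated by $\one(x_t\notin\XX^+)$. Summing $t=0,\dots,T$ and taking the outer expectation yields the claimed identity.

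The main obstacle is notational rather than mathematical: pinning down that ``$\,\mid x\notin\XX^+$'' is a per-summand indicator rather than a probabilistic conditioning on the full trajectory (the latter would not be well-defined), and verifying that $T$ is a well-defined stopping time in the coupled probability space, with $T=\infty$ contributing zero by the usual absolute-convergence argument when $h$ is bounded and $\gamma<1$. Once these points are fixed, the equality reduces to the two structural facts built into the construction of $\overline\MM$: that $(\bar\pi,\bar P)$ equals $(\pi,P)$ on $\XX\setminus G$, and that $\tilde h^\pi(x,a^+)=h(x,\pi)$ at the boundary.
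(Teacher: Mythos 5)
Your proof is correct and takes essentially the same route the paper intends --- the paper asserts this lemma as immediate from the construction of $\overline{\MM}$, and your term-by-term coupling argument (exact agreement of $(\bar\pi,\bar P)$ with $(\pi,P)$ for $t<T$, averaging over $a_T\sim\pi(x_T)$ at the boundary so that $\E[h(x_T,a_T)]=h(x_T,\pi)=\tilde h^\pi(x_T,a^+)$, and annihilation of all $t>T$ terms once the state enters $\XX^+$) is exactly the elaboration it leaves implicit, including the correct reading of the conditioning as a per-summand restriction. One minor wording quibble: on the event $T=\infty$ the two sides agree as identical convergent series (the trajectories never diverge) rather than ``contributing zero,'' but this does not affect the argument.
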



We can now relate the value functions between the two MDPs. 
\begin{proposition} \label{lm:Q equivalence}
For a policy $\pi:\XX\to\Delta(\AA)$, let $\bar{\pi}$ be its extension (defined above). We have for all $x\in\XX$, $a\in\AA$,  
\begin{align*}
    Q^\pi(x,a) &\geq \bar{Q}^{\bar{\pi}}(x,a)\\
    V^\pi(x) &= \bar{V}^{\bar{\pi}}(x)
\end{align*}
Conversely, for a policy $\xi:\bar\XX\to\Delta(\bar{\AA})$, define its restriction $\underline{\xi}$ on $\XX$ and $\AA$ by translating probability of $\xi$ originally on $a^+$ to be uniform over $\AA$.  Then we have for all $s\in\SS$, $a\in\AA$
\begin{align*}
    Q^{\underline\xi}(x,a) &\geq  \bar{Q}^{\xi}(x,a)\\
    V^{\underline\xi}(x) &\geq  \bar{V}^{\xi}(x)
\end{align*}
\end{proposition}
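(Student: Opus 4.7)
The plan is to prove both directions using policy-improvement style arguments. For the forward direction I will apply Lemma~\ref{lm:equivalence} for the value equality and a two-case analysis ($x\in G$ vs $x\notin G$) for the $Q$-inequality; for the converse direction I will insert the extension $\bar{\underline\xi}$ of $\underline\xi$ and apply a performance-difference argument inside the augmented MDP. A structural fact used throughout is that $\bar V^\xi \leq 1$ for every policy $\xi$ in $\overline{\MM}$, since $\bar R \in \{0,1\}$ can be collected at most once along a trajectory: taking $a^+$ produces the only nonzero reward and forces the agent into the absorbing state $s^+$.

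\textbf{Forward direction.} I first establish $V^\pi(x) = \bar V^{\bar\pi}(x)$ by applying Lemma~\ref{lm:equivalence} with $h(x,a) = R(x)$: in the original MDP only the step at the goal-reaching time $T$ contributes, with discounted value $\gamma^T$; in the augmented MDP the extended policy $\bar\pi$ takes $a^+$ exactly upon entering $G$, so the only contributing step is again at time $T$ with $\bar R(x_T,a^+) = 1$. Both sides evaluate to $\E[\gamma^T]$. For $Q^\pi(x,a) \geq \bar Q^{\bar\pi}(x,a)$ with $a \in \AA$ I split cases: if $x \notin G$ both $Q$-functions satisfy the same one-step Bellman recursion (same transition $P(s'|s,a)$ and, by the just-proved value equality, the same next-state value), so they agree; if $x \in G$ then $Q^\pi(x,a) = R(x) + \gamma V^\pi(s^+,c) = 1$, while $\bar Q^{\bar\pi}(x,a) = \gamma\,\E_{s' \sim P(\cdot|s,a)}[\bar V^{\bar\pi}(s',c)] \leq \gamma < 1$.

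\textbf{Converse direction.} Applying the forward direction to $\underline\xi$ yields $V^{\underline\xi}(x) = \bar V^{\bar{\underline\xi}}(x)$ and $Q^{\underline\xi}(x,a) \geq \bar Q^{\bar{\underline\xi}}(x,a)$, so it suffices to show $\bar V^{\bar{\underline\xi}}(x) \geq \bar V^\xi(x)$ inside $\overline{\MM}$; the matching inequality $\bar Q^{\bar{\underline\xi}}(x,a) \geq \bar Q^\xi(x,a)$ then follows by one Bellman unroll since, for $a \in \AA$, both $\bar R(x,a)$ and $\bar P(\cdot|x,a)$ are policy-independent. By the performance-difference lemma in $\overline{\MM}$, the $V$-inequality reduces to the state-wise condition $\bar Q^\xi(x', \bar{\underline\xi}) \geq \bar V^\xi(x')$. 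If $x' \in G$, $\bar{\underline\xi}$ deterministically selects $a^+$ with $\bar Q^\xi(x',a^+) = 1 \geq \bar V^\xi(x')$. If $x' \notin G$, then $\bar Q^\xi(x',a^+) = 0$ (zero immediate reward followed by the zero-value absorbing state), hence $\bar V^\xi(x') = \sum_{a \in \AA}\xi(a|x')\bar Q^\xi(x',a)$; the definition of $\underline\xi$ yields $\underline\xi(a|x') = \xi(a|x') + \xi(a^+|x')/|\AA| \geq \xi(a|x')$ for each $a \in \AA$, and combined with $\bar Q^\xi \geq 0$ this gives $\bar Q^\xi(x', \bar{\underline\xi}) = \sum_{a \in \AA}\underline\xi(a|x')\bar Q^\xi(x',a) \geq \bar V^\xi(x')$.

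\textbf{Main obstacle.} The delicate step is the last one: one must carefully bookkeep the $a^+$ versus $\AA$ action probabilities when expanding $\bar V^\xi(x')$ at non-goal states, and use that the redistribution defining $\underline\xi$ only \emph{adds} mass to each $a \in \AA$. The complementary $x' \in G$ case rests on the uniform bound $\bar V^\xi \leq 1$, which is a structural feature of the augmented MDP and is precisely what makes $a^+$ a weakly improving action at every goal state; any other part of the argument is routine Bellman-equation manipulation once these two points are isolated.
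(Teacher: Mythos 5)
Your proof is correct and follows essentially the same route as the paper's: Lemma~\ref{lm:equivalence} gives the forward direction, and the converse is the telescoping (performance-difference) policy-improvement argument hinging on the facts that $\bar{Q}^{\xi}(\cdot,a^+)$ equals $1$ on $G$ and $0$ off $G$, while $\bar{V}^{\xi}\leq 1$ and $\bar{Q}^{\xi}\geq 0$. You fill in details the paper leaves implicit --- notably the bookkeeping of probability mass when $\xi$ randomizes over $a^+$ at non-goal states --- but the underlying argument is the same.
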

\begin{proof}
The first direction follows from \cref{lm:equivalence}. For the latter, whenever $\xi$ takes $a^+$ at some $x\notin G$, it has $\bar{V}^\xi(x) = 0$ but $\bar{V}^{\underline\xi}(x)\geq 0$ since there is no negative reward in the original MDP.  By performing a telescoping argument, we can derive the second claim.
\end{proof}

By this lemma, we know the extension of $\pi^*$ (i.e., $\bar{\pi}^*$) is also optimal to the augmented MDP and $V^*(x) = \bar{V}^*(x)$ for $x\in\XX$. Furthermore, we have a reduction that we can solve for the optimal policy in the original MDP by the solving augmented MDP, since 
\begin{align*}
    V^{\underline\xi}(d_0) - V^*(d_0) \leq V^{\xi}(d_0) - \bar{V}^*(d_0)
\end{align*}
for all $\xi:\bar{\XX}\to\Delta(\bar{\AA})$. 
In particular, 
\begin{align}
    \text{Regret}(\pi) \coloneq  V^{\pi}(d_0) - V^*(d_0) = V^{\bar\pi}(d_0) - \bar{V}^*(d_0) \eqcolon \overline{\text{Regret}}(\bar\pi)
\end{align}

Since the augmented MDP replaces the random reaching time construction with an absorbing-state version, the Q function $\bar{Q}^{\bar\pi}$ of the extended policy $\bar\pi$ satisfies the Bellman equation
\begin{align}
    \bar{Q}^{\bar\pi}(x,a) 
    &= \bar{R}(x,a) + \gamma \mathbb{E}_{x'\sim \bar{P}(\cdot|x,a)} [ \bar{Q}^{\pi}(x',\bar{\pi}) ] \nonumber\\
    &\eqcolon \bar{\TT}^\pi \bar{Q}^{\pi}(x,a)    \label{eq:augment_BE}
\end{align}
For $x\in\XX$ and $a\in\AA$, we show how the above equation can be rewritten in $Q^\pi$ and $R$. 
%
\begin{proposition} \label{eq:Bellman eqation mod}
For $x\in\XX$ and $a\in\AA$,
    \begin{align*}
    \bar{Q}^{\bar\pi}(x,a) = 0 + \gamma \E_{x'\sim \bar{P}(\cdot|x,a)} [   \max(R(x'),Q^{\pi}(x', \pi))  ]    
\end{align*}
For $a = a^+$, $\bar{Q}^{\bar\pi}(x,a^+) = \bar{R}(x,a^+) = R(x)$.
For $x \in \XX^+$, $\bar{Q}^{\bar\pi}(x,a) = 0$.
\end{proposition}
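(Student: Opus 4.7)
The plan is to derive all three identities directly from the augmented Bellman equation \eqref{eq:augment_BE} by a case analysis, using the value equivalence already established in \cref{lm:Q equivalence} (and the remark preceding \cref{lm:equivalence}) to replace $\bar{Q}^{\bar\pi}$ on the right-hand side by $Q^\pi$ and $R$ on the original state space.

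First I would handle the easy cases. For $a = a^+$ at $x\in\XX$, by definition $\bar{R}(x,a^+)=\one(s\in G_c)=R(x)$, and $\bar{P}(\cdot\mid x,a^+)$ is concentrated on $\XX^+$. Since any state in $\XX^+$ is absorbing under $\bar{P}$ with zero reward under $\bar{R}$, unrolling the Bellman recursion gives $\bar{Q}^{\bar\pi}(x',a)=0$ for all $x'\in\XX^+$ and all $a$; plugging this in yields $\bar{Q}^{\bar\pi}(x,a^+)=R(x)$ and also the third identity $\bar{Q}^{\bar\pi}(x,a)=0$ for $x\in\XX^+$.

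The substantive case is $x\in\XX$, $a\in\AA$. Here $\bar{R}(x,a)=\one(s\in G_c)\one(a=a^+)=0$, so \eqref{eq:augment_BE} reduces to $\bar{Q}^{\bar\pi}(x,a)=\gamma\,\E_{x'\sim \bar{P}(\cdot\mid x,a)}[\bar{Q}^{\bar\pi}(x',\bar\pi)]$. It then suffices to prove the pointwise identity
\begin{equation*}
\bar{Q}^{\bar\pi}(x',\bar\pi) \;=\; \max\bigl(R(x'),\,Q^{\pi}(x',\pi)\bigr)\qquad\text{for all }x'\in\bar\XX.
\end{equation*}
I would split on whether $x'\in G$ or not. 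If $x'\notin G$, then $\bar\pi$ plays $\pi$ at $x'$, and the remark before \cref{lm:equivalence} (or \cref{lm:Q equivalence}) gives $\bar{Q}^{\bar\pi}(x',a')=Q^\pi(x',a')$ for all $a'\in\AA$; taking expectation under $\pi$ produces $\bar{Q}^{\bar\pi}(x',\bar\pi)=Q^\pi(x',\pi)$, which equals $\max(0,Q^\pi(x',\pi))=\max(R(x'),Q^\pi(x',\pi))$ because $R(x')=0$ and $Q^\pi\ge 0$. If $x'\in G$, then $\bar\pi(x')=a^+$ deterministically, so the first part of the proof gives $\bar{Q}^{\bar\pi}(x',\bar\pi)=R(x')=1$; on the other side, the construction of $P$ in the original MDP forces the next state from any $x'\in G$ into $\XX^+$ with $R\equiv 0$ thereafter, so $Q^\pi(x',\pi)=R(x')=1$, and the max is again $1$. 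The case $x'\in\XX^+$ is immediate from the previously derived $\bar{Q}^{\bar\pi}(x',\cdot)=0$ together with $R(x')=0$ and $Q^\pi(x',\pi)=0$.

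The only mildly subtle point — and the place I would be most careful — is recognizing that the $\max$ on the right-hand side is not superfluous: it is precisely what lets one formula cover both $x'\notin G$ (where $R(x')=0$ and the continuation value $Q^\pi(x',\pi)$ dominates) and $x'\in G$ (where $\bar\pi$ takes the fictitious action $a^+$ so that the ``reward'' branch $R(x')=1$ is what the augmented policy actually harvests, while $Q^\pi(x',\pi)$ coincidentally also equals $1$). Once this equivalence is established, substituting it back into the zero-reward Bellman recursion yields the claimed formula and completes the proof.
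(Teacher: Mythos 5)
Your proof is correct and follows essentially the same route as the paper: the paper reduces the proposition to Lemma~\ref{lm:max Q and r}, i.e.\ the identity $\bar{Q}^{\bar\pi}(x',\bar{\pi})=\max(R(x'),Q^{\pi}(x',\pi))$, which it proves by exactly your case split on $x'\in G$ versus $x'\notin G$ using \cref{lm:Q equivalence} and the definitions of $\bar R$ and $\bar P$. You have merely inlined that lemma's proof and spelled out the absorbing-state cases, so there is nothing to add.
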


\begin{proof}
    The proof follows from \cref{lm:max Q and r} and the definition of $\bar{P}$.
\end{proof}
\begin{lemma} \label{lm:max Q and r}
For $x\in\XX$, $\bar{Q}^{\bar\pi}(x,\bar{\pi})  = \max(R(x),Q^{\pi}(x, \pi))$
\end{lemma}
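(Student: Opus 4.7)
The plan is to prove \cref{lm:max Q and r} by a simple case split on whether $x\in G$ or $x\notin G$, with the only real work being bookkeeping about at which time step the unit reward accrues on each side.

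\textbf{Case $x \in G$.} By the definition of the extension, $\bar\pi(x)=a^+$ deterministically, so $\bar Q^{\bar\pi}(x,\bar\pi)=\bar Q^{\bar\pi}(x,a^+)$. From $\bar R(x,a^+)=\one(s\in G_c)\one(a^+=a^+)=1$ and $\bar P(\cdot\mid x,a^+)$ being the point mass at $x^+\in\XX^+$, where $\bar V^{\bar\pi}\equiv 0$, I would conclude $\bar Q^{\bar\pi}(x,a^+)=1$. On the right-hand side, $R(x)=1$, so $\max(R(x),Q^\pi(x,\pi))=1$ regardless of the value of $Q^\pi(x,\pi)$, and the two sides agree.

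\textbf{Case $x \notin G$.} Now $R(x)=0$ and $\bar\pi(x)=\pi(x)$, so it suffices to show $\bar Q^{\bar\pi}(x,\pi)=Q^\pi(x,\pi)$, after which $Q^\pi\ge 0$ gives $\max(R(x),Q^\pi(x,\pi))=Q^\pi(x,\pi)$. To identify the two $Q$ values, I would invoke \cref{lm:equivalence} with $h=R$ (viewed as a function of $(x,a)$ depending only on state) and the corresponding $\tilde h^\pi$ with $\tilde h^\pi(x,a^+)=R(x)$. Along any trajectory started at $x\notin\XX^+$, in the original MDP the only nonzero contribution to $\sum_{t=0}^T\gamma^t R(x_t)$ is $\gamma^T R(x_T)=\gamma^T$ at the first hitting time $T$ of $G$; in the augmented MDP, by construction of $\bar\pi$ and $\bar P$, $a^+$ is played exactly at time $T$ (and the trajectory is absorbed thereafter), so the sum $\sum_{t=0}^\infty\gamma^t\tilde h^\pi(x_t,a_t)$ collapses to the single term $\gamma^T \tilde h^\pi(x_T,a^+)=\gamma^T$. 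Taking the expectation over the common trajectory law (which agrees with $P$ before reaching $G$) yields $Q^\pi(x,\pi)=\bar Q^{\bar\pi}(x,\pi)$.

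\textbf{Main obstacle.} The only delicate point is that in the original MDP the unit reward is paid at time $T$ as $R(x_T)$, whereas in the augmented MDP it is paid at time $T$ as $\bar R(x_T,a^+)$; one must verify these line up with the same discount factor $\gamma^T$ and are not off by one. The cleanest way to sidestep this is to apply \cref{lm:equivalence} directly rather than to reprove the alignment; the conditioning $x\notin\XX^+$ in that lemma is exactly what makes the augmented-side telescoping well defined when starting from a non-absorbed state. No other machinery is needed, and the two cases together establish the claim.
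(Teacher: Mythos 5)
Your proof is correct and follows essentially the same route as the paper's: a case split on $x\in G$ versus $x\notin G$, using the definition of $\bar\pi$ and $\bar R$, $\bar P$ to get $\bar Q^{\bar\pi}(x,a^+)=R(x)=1$ on the goal set, and the trajectory-equivalence argument (the paper cites \cref{lm:Q equivalence}, which itself rests on \cref{lm:equivalence}) to get $\bar Q^{\bar\pi}(x,\pi)=Q^\pi(x,\pi)$ off it. The only cosmetic remark is that collapsing $\max(1,Q^\pi(x,\pi))$ to $1$ uses $Q^\pi(x,\pi)\le 1$, which the paper guarantees since $Q(x,\pi)\in[0,1]$.
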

\begin{proof}
     For $x\in\XX$,
\begin{align*}
     \bar{Q}^{\bar\pi}(x,\bar{\pi}) 
     &= 
     \begin{cases}
         \bar{Q}^{\bar\pi}(x, a^+), \quad &\text{if $x\in G$}\\
         \bar{Q}^{\bar\pi}(x, \pi), \quad &\text{otherwise}
     \end{cases} &\text{(Because of definition of $\bar{\pi}$) }
     \\
     &= 
     \begin{cases}
         \bar{Q}^{\bar\pi}(x, a^+), \quad &\text{if $x\in G$}\\
         Q^{\pi}(x, \pi), \quad &\text{otherwise}
     \end{cases}    \qquad &\text{(Because of \cref{lm:Q equivalence}) }
     \\
     &=
     \begin{cases}
         \bar{R}(x,a^+ ), \quad &\text{if $x\in G$}\\
         Q^{\pi}(x, \pi), \quad &\text{otherwise}
     \end{cases}    \qquad &\text{(Definition of augmented MDP) }
  \\
  &=
     \begin{cases}
         R(x), \quad &\text{if $x\in G$}\\
         Q^{\pi}(x, \pi), \quad &\text{otherwise}
     \end{cases}  \\
     &= \max(R(x),Q^{\pi}(x, \pi))
\end{align*}
where in the last step we use $\bar{R}(x)=1$ for $x\in G$ and $\bar{R}(x) = 0$ otherwise.
\end{proof}


\subsection{Function Approximator Assumptions}

In \cref{th:main theorm pspi (informal)}, we assume access to a policy class $\Pi = \{ \pi: \XX \to \Delta(\AA)\}$.
We also assume  access to a function class $\FF = \{ f: \XX\times\AA \to [0,1]\}$ and a function class $\GG = \{ g: \XX \to [0,1]\}$. We can think of them as approximators for the Q function and the reward function of the original MDP. 

For an action value function $f:\XX\times\AA \to [0,1]$, define its extension:
\begin{align}
    \bar{f_g}(x,a) = 
    \begin{cases}
        g(x), & \text{$a= a^+$ and $x\notin\XX^+$}\\
        0, & \text{$x \in \XX^+$}\\
        f(x,a), & \text{otherwise}.
    \end{cases}
\end{align}
The extension of $f$ is based on a state value function $ g:\XX \to [0,1]$ which determines the action value of $x$ only at $a^+$. One could also view $g(x)$ as a goal indicator: after taking $a^+$ the agent would always transit to the zero-reward absorbing state $s^+$, so $g(x) = \bar{R}(x,a^+)$ which is the indicator of whether $s \in G_c$.

%

Recall the zero-reward Bellman backup operator $\TT^\pi$ with respect to $P(s'|s,a)$ as defined in \cref{as:completeness}:
\begin{align*}
    \TT^\pi f (x,a) \coloneq \gamma \E_{x'\sim P_0(\cdot|x,a)} [ f(x',\pi) ]
\end{align*}
where $P_0(x'|x,a) \coloneq P(s'|s,a) \one(c'=c)$. Note this definition is different from the one with absorbing state $s^+$ in \cref{sec:formulation}. Using this modified backup operator, we can show that the following realizability assumption is true for the augmented MDP: 

\begin{proposition}[Realizability] \label{lm:realizability}
By \cref{as:realizability} and \cref{as:completeness}, there is $f\in\FF$ and $g\in\GG$ such that $\bar{Q}^{\bar{\pi}} = \bar{f}_g$. 
\end{proposition}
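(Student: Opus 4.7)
The plan is to exhibit explicit $f\in\FF$ and $g\in\GG$ such that $\bar f_g = \bar Q^{\bar\pi}$ on $\bar\XX\times\bar\AA$, then verify this equality branch by branch. Reading off the three branches in the definition of $\bar f_g$, the requirement is (i)~$g(x) = \bar Q^{\bar\pi}(x,a^+)$ for $x\in \XX$, (ii)~$f(x,a) = \bar Q^{\bar\pi}(x,a)$ for $x\in \XX$ and $a\in\AA$, and (iii)~$\bar Q^{\bar\pi}(x,a)=0$ for $x\in \XX^+$, which holds by \cref{eq:Bellman eqation mod}. Since \cref{eq:Bellman eqation mod} also yields $\bar Q^{\bar\pi}(x,a^+)=R(x)$, the natural guess is $g:=R$. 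For the harder piece, \cref{lm:max Q and r} identifies the bootstrap target $\bar Q^{\bar\pi}(x',\bar\pi)$ on $\XX$ as $\max(R(x'),Q^\pi(x',\pi))$, so I take $f:=\TT^\pi h$ with $h:=\max\bigl(R,\,Q^\pi(\cdot,\pi)\bigr)$.

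The first step is to verify the candidates live in the prescribed classes, chaining \cref{as:realizability} with \cref{as:completeness}. Realizability gives $R\in\GG$, so $g\in\GG$, and $Q^\pi\in\FF$. Viewing $h$ as a function on $\XX\times\AA$ that is constant in $a$, the first clause of completeness applied to $f=Q^\pi$, $g=R$, and $\pi\in\Pi$ places $h\in\FF$; the second clause then gives $f=\TT^\pi h\in\FF$.

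The second step is the pointwise verification $\bar Q^{\bar\pi}=\bar f_g$. Case (iii) is immediate. Case (i) reduces to $\bar Q^{\bar\pi}(x,a^+)=R(x)=g(x)=\bar f_g(x,a^+)$. For case (ii), \cref{eq:Bellman eqation mod} gives $\bar Q^{\bar\pi}(x,a)=\gamma\,\E_{x'\sim\bar P(\cdot|x,a)}[\max(R(x'),Q^\pi(x',\pi))]$, and since the augmented transition under a real action is $\bar P(x'|x,a)=P(s'|s,a)\one(c'=c)$---exactly the kernel used to define $\TT^\pi$---this expectation equals $\TT^\pi h(x,a)=f(x,a)=\bar f_g(x,a)$.

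The subtlety I expect to matter is the choice of argument to $\TT^\pi$: one must feed it the specific expression $\max(R,Q^\pi(\cdot,\pi))$ because \cref{lm:max Q and r} shows this is exactly the bootstrap target $\bar Q^{\bar\pi}(\cdot,\bar\pi)$ appearing in the augmented Bellman equation, and \cref{as:completeness} is stated with precisely this $\max$ operation so that the resulting $h$ lies in $\FF$ without needing $Q^\pi(\cdot,\pi)\in\FF$ as a separate hypothesis. Once this alignment between the statement of completeness and the form of the augmented Bellman equation is recognized, the rest reduces to bookkeeping over the piecewise definitions of $\bar P$, $\bar R$, and $\bar f_g$.
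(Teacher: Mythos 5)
Your proposal is correct and follows essentially the same route as the paper: take $g=R$ and $f=\TT^\pi h$ with $h=\max(R,Q^\pi(\cdot,\pi))$, obtain membership in $\GG$ and $\FF$ from Assumptions~\ref{as:realizability} and~\ref{as:completeness}, and verify the three branches via Proposition~\ref{eq:Bellman eqation mod}. If anything, your version is slightly more careful than the paper's, which writes $h(x,a)=\max(R(x),Q^\pi(x,a))$ before bootstrapping with $h(x,\pi)$, whereas you use the form $\max(R(x),Q^\pi(x,\pi))$ that exactly matches the completeness assumption.
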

\begin{proof}
By \cref{as:completeness}, there is $h\in\FF$ such that $h(x,a) =  \max(R(x),Q^{\pi}(x, a)) $.
By \cref{eq:Bellman eqation mod}, we have for $x\in\XX$, $a\neq a^+$
\begin{align*}
    \bar{Q}^{\bar\pi}(x,a) 
    &= 0 + \gamma \E_{x'\sim \bar{P}(\cdot|x,a)} [   \max(R(x'),Q^{\pi}(x', \pi))  ]    \\
    &= 0 + \gamma \E_{x'\sim P_0(\cdot|x,a)} [  h(x,\pi)  ]  \\
    &= \TT^\pi h \in \FF
\end{align*}
For $a=a^*$, we have
$ \bar{Q}^{\bar\pi}(x,a^*) = \bar{R}(x,a^+) = R(x) \in \GG$. 
Finally $\bar{Q}^{\bar\pi}(x^+,a) = 0$ for $x^+ \in \XX^+$. 
Therefore, $\bar{Q}^{\bar{\pi}} = \bar{f}_g$ for some $f\in\FF$ and $g\in\GG$.
\end{proof}

\subsection{CODA+PSPI Algorithm}
\label{app:sds+pspi}

In this section, we describe the instantiation of PSPI with \algo in detail along with the necessary notation. The main theoretical result and its proof is then given in Section \ref{sec:analysis_app}. As discussed in \cref{sec:analysis}, our algorithm is based on the idea of reduction, which turns the offline CGO problem into a standard offline RL problem in the augmented MDP. To this end, we construct augmented datasets $\bar D_\dyn$ and $\bar D_\goal$ in \cref{alg:sds} as follows:
\begin{align*}
    \bar{D}_\dyn &= \{ (x_n,a_n,r_n,x_n') |  r_n=0, x_n = (s_i, c_j), x_n' = (s_i', c_j), a_n = a_i, (s_i,a_i,s_i') \in D_\dyn, (\cdot, c_j) \in D_\goal \}\\
    \bar{D}_\goal &= \{ (x_n, a^+,r_n, x_n^+) | r_n=1, x_n = (s_n, c_n), x_n^+ = (s^+, c_n), (s_n, c_n)\in D_\goal  \}
\end{align*} 
With this construction, we have: $\bar{D}_\dyn \sim \mu_\dyn(s,a,s') \mu_\goal(c)$ and $\bar{D}_\goal \sim \mu_\goal(c,s) \one(a=a^+)\one(s'=s^+) $.
We use the notation, $\bar\mu_\dyn(x,a,x') = \mu_\dyn(s,a,s') \mu_\goal(c)$ and $\bar \mu_\goal(x,a,x') = \mu_\goal(c,s) \one(a=a^+)\one(s'=s^+)$. We will also use the notation $x_{ij}\equiv(s_i,c_j)$, $x'_{ij}\equiv (s'_i,c_j)$ in the above construction.
These two datasets have the standard tuple format, so we can run offline RL on $\bar{D}_\dyn \bigcup \bar{D}_\goal$. Also, note that $|\bar D_{\dyn}| = |D_\dyn||D_\goal|$ and $|\bar D_\goal| = |D_\goal|$.

\paragraph{PSPI.}
We consider the information theoretic version of PSPI~\citep{xie2021bellman} which can be summarized as follows: For an MDP $(\XX,\AA,R,P,\gamma)$, given a tuple dataset $D =\{ (x,a,r,x')\}$, a policy class $\Pi$, and a value class $\FF$, it finds the policy through solving the two-player game:
\begin{align}\label{eq:pspi_app}
    \max_{\pi\in\Pi}  \min_{f\in\FF} \quad f(d_0, \pi) \qquad \text{s.t.}  \qquad 
    \ell(f,f;\pi,D) - \min_{f'\in\FF} \ell(f',f;\pi,D) \leq \epsilon_b
\end{align}
where $f(d_0, \pi) = \E_{x_0 \sim d_0}[f(x_0, \pi)]$, $\ell(f,f';\pi,D) \coloneq \frac{1}{|D|} \sum_{(x,a,r,x')\in D}  (f(x,a) - r - f'(x',\pi))^2
$.
The term $\ell(f,f;\pi,D) - \min_{f'} \ell(f',f;\pi,D)$ in the constraint is an empirical estimation of the Bellman error on $f$ with respect to $\pi$ on the data distribution $\mu$, i.e. $ \E_{x,a\sim\mu} [(f(x,a) - \TT^\pi f(x,a))^2]$. It constrains the Bellman error to be small, since $ \E_{x,a\sim\mu} [(Q^\pi(x,a) - \TT^\pi Q^\pi(x,a))^2] = 0$.

\paragraph{CODA+PSPI.}
Below we show how to run PSPI to solve the augmented MDP with offline dataset $\bar{D}_\dyn \bigcup \bar{D}_\goal$.
To this end, we extend the policy class from $\Pi$ to $\bar{\Pi}$, and the value class from $\FF$ to $\bar{\FF}_\GG$ using the function class $\GG$ based on the extensions defined in \cref{sec:augmented_mdp}.
One natural attempt is to implement \eqref{eq:pspi_app} with the extended policy and value classes  $\bar{\Pi}$ and $\bar{\FF}$ and $\bar D = \bar{D}_\dyn  \bigcup \bar{D}_\goal$.
This would lead to the two player game: 
\begin{align}\label{eq:pspi (augemnted)}
    \max_{\bar\pi\in\bar\Pi} \min_{\bar{f}_g\in\bar\FF_\GG} \quad 
    \bar{f}_g (d_0, \bar\pi) \qquad \text{s.t.}  \qquad 
    \ell( \bar{f}_g, \bar{f}_g ;\bar\pi,\bar{D}) - \min_{\bar{f}_{g'}'\in\bar\FF_\GG} \ell(\bar{f}_{g'}',\bar{f}_g;\bar\pi,\bar{D}) \leq \epsilon_b
\end{align}
However, \eqref{eq:pspi (augemnted)} is not a well-defined algorithm, because its usage of the extended policy $\bar\pi$ in the constraint requires knowledge of $G$, which is unknown to the agent. 

Fortunately, we show that \eqref{eq:pspi (augemnted)} can be slightly modified so that the implementation does not actually require knowing $G$. 
Here we use a property (\cref{eq:Bellman eqation mod}) that the Bellman equation of the augmented MDP: 
\begin{align*}
    \bar{Q}^{\bar\pi}(x,a) 
    &= \bar{R}(x,a) + \gamma \mathbb{E}_{x'\sim \bar{P}(\cdot|x,a)} [ \bar{Q}^{\pi}(x',\bar{\pi}) ] \\
    &= 0 + \gamma \E_{x'\sim \bar{P}(\cdot|x,a)} [   \max(R(x'),Q^{\pi}(x', \pi))  ]    
\end{align*}
for $x\in\XX$ and $a\neq a^+$, and  $    \bar{Q}^{\bar\pi}(x,a) = 1 $ for $x\in G$ and $a=a^+$.

We can rewrite the squared Bellman error on these two data distributions, $\bar D_\dyn$ and $\bar D_\goal$, using the Bellman backup defined on the augmented MDP (see eq.\ref{eq:augment_BE}) as below:
\begin{align}
 \E_{\mu_\dyn} [(\bar{Q}^{\bar\pi}(x,a) - \bar\TT^{\bar\pi} \bar{Q}^{\bar\pi}(x,a))^2] \nonumber
 = {} &  
 \E_{\mu_\dyn} [(\bar{Q}^{\bar\pi}(x,a) - 0 - \gamma \E_{x'\sim \bar{P}(\cdot|x,a)} [ \max(R(x),Q^{\pi}(x, \pi)) ] )^2]
\end{align}
\begin{align}
 \E_{x,a\sim\mu_\goal} [(\bar{Q}^{\bar\pi}(x,a) - \bar\TT^{\bar\pi} \bar{Q}^{\bar\pi}(x,a))^2] 
 = {} & \E_{x,a\sim\mu_\goal} [(\bar{Q}^{\bar\pi}(x,a^+) - 1)^2] \nonumber
\end{align}

We can construct an approximator $\bar{f}_g(x,a)$ for $\bar{Q}^{\bar\pi}(x,a)$. 
Substituting the estimator $\bar{f}_g(x,a)$  for $\bar{Q}^{\bar\pi}(x,a)$ in the squared Bellman errors above and approximating them by finite samples, we derive the empirical losses below. 
 
\begin{align} \label{eq:empirical losses app}
    \ell_\dyn(\bar{f}_g,\bar{f}'_{g'}; \bar\pi) &\coloneq \frac{1}{|\bar D_\dyn|} \sum_{(x,a,r,x')\in \bar D_\dyn} ( f(x,a) - \gamma    \max(g'(x'), f'(x', \pi))    )^2\\
    \ell_\goal(\bar{f}_g) &\coloneq \frac{1}{|\bar D_\goal|} \sum_{(x,a,r,x')\in \bar D_\goal} ( g(x) -1   )^2
\end{align}
where we have $\bar{f}_g(x,a) = f(x,a) \one(a\neq a^+) + g(x) \one(a=a^+)$ for $x\notin\XX^+$. 

Using this loss, we define the two-player game of PSPI for the augmented MDP: 
\begin{align}\label{eq:pspi (augemnted) (real) app}
    &\max_{\pi\in\Pi} \min_{\bar{f}_g\in\bar\FF} \bar{f}_g (d_0, \bar\pi) \qquad \\
    \text{s.t.} \quad  & \ell_\dyn( \bar{f}_g, \bar{f}_g ;\bar\pi) - \min_{\bar{f}_{g'}'\in\bar\FF} \nonumber \ell_\dyn(\bar{f}_{g'}',\bar{f}_g;\bar\pi) \leq \epsilon_\dyn \\
    &  \ell_\goal (\bar{f}_g) \leq  0 \nonumber
\end{align}
Notice $\bar{f}_g (d_0, \bar\pi) = f(d_0, \pi)$. Therefore, this problem can be solved using samples from $D$ without knowing $G$.

\subsection{Analysis of CODA+PSPI}
\label{sec:analysis_app}

\paragraph{Covering number.}
We first define the covering number on the function classes $\FF$, $\GG$, and $\Pi$\footnote{For finite function classes, the resulting performance guarantee will depend on $|\FF|, |\GG|$ and $|\Pi|$ instead of the covering numbers as stated in \cref{th:main theorm pspi (informal)}.}. 
For $\FF$ and $\GG$, we use the $L_\infty$ metric. We use $\NN_\infty(\FF,\epsilon)$ and $\NN_\infty(\GG,\epsilon)$ to denote the their $\epsilon$-covering numbers. 
For $\Pi$, we use the $L_\infty$-$L_1$ metric, i.e., $\| \pi_1 -  \pi_2\|_{\infty,1} \coloneq  \sup_{x\in\XX} \| \pi_1(\cdot|s) - \pi_2(\cdot|s) \|_1$. We use $\NN_{\infty,1}(\Pi,\epsilon)$ to denote its $\epsilon$-covering number. 

\paragraph{High-probability events.}
In CODA+PSPI (eq. \ref{eq:pspi (augemnted) (real) app}), we choose the policy in class $\pi$ which has the best \emph{pessimistic} value function estimate. In order to show this, we will need two high probability results (we defer their proofs to Section~\ref{sec:hp_proof}). To that end, we will use the following notation for the expected value of the empirical losses:
\begin{align*}
    \ell_{\bar \mu_{\dyn}}(\bar{f}_g,\bar{f}'_{g'}; \bar\pi) &\coloneq \E_{(x,a,x') \sim \bar\mu_{\dyn}} ( f(x,a) - \gamma    \max(g'(x'), f'(x', \pi))    )^2\\
    \ell_{\bar \mu_{\goal}}(\bar{f}_g) &\coloneq \E_{(x,a^+,x^+)\sim \bar \mu_{\goal}} ( g(x) -1   )^2
\end{align*}

First, we show that for any policy $\pi \in \Pi$, the true value function $\bar{Q}^{\bar{\pi}}$ satisfies the two empirical constraints specified in eq.~\eqref{eq:pspi (augemnted) (real) app}.
\begin{lemma}\label{lm:Qpi in the set}
With probability at least $1-\delta$, it holds for all $\pi\in\Pi$,
\begin{align*}
    \ell_\dyn( \bar{Q}^{\bar{\pi}},  \bar{Q}^{\bar{\pi}} ;\bar\pi) - \min_{\bar{f}_{g'}'\in\bar\FF} \ell_\dyn(\bar{f}_{g'}', \bar{Q}^{\bar{\pi}} ;\bar\pi) \leq {} O\left( \left( \sqrt{\frac{\square} {|D_\dyn|}} +  \sqrt{\frac{\square} {|D_\goal|}}\right)^2 \right)
\end{align*}
\begin{align*}
    \ell_\goal (\bar{Q}^{\bar{\pi}}) \leq {} 0  
\end{align*}
where\footnote{Technically, we can remove $ \NN_\infty\left(\GG, \frac{1}{|D_\dyn||D_\goal|}\right) $ in the upper bound, but we include it here for a cleaner presentation.} $\square \equiv \log\left(\frac{\NN_\infty\left(\FF, 1/|D_\goal||D_\dyn|\right) \NN_\infty\left(\GG, 1/|D_\goal||D_\dyn|\right) \NN_{\infty,1}\left(\Pi,1/|D_\goal||D_\dyn|\right)}{\delta}\right)$.
\end{lemma}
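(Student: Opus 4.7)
The claim splits into two pieces, one deterministic and one probabilistic, which I would handle separately.

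For the second inequality $\ell_\goal(\bar{Q}^{\bar\pi}) \le 0$ there is nothing to probabilify. Every tuple in $\bar D_\goal$ has the form $(x,a^+,1,x^+)$ with $x\in G$, since $\mu_\goal$ is supported on $G$ by the setup of \cref{sec:offline_setup}. By the construction used in the proof of \cref{lm:realizability}, the $g$-component of the realization $\bar{Q}^{\bar\pi}=\bar f_g$ satisfies $g(\cdot)=R(\cdot)$, so $g(x)=1$ for every $x\in G$. Each summand $(g(x_n)-1)^2$ in $\ell_\goal(\bar Q^{\bar\pi})$ is therefore exactly zero and the bound holds deterministically.

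For the first inequality, the plan is to run the standard ``fast-rate'' offset analysis for squared-loss Bellman regression, adapted to the bipartite product structure of $\bar D_\dyn=D_\dyn\times D_\goal$. Fix $\pi\in\Pi$ and abbreviate $f^\star\equiv\bar{Q}^{\bar\pi}$. The Bellman identity of \cref{eq:Bellman eqation mod} together with realizability (\cref{lm:realizability}) yields the population offset identity
\[
\ell_{\bar\mu_\dyn}(\bar f'_{g'},f^\star;\bar\pi)-\ell_{\bar\mu_\dyn}(f^\star,f^\star;\bar\pi)=\bigl\|\bar f'_{g'}-\bar{\TT}^{\bar\pi}f^\star\bigr\|_{\bar\mu_\dyn}^{2}\ge 0,
\]
so $f^\star$ is the population minimizer and the population offset at $f^\star$ is exactly zero. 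Writing $\psi_{f',g'}(s,a,s',c)$ for the pointwise squared-loss difference between $\bar f'_{g'}$ and $f^\star$, the variance of $\psi_{f',g'}$ under $\bar\mu_\dyn$ is $O(\|\bar f'_{g'}-\bar{\TT}^{\bar\pi}f^\star\|_{\bar\mu_\dyn}^{2})$ because all functions in $\bar\FF_\GG$ take values in $[0,1]$. A Bernstein inequality then gives the fast-rate deviation $|\text{emp}-\text{pop}|\le\tfrac12\|\bar f'_{g'}-\bar{\TT}^{\bar\pi}f^\star\|_{\bar\mu_\dyn}^{2}+O(\log/n)$, and the usual algebra at the empirical minimizer closes to $\ell_\dyn(f^\star,f^\star;\bar\pi)-\min_{\bar f'_{g'}}\ell_\dyn(\bar f'_{g'},f^\star;\bar\pi)\le O(\log/n)$.

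The non-routine step is determining the correct $n$. Since $\bar D_\dyn$ is a Cartesian product of two independent samples, I would write $\psi_{f',g'}(s,a,s',c)-\E\psi_{f',g'}$ by a Hoeffding projection into a $D_\dyn$-average of the marginal $\E_c\psi_{f',g'}-\E\psi_{f',g'}$, a $D_\goal$-average of the marginal $\E_{s,a,s'}\psi_{f',g'}-\E\psi_{f',g'}$, and a doubly-centered cross term. The two marginals inherit the variance bound $\|\bar f'_{g'}-\bar{\TT}^{\bar\pi}f^\star\|_{\bar\mu_\dyn}^{2}$, so Bernstein applied separately delivers $O(\log/|D_\dyn|)$ and $O(\log/|D_\goal|)$ fast-rate contributions; the cross term is degenerate (both marginal expectations vanish) and by conditioning on $D_\dyn$ and then applying Bernstein over $D_\goal$ contributes a lower-order term absorbed into $O(1/|D_\dyn|+1/|D_\goal|)$. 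A uniform bound over $\bar f'_{g'}\in\bar\FF_\GG$ and $\pi\in\Pi$ follows from $L_\infty$ covers of $\FF$ and $\GG$ and the $L_{\infty,1}$ cover of $\Pi$ at resolution $1/(|D_\dyn||D_\goal|)$; Lipschitz estimates on $\psi_{f',g'}$ absorb the discretization error into the target rate and produce the $\log\square$ factor in the statement.

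The main obstacle is preserving the \emph{fast} rate $1/|D_\dyn|+1/|D_\goal|$ under the bipartite sample structure: a direct Bernstein over the $|D_\dyn||D_\goal|$-element average would incorrectly treat the data as i.i.d., while a naive triangle-inequality combination of marginal concentrations would lose the critical coupling between $\mathrm{Var}(\psi_{f',g'})$ and the squared Bellman error that drives the offset algebra. The careful bookkeeping is to verify that the Hoeffding projections of $\psi_{f',g'}$ retain the variance proxy and that the cross term is genuinely lower order, so that the offset argument closes into the claimed $(\sqrt{\log\square/|D_\dyn|}+\sqrt{\log\square/|D_\goal|})^{2}$ upper bound.
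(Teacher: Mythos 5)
Your proposal is correct and follows essentially the same route as the paper: the $\ell_\goal$ bound holds deterministically because $g=R\equiv 1$ on the support of $\mu_\goal$, and the $\ell_\dyn$ bound comes from realizability plus the zero population Bellman error of $\bar Q^{\bar\pi}$, combined with the offset fast-rate Bernstein/covering argument of Xie et al.'s Theorem A.1 and a product-structure concentration bound for $\bar D_\dyn=D_\dyn\times D_\goal$. Your Hoeffding projection into a $D_\goal$-marginal, a $D_\dyn$-marginal, and a degenerate cross term is just a symmetrized rewriting of the paper's sequential conditioning decomposition (Lemma~\ref{lem:sq_err_diff_dyn}), and yields the same three contributions $\square/|D_\goal|$, $\square/|D_\dyn|$, and $\square/\sqrt{|D_\goal||D_\dyn|}$ with the variance proxy $\|f_1-f_2\|_{\bar\mu_\dyn}$ preserved.
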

We use the notation $\epsilon_\dyn \coloneq \left( \sqrt{\frac{\square} {|D_\dyn|}} +  \sqrt{\frac{\square} {|D_\goal|}}\right)^2$ for the first upper bound in Lemma~\ref{lm:Qpi in the set}.

Next, we show that for every pair of value function $\bar{f}_g \in \bar \FF$ and policy $\bar \pi \in \bar \Pi$ which satisfies the constraints in eq.~\eqref{eq:pspi (augemnted) (real) app}, the empirical estimates provide a bound on the population error with high probability.
\begin{lemma}\label{lm:generalization error)}
For all $f\in\FF,g\in\GG$ and $\pi \in \Pi$ satisfying 
\begin{align*}
        &\ell_\dyn( \bar{f}_g, \bar{f}_g ;\bar\pi) - \min_{\bar{f}_{g'}'\in\bar\FF} \ell_\dyn(\bar{f}_{g'}',\bar{f}_g;\bar\pi) \leq \epsilon_\dyn
\end{align*}
\begin{align*}
    &\ell_\goal (\bar{f}_g) \leq  0,
\end{align*}
with probability at least $1-\delta$, we have:
\begin{align*}
    &\left\| \bar{f}_g(x,a) - \gamma \E_{x'\sim \bar{P}(\cdot|x,a)} \left[\max(g(x'), f(x',\pi))\right] \right\|_{\bar \mu_\dyn}
    \leq O\left( \sqrt{\epsilon_\dyn} \right) 
\end{align*}
\begin{align*}
    &\left\| g(x) - 1\right\|_{\bar \mu_\goal}
    \leq O\left(   \sqrt{\frac{\log \frac{ \NN_\infty(\GG, 1/|D_\goal|)}{\delta} }{|D_\goal|}} \right) \eqcolon \sqrt{\epsilon_\goal}
\end{align*}
\end{lemma}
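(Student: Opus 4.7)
\textbf{Proof plan for Lemma~\ref{lm:generalization error)}.}

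The strategy is to translate the two empirical constraints in \eqref{eq:pspi (augemnted) (real) app} into population $L^2$ bounds by a uniform concentration argument over $\FF$, $\GG$, and $\Pi$. This is essentially the offline-RL least-squares template of \citet{xie2021bellman}, adapted to the augmented MDP, so I would proceed by (i) first establishing the concentration of $\ell_\dyn$ and $\ell_\goal$ around their population counterparts on a single $(f,g,\pi)$ triple, (ii) lifting to a union bound via $L^\infty$ (resp.\ $L^{\infty,1}$) covers of the classes at resolution $1/(|D_\dyn||D_\goal|)$, and (iii) invoking realizability/completeness (\cref{lm:realizability} and \cref{as:completeness}) to identify the empirical excess risk with a squared Bellman residual.

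For the dynamics bound, the key identity is the bias--variance decomposition: for the random target $Y(x,a,x')\coloneqq \gamma\max(g(x'),f(x',\pi))$, one has
\begin{align*}
\ell_{\bar\mu_\dyn}(\bar f_g,\bar f_g;\bar\pi)-\min_{\bar f'_{g'}\in\bar\FF_\GG}\ell_{\bar\mu_\dyn}(\bar f'_{g'},\bar f_g;\bar\pi)
=\bigl\|\bar f_g(x,a)-\E[Y\mid x,a]\bigr\|_{\bar\mu_\dyn}^2,
\end{align*}
provided the conditional expectation $\E[Y\mid x,a]=\gamma\E_{x'\sim\bar P(\cdot\mid x,a)}[\max(g(x'),f(x',\pi))]$ is realizable in $\bar\FF_\GG$, which follows from Assumptions~\ref{as:realizability}--\ref{as:completeness} (the $\max$ of $g$ and $f(\cdot,\pi)$ lives in $\FF$, and one $\TT^\pi$ backup keeps us in $\FF$). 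To obtain a finite-sample analog, I would use the standard ``square-loss difference'' trick $(A-C)^2-(B-C)^2=(A-B)(A+B-2C)$ with $A=\bar f_g$, $B=\bar f'_{g'}$, $C=Y$, and apply a Bernstein-type inequality whose variance term is itself controlled by the squared Bellman residual; this is the same machinery used in \citet{xie2021bellman}. Discretizing $\FF,\GG,\Pi$ at scale $1/(|D_\dyn||D_\goal|)$ and taking a union bound yields a finite-sample excess-risk bound of order $\epsilon_\dyn$, and combining with the decomposition above gives $\|\bar f_g-\bar\TT^{\bar\pi}\bar f_g\|_{\bar\mu_\dyn}=O(\sqrt{\epsilon_\dyn})$, which is exactly the claim after unfolding $\bar\TT^{\bar\pi}$ via \cref{eq:Bellman eqation mod}.

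For the goal bound, the constraint $\ell_\goal(\bar f_g)\le 0$ together with non-negativity of each summand forces $g(x_n)=1$ for every $(s_n,c_n)\in D_\goal$, so the empirical mean of the non-negative, $[0,1]$-valued random variable $(g(x)-1)^2$ is exactly zero. A multiplicative Chernoff bound (equivalently the classical ``fast rate'' for realizable classification) gives that, for a fixed $g$, $\E_{\bar\mu_\goal}[(g(x)-1)^2]\lesssim \log(1/\delta)/|D_\goal|$ with probability $1-\delta$. Taking a union bound over a $1/|D_\goal|$-cover of $\GG$ in $L^\infty$ (and absorbing the discretization error since it is $O(1/|D_\goal|)$) produces the claimed $\sqrt{\log(\NN_\infty(\GG,1/|D_\goal|)/\delta)/|D_\goal|}$ rate.

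The main technical obstacle is the non-standard Bellman target $\gamma\max(g(x'),f(x',\pi))$ appearing under $\ell_\dyn$: the usual offline-RL least-squares argument is stated for targets of the form $r+\gamma f(x',\pi)$, so one must verify that the bias--variance decomposition, the concentration of the squared-loss difference, and the completeness hypothesis all go through with the $\max$ in place. This reduces to re-expressing the augmented Bellman operator via \cref{eq:Bellman eqation mod} and then invoking the combined completeness statement in Assumption~\ref{as:completeness} that $\max(g,f(\cdot,\pi))\in\FF$, after which the analysis is essentially identical to the standard PSPI template. A secondary point to be careful about is that $\bar\mu_\dyn$ factors as $\mu_\dyn(s,a,s')\mu_\goal(c)$, which is precisely the distribution of $\bar D_\dyn$ in \cref{alg:sds}, so the empirical process is genuinely i.i.d.\ and the covering-number union bound does not require anything beyond the $\log(\NN_\infty(\FF,\cdot)\NN_\infty(\GG,\cdot)\NN_{\infty,1}(\Pi,\cdot)/\delta)$ factor already absorbed into $\epsilon_\dyn$.
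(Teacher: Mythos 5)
Your overall template (bias--variance decomposition of the excess squared loss, Bernstein plus a $1/(|D_\dyn||D_\goal|)$-scale cover of $\FF\times\GG\times\Pi$, completeness to realize the conditional mean of the $\max$-target, and a realizable fast rate for the goal constraint) is the same route the paper takes, following Lemma A.5 of \citet{xie2021bellman}; the goal-side argument in particular is correct as written. However, your "secondary point" contains a genuine error that breaks the dynamics-side concentration step: $\bar D_\dyn$ is \emph{not} an i.i.d.\ sample from $\bar\mu_\dyn=\mu_\dyn\times\mu_\goal$. Algorithm~\ref{alg:sds} forms $\bar D_\dyn$ as the full Cartesian product of the $N=|D_\dyn|$ transitions with the $M=|D_\goal|$ contexts, so each transition is reused across all $M$ contexts and each context across all $N$ transitions. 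The $NM$ tuples are marginally distributed as $\bar\mu_\dyn$ but are strongly dependent, so a direct Bernstein bound over $NM$ "samples" is invalid; if it were valid it would give an excess-risk rate of $\square/(NM)$, whereas the correct rate --- and the one the lemma's $\epsilon_\dyn$ actually encodes --- is $\bigl(\sqrt{\square/N}+\sqrt{\square/M}\bigr)^2$, which is much larger.

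The missing idea is the two-stage decomposition in the paper's Lemma~\ref{lem:sq_err_diff_dyn}: introduce the intermediate quantity $\frac{1}{M}\sum_{j}\ell^j_{\mu_\dyn}$, the average over the \emph{observed} contexts $c_j$ of the per-context population loss. One then bounds (i) the gap between the $\bar\mu_\dyn$-population loss and this context-averaged quantity by Bernstein over the $M$ i.i.d.\ contexts, yielding the $\sqrt{\square/M}$ term, and (ii) the gap between the context-averaged population loss and the empirical loss on $\bar D_\dyn$ by concentration over the $N$ i.i.d.\ transitions uniformly in $c_j$, yielding the $\sqrt{\square/N}$ term (plus a $\square/\sqrt{NM}$ cross term via Hoeffding on $\frac1M\sum_j\|\tilde f_1-\tilde f_2\|_{\mu_\dyn\times\{c_j\}}$). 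Once you replace your single union-bounded Bernstein step with this decomposition, the rest of your argument --- identifying the population excess risk with $\|\bar f_g-\bar\TT^{\bar\pi}\bar f_g\|^2_{\bar\mu_\dyn}$ via \cref{eq:Bellman eqation mod} and the completeness assumption on $\max(g,f(\cdot,\pi))$ --- goes through as you describe.
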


\paragraph{Pessimistic estimate.}
Our next step is to show that the solution of the constrained optimization problem in equation \ref{eq:pspi (augemnted) (real) app} is pessimistic and that the amount of pessimism is bounded. 
\begin{lemma}
Given $\pi$, let $\bar{f}_g^{\pi}$ denote the minimizer in \eqref{eq:pspi (augemnted) (real) app}. 
With high probability, 
    $\bar{f}_g^\pi (d_0, \bar\pi) \leq Q^{\pi}(d_0, \pi) $
\end{lemma}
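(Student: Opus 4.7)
The plan is to show that the true value function $\bar{Q}^{\bar\pi}$ of the extended policy is a feasible point of the inner constrained minimization in \eqref{eq:pspi (augemnted) (real) app}, so the minimum objective value is at most $\bar{Q}^{\bar\pi}(d_0,\bar\pi)$, which in turn equals $Q^\pi(d_0,\pi)$ by the equivalence between the original and the action-augmented MDP.

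First, I would invoke Proposition~\ref{lm:realizability}: under Assumptions~\ref{as:realizability} and~\ref{as:completeness}, there exist $f^\star\in\FF$ and $g^\star\in\GG$ with $\bar{Q}^{\bar\pi}=\bar{f}^\star_{g^\star}$, so $\bar{Q}^{\bar\pi}$ lies in the extended value class $\bar\FF_\GG$ and is a valid candidate for the PSPI program. Second, apply Lemma~\ref{lm:Qpi in the set}: with probability at least $1-\delta$ (uniformly over $\pi\in\Pi$), both empirical constraints are met by $\bar{Q}^{\bar\pi}$---the squared-Bellman-error slack on $\bar D_\dyn$ is bounded by the tolerance $\epsilon_\dyn$ that is baked into \eqref{eq:pspi (augemnted) (real) app}, and $\ell_\goal(\bar{Q}^{\bar\pi})\leq 0$ holds exactly because $\bar{Q}^{\bar\pi}(x,a^+)=1$ for every $(x,a^+,1,x^+)\in\bar D_\goal$ by construction of the augmented reward.

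Since $\bar{Q}^{\bar\pi}$ belongs to the feasible set and $\bar{f}_g^\pi$ is the minimizer of $\bar{f}_g(d_0,\bar\pi)$ over that set, I get
\[
\bar{f}_g^\pi(d_0,\bar\pi)\;\leq\;\bar{Q}^{\bar\pi}(d_0,\bar\pi).
\]
Proposition~\ref{lm:Q equivalence} then gives $\bar{V}^{\bar\pi}(x)=V^\pi(x)$ for all $x\in\XX$; assuming $d_0$ is supported on $\XX$ (initial states outside $G$, otherwise the claim is trivial at those states), this yields $\bar{Q}^{\bar\pi}(d_0,\bar\pi)=V^\pi(d_0)=Q^\pi(d_0,\pi)$, chaining through to the stated pessimistic bound.

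The only nontrivial step is the probabilistic feasibility of $\bar{Q}^{\bar\pi}$, which is exactly the content of Lemma~\ref{lm:Qpi in the set} and rests on a uniform-concentration argument over the covering numbers of $\FF,\GG,\Pi$ combined with the two independent samples $D_\dyn$ and $D_\goal$. Conditional on that lemma, the pessimism claim is an immediate consequence of the standard PSPI ``minimum-over-a-feasible-realizable-function'' argument together with the regret equivalence from Appendix~\ref{app:augmented_mdp}; no new technical ingredient is required here.
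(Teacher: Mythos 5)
Your proposal is correct and follows essentially the same route as the paper's proof: feasibility of the true $\bar{Q}^{\bar\pi}$ (via realizability and Lemma~\ref{lm:Qpi in the set}) gives $\bar{f}_g^\pi(d_0,\bar\pi)\leq\bar{Q}^{\bar\pi}(d_0,\bar\pi)$, and the augmented-MDP equivalence converts this to $Q^\pi(d_0,\pi)$. You are merely more explicit than the paper in citing Proposition~\ref{lm:realizability} and Proposition~\ref{lm:Q equivalence}, which is fine.
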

\begin{proof}
By \cref{lm:Qpi in the set}, for any policy $\pi \in \Pi$, we know that $\bar Q^{\bar \pi}_R$ satisfies the constraints in equation \eqref{eq:pspi (augemnted) (real) app}. Therefore, we have
\begin{align*}
    \bar{f}_g^\pi (d_0, \bar\pi) \leq \bar{Q}_R^{\pi}(d_0, \bar\pi) = Q^{\pi}(d_0, \pi). 
\end{align*}
\end{proof}

We will now bound the amount of underestimation for the minimizer $\bar f^\pi_g$ in the above lemma.
\begin{lemma}
Suppose $x_0\sim d_0$ is not in $G$ almost surely. For any $\pi\in\Pi$,
\begin{align*}
    &Q^\pi(d_0, \pi) - \bar{f}_g^\pi(d_0, \bar\pi) \\
    &\leq \E_{\pi} \left[ \sum_{t=0}^{T-1} \gamma^{t}  \left( \gamma \max(g^\pi(x_{t+1}), f^\pi(x_{t+1}, \pi)) - f^\pi(x_t,a_t)\right)  + \gamma^T ( R(x_T) - g^\pi(x_T) )\right]
\end{align*}
Note that in a trajectory $x_T\in G$ whereas $x_t\notin G$ for $t<T$ by definition of $T$.
\end{lemma}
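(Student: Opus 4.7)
Since $d_0$ is supported outside $G$, for $x_0 \sim d_0$ we have $\bar\pi(x_0) = \pi(x_0)$ and, by the extension rule of \cref{sec:augmented_mdp}, $\bar{f}_g^\pi(x_0,\bar\pi) = f^\pi(x_0,\pi)$ (the fictitious action $a^+$ is never taken on $\{x_0\notin G\}$, so the $g^\pi$ branch of $\bar{f}_g^\pi$ is irrelevant at $x_0$). Hence the task reduces to bounding the pointwise gap $Q^\pi(x_0,\pi) - f^\pi(x_0,\pi)$ and then taking expectation over $d_0$. The plan is a telescoping argument along a trajectory generated by $\pi$ and $P$, stopped at the first hitting time $T$ of $G$, using the Bellman equation for $Q^\pi$ in the \emph{original} MDP: $Q^\pi(x,\pi) = \gamma\,\E_{a\sim\pi,\,x'\sim P}[Q^\pi(x',\pi)]$ for $x\notin G$, and $Q^\pi(x,\pi) = R(x)$ for $x\in G$ (the goal step collects reward $1$ and then absorbs at $s^+$).

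At each step $t<T$ I will add and subtract $\gamma\max(g^\pi(x_{t+1}),f^\pi(x_{t+1},\pi))$ inside the expectation. One piece becomes the one-step Bellman-style residual $\gamma\max(g^\pi(x_{t+1}),f^\pi(x_{t+1},\pi)) - f^\pi(x_t,a_t)$ (after rewriting $f^\pi(x_t,\pi)=\E_{a_t\sim\pi}[f^\pi(x_t,a_t)]$), matching the first bracketed term in the claim. The other piece, $\gamma\bigl(Q^\pi(x_{t+1},\pi) - \max(g^\pi(x_{t+1}),f^\pi(x_{t+1},\pi))\bigr)$, is handled case-wise: if $x_{t+1}\notin G$ (i.e., $t+1<T$), use $\max(g^\pi,f^\pi)\ge f^\pi$ to dominate it by $\gamma\bigl(Q^\pi(x_{t+1},\pi)-f^\pi(x_{t+1},\pi)\bigr)$, i.e., the same quantity one step later, and recurse; if $x_{t+1}\in G$ (i.e., $t+1=T$), use $Q^\pi(x_T,\pi)=R(x_T)$ together with $\max(g^\pi,f^\pi)\ge g^\pi$ to dominate it by $\gamma(R(x_T)-g^\pi(x_T))$, which after accumulating the discount factors yields the boundary term $\gamma^T(R(x_T)-g^\pi(x_T))$ and terminates the recursion.

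Formally this is an induction on the stopped trajectory (equivalently, a stopping-time telescoping of $\sum_{t=0}^{T-1}\gamma^t(\gamma V(x_{t+1})-V(x_t))$ with a time-varying choice of $V$: $V=f^\pi(\cdot,\pi)$ while $t<T$ and $V=g^\pi$ at $t=T$). The main bookkeeping obstacle is the random stopping time $T$: the per-step residuals must accumulate only on $\{t<T\}$, the boundary term must appear exactly once at $t=T$, and throughout we need to use that $x_t\notin G$ on $\{t<T\}$ so that the zero-intermediate-reward branch of the Bellman equation applies (this is what lets the residual have no explicit $R(x_t)$ term for $t<T$). Taking expectation over $x_0\sim d_0$ and the trajectory under $\pi$ then yields the claimed inequality.
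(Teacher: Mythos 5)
Your proof is correct and reaches exactly the per-step residual $\gamma\max(g^\pi,f^\pi)-f^\pi$ and boundary term $R-g^\pi$ that the bound requires, via the same two elementary inequalities $\max(g,f)\ge f$ off the goal set and $\max(g,f)\ge g$ on it. The route differs modestly from the paper's: the paper applies the performance difference lemma to $\bar{Q}^{\bar\pi}-\bar{f}_g^{\pi}$ in the \emph{augmented} MDP, does the three-way case analysis ($x\in G$, $x\notin G\cup\XX^+$, $x\in\XX^+$) on the occupancy measure $\bar{d}^{\bar\pi}$, and then invokes Lemma~\ref{lm:equivalence} to translate the result into the stopped-trajectory expectation; you instead stay entirely in the original MDP, telescoping $Q^\pi-f^\pi$ along the trajectory up to the hitting time $T$ using $Q^\pi$'s Bellman equation ($Q^\pi(x,\pi)=\gamma\E[Q^\pi(x',\pi)]$ off $G$ and $Q^\pi(x,\pi)=R(x)$ on $G$). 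The two computations are the same telescoping in different clothing, but your version is more self-contained — it needs neither the augmented-MDP occupancy measure nor the equivalence lemma — at the price of doing the stopping-time bookkeeping by hand. The only loose end is the event $\{T=\infty\}$: there the boundary term is absent and the recursion does not terminate, but the remainder after $t$ steps is $O(\gamma^{t+1})$ since $Q^\pi,f^\pi\in[0,1]$ and $\gamma<1$, so it vanishes in the limit and the stated bound (with the convention $\gamma^\infty=0$) still holds; it would be worth one sentence to say so explicitly.
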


\begin{proof}
Let $\bar{f}_g^{\pi} = (f^\pi, g^\pi)$ be the empirical minimizer.
By performance difference lemma, we can write 
\begin{align*}
    (1-\gamma) Q^\pi(d_0, \pi)  - (1-\gamma) \bar{f}_g^\pi(d_0, \bar{\pi}) = {} & (1-\gamma) \bar{Q}^\pi(d_0, \bar{\pi}) - (1-\gamma)\bar{f}_g^\pi(d_0, \bar{\pi}) \\
    = {} & \E_{\bar{d}^{\bar\pi}}[ \bar{R}(x,a) + \gamma \bar{f}_g^\pi(x',\bar{\pi}) - \bar{f}_g^\pi(x,a)]
\end{align*}
where with abuse of notation we define $\bar{d}^{\bar\pi}(x,a,x') \coloneq \bar{d}^{\bar\pi}(x,a) \bar{P}(x'|x,a)$, where $\bar{d}^{\bar\pi}(x,a)$ is the average state-action distribution of $\bar{\pi}$ in the augmented MDP.

In the above expectation, for $x\in G$, we have $a=a^+$ and $x^+ = (s^+, c)$ after taking $a^+$ at $x = (s,c)$, which leads to 
\begin{align*}
     \bar{R}(x,a) + \gamma \bar{f}_g^\pi(x',\bar{\pi}) - \bar{f}_g^\pi(x,a)
    = \bar{R}(x,a^+) + \gamma \bar{f}_g^\pi(x^+,\bar{\pi}) - \bar{f}_g^\pi(x,a^+)
    = R(x) - g^\pi(x)
\end{align*}
For $x\notin G$ and $x\notin\XX^+$, we have $a\neq a^+$ and $x'\notin \XX^+$; therefore
\begin{align*}
    \bar{R}(x,a) + \gamma \bar{f}_g^\pi(x',\bar{\pi}) - \bar{f}_g^\pi(x,a)
    &= R(x) + \gamma \bar{f}_g^\pi(x',\bar{\pi}) - f^\pi(x,a)\\
    &\leq \gamma \max(g^\pi(x'), f^\pi(x',\pi)) - f^\pi(x,a)
\end{align*}
where the last step is because of the definition of $\bar{f}_g^\pi$.
For $x\in\XX^+$, we have $x\in\XX^+$ and the reward is zero, so 
\begin{align*}
    \bar{R}(x,a) + \gamma \bar{f}_g^\pi(x',\bar{\pi}) - \bar{f}_g^\pi(x,a)
    = 0 
\end{align*}

Therefore, we can derive
\begin{align*}
    &(1-\gamma) Q^\pi(x_0, \pi) - (1-\gamma) \bar{f}_g^\pi(x_0, \bar\pi) \\
    &\leq \E_{\bar{d}^{\bar\pi}}[     \gamma \max(g^\pi(x'), f^\pi(x',\pi)) - f^\pi(x,a)  | x \notin G, x\notin \XX^+ ]
    + \E_{\bar{d}^{\bar\pi}}[ R(x) - g^\pi(x) | x\in G]
\end{align*}
Finally, using \cref{lm:equivalence} we can have the final upper bound.
\end{proof}


\paragraph{Main Result: Performance Bound.} Let $\pi^\dagger$ be the learned policy and let $\bar{f}_g^{\pi^\dagger}$ be the learned function approximators. 
For any comparator policy $\pi$, let $\bar{f}_g^\pi = (f^\pi,g^\pi)$ be the estimator of $\pi$ on the data. We have.
\begin{align*}
&  V^\pi(d_0) - V^{\pi^\dagger}(d_0)\\
&=  Q^\pi(d_0, \pi) - Q^{\pi^\dagger}(d_0, \pi^\dagger) \\
&=  Q^\pi(d_0, \pi) - \bar{f}_g^{\pi^\dagger}(d_0, \bar{\pi}^\dagger) + \bar{f}_g^{\pi^\dagger}(d_0, \bar{\pi}^\dagger) - Q^{\pi^\dagger}(d_0, \pi^\dagger) \\
&\leq Q^\pi(d_0, \pi) - \bar{f}_g^{\pi^\dagger}(d_0, \bar{\pi}^\dagger)\\
&\leq Q^\pi(d_0, \pi) - \bar{f}_g^{\pi}(d_0, \bar{\pi})\\
&\leq \E_{\pi,P} \left[ \sum_{t=0}^{T-1} \gamma^t ( \gamma \max(g^\pi(x_{t+1}), f^\pi(x_{t+1}, \pi)) - f^\pi(x_t,a_t)) + \gamma^T (  R(x_T) - g^\pi(x_T)) \right]\\
&\leq \E_{\pi,P} \left[ \sum_{t=0}^{T-1} \gamma^t |\gamma \max(g^\pi(x_{t+1}), f^\pi(x_{t+1}, \pi)) - f^\pi(x_t,a_t)|  + \gamma^T | R(x_T) - g^\pi(x_T)| \right]\\
&\leq \mathfrak{C}_\dyn(\pi) \E_{\mu_\dyn}[ |\gamma \max(g^\pi(x'), f^\pi(x', \pi)) - f^\pi(x,a)|]
+  \mathfrak{C}_\goal(\pi) \E_{\mu_\goal}[ |g(x) - 1|]\\
&\lesssim  \mathfrak{C}_\dyn(\pi) \sqrt{\epsilon_\dyn} + 
    +  \mathfrak{C}_\goal(\pi) \sqrt{\epsilon_\goal}
\end{align*}
where $\mathfrak{C}_\dyn(\pi)$ and $\mathfrak{C}_\goal(\pi)$ are the concentrability coefficients defined in \cref{def:concentrability}.
\begin{theorem}\label{thm:main_theorem_app}
Let $\pi^\dagger$ denote the learned policy of \algo + PSPI with datasets $D_\dyn$ and $D_\goal$, using value function classes $\FF=\{\XX\times\AA\to[0,1]\}$ and $\GG=\{\XX\to[0,1]\}$. 
Under realizability and completeness assumptions as stated in \cref{as:realizability} and \cref{as:completeness} respectively, with probability $1-\delta$, it holds, for any $\pi\in\Pi$,
\begin{align*}
    J(\pi) - J(\pi^\dagger)
    &\lesssim  \mathfrak{C}_\dyn(\pi) \left( \sqrt{\frac{\square} {|D_\dyn|}} +  \sqrt{\frac{\square} {|D_\goal|}}\right) +  \mathfrak{C}_\goal(\pi) \sqrt{\frac{\log \frac{ \NN_\infty(\GG, 1/|D_\goal|)}{\delta} }{|D_\goal|}}
\end{align*}
where $\square \equiv \log\left(\frac{\NN_\infty\left(\FF, 1/|D_\goal||D_\dyn|\right) \NN_\infty\left(\GG, 1/|D_\goal||D_\dyn|\right) \NN_{\infty,1}\left(\Pi,1/|D_\goal||D_\dyn|\right)}{\delta}\right)$, and $\mathfrak{C}_\dyn(\pi)$ and $\mathfrak{C}_\goal(\pi) $ are concentrability coefficients which decrease as the data coverage increases.
\end{theorem}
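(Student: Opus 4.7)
The plan is to leverage the regret equivalence between the original MDP and the action-augmented MDP (\cref{th:regret}, formalized in \cref{lm:Q equivalence}) to reduce the CGO problem to a standard offline RL problem on $\overline{\MM}$, and then run the PSPI analysis on the augmented dataset $\bar D_\dyn \cup \bar D_\goal$. Specifically, I would first observe that $J(\pi)-J(\pi^\dagger) = V^\pi(d_0)-V^{\pi^\dagger}(d_0)$ can be decomposed as $\bigl(Q^\pi(d_0,\pi) - \bar f^{\pi^\dagger}_g(d_0,\bar\pi^\dagger)\bigr) + \bigl(\bar f^{\pi^\dagger}_g(d_0,\bar\pi^\dagger) - Q^{\pi^\dagger}(d_0,\pi^\dagger)\bigr)$, where the second difference is non-positive by pessimism and the first is bounded by the performance gap at the comparator $\pi$ since $\pi^\dagger$ is the maximizer of the pessimistic estimate.

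The core technical steps would then be: (i) a \emph{feasibility} step showing that $\bar Q^{\bar\pi}$ satisfies the empirical constraints in \eqref{eq:pspi (augemnted) (real) app} uniformly over $\pi\in\Pi$ with high probability — this uses realizability from \cref{lm:realizability} (which in turn relies on the Bellman equation \eqref{eq:Bellman eqation mod} reformulated as a max over $R$ and $Q^\pi$) and a union-bound/Bernstein argument over covers of $\FF,\GG,\Pi$ applied to the two empirical losses $\ell_\dyn$ and $\ell_\goal$; (ii) a \emph{generalization} step (\cref{lm:generalization error)}) converting the empirical constraint satisfaction into a population bound on $\bigl\| \bar f_g - \bar\TT^{\bar\pi}\bar f_g \bigr\|_{\bar\mu_\dyn}$ and $\|g-1\|_{\bar\mu_\goal}$; (iii) a \emph{pessimism decomposition} via a performance-difference argument on $\overline{\MM}$, which telescopes the Bellman residuals of $\bar f_g^\pi$ along trajectories of $\pi$ and splits the accumulated error into (a) a sum of dynamics-type residuals $\gamma \max(g^\pi(x_{t+1}),f^\pi(x_{t+1},\pi))-f^\pi(x_t,a_t)$ on $\{t<T\}$ and (b) a single goal-type residual $R(x_T)-g^\pi(x_T)$ at reaching time $T$; and finally (iv) a \emph{change-of-measure} step using the definitions of $\rho^\pi_{\notin G}$ and $\rho^\pi_{\in G}$ together with $\mathfrak{C}_\dyn(\pi),\mathfrak{C}_\goal(\pi)$ from \cref{def:concentrability} to bound these two pieces by the $L^2(\bar\mu_\dyn)$ and $L^2(\bar\mu_\goal)$ errors obtained in step (ii), paying concentrability factors.

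The hard part is step (iii)-(iv): carefully re-expressing the performance difference over $\overline{\MM}$ so that the Bellman residuals naturally decouple along the goal indicator — transitions with $x\notin G$ contribute only an $\FF$-type residual (no $R$-dependence, hence no need to approximate $R$ on non-goal states), while the terminal step at $x_T\in G$ isolates the sole appearance of $R$ and thus needs only $\mathfrak{C}_\goal$ rather than a stronger coefficient like $\|g-R\|_{\rho^\pi_{\notin G}}^2/\|g-R\|_{\mu_\goal}^2$. This is precisely where CGO's structure is exploited to avoid requiring negative examples, and it is also where our result strictly improves over MAHALO's concentrability requirement. Because $x_0\notin G$ almost surely, no boundary term from step $t=0$ appears, and no cross-coverage of $\GG$ against $\rho^\pi_{\notin G}$ is ever needed.

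Putting it together, steps (i)-(ii) give $\sqrt{\epsilon_\dyn}$ and $\sqrt{\epsilon_\goal}$ as defined in \cref{lm:Qpi in the set,lm:generalization error)}, with $\epsilon_\dyn$ scaling as $\bigl(\sqrt{\square/|D_\dyn|}+\sqrt{\square/|D_\goal|}\bigr)^2$ (two terms because $\bar D_\dyn$ uses samples from both $D_\dyn$ and $D_\goal$, requiring a two-sample Bernstein decomposition) and $\epsilon_\goal$ scaling as $\log(\NN_\infty(\GG)/\delta)/|D_\goal|$. Multiplying by $\mathfrak{C}_\dyn(\pi)$ and $\mathfrak{C}_\goal(\pi)$ respectively yields the claimed bound, and restricting to finite classes recovers the informal statement in \cref{th:main theorm pspi (informal)}.
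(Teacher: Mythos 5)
Your proposal is correct and follows essentially the same route as the paper: the pessimism decomposition at $d_0$, feasibility of $\bar Q^{\bar\pi}$ via covering/Bernstein arguments (Lemmas~\ref{lm:Qpi in the set} and~\ref{lem:sq_err_diff_dyn}), the generalization step (Lemma~\ref{lm:generalization error)}), the performance-difference telescoping that splits residuals into dynamics-type terms on $\{t<T\}$ and a single goal-type term at $T$, and the change of measure via $\mathfrak{C}_\dyn(\pi)$ and $\mathfrak{C}_\goal(\pi)$. You also correctly identify the two-sample structure of $\bar D_\dyn$ as the source of the two terms in $\epsilon_\dyn$ and the decoupling along the goal indicator as the reason no negative examples are needed.
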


\subsubsection{Proof of Lemmas \ref{lem:sq_err_diff_dyn} and \ref{lem:sq_err_diff_goal}}
\label{sec:hp_proof}
We first show the following complementary lemma where we use a concentration bound on the constructed datasets $\bar D_{\dyn}$ and $\bar D_{\goal}$. Lemmas \ref{lm:Qpi in the set} and \ref{lm:generalization error)} will follow deterministically from this main auxiliary result.

\begin{lemma}
\label{lem:sq_err_diff_dyn}
    With probability at least $1-\delta$, for any $f,f_1,f_2 \in \FF$ and $g \in \GG$, we have:
    \begin{align*}
        & \ell_{\bar \mu_{\dyn}}(f_1, \bar{f}_g, \bar \pi) - \ell_{\bar \mu_{\dyn}}(f_2, \bar{f}_g, \bar \pi) - \ell_{\dyn}(f_1, \bar{f}_g, \bar \pi) + \ell_{\dyn}(f_2, \bar{f}_g, \bar \pi) & \\
        & \le \mathcal{O}\left(\|f_1-f_2\|_{\bar \mu_{\dyn}}\left(\sqrt{\frac{ \square}{|D_\goal|}} + \sqrt{\frac{ \square}{|D_\dyn|}}\right) + \frac{ \square}{\sqrt{|D_\goal||D_\dyn|}} + \frac{ \square}{|D_\goal|} + \frac{ \square}{|D_\dyn|} \right)
    \end{align*}
where $\square \equiv \log\left(\frac{\NN_\infty\left(\FF, 1/|D_\goal||D_\dyn|\right) \NN_\infty\left(\GG, 1/|D_\goal||D_\dyn|\right) \NN_{\infty,1}\left(\Pi,1/|D_\goal||D_\dyn|\right)}{\delta}\right)$.
\end{lemma}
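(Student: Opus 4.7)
The plan is to exploit the product structure of $\bar D_\dyn$ and use a two-step Bernstein-style concentration argument, coupled with a standard covering-number union bound. The point of departure is to write the squared-loss difference as a single empirical average. Define
\[
    h(x,a,x') \coloneq \bigl(f_1(x,a) - \gamma\max(g(x'), f(x',\pi))\bigr)^2 - \bigl(f_2(x,a) - \gamma\max(g(x'), f(x',\pi))\bigr)^2,
\]
and factor it as $h(x,a,x') = \bigl(f_1(x,a) - f_2(x,a)\bigr)\bigl(f_1(x,a)+f_2(x,a) - 2\gamma\max(g(x'),f(x',\pi))\bigr)$. Since all functions take values in $[0,1]$, the second factor is bounded by $2$, so $|h| \le 2|f_1-f_2|$ and $h^2 \le 4(f_1-f_2)^2$. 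This is the key ingredient that yields variance proportional to $\|f_1-f_2\|_{\bar\mu_\dyn}^2$ in the Bernstein bound, which in turn produces the $\|f_1-f_2\|_{\bar\mu_\dyn}$ factor in the leading stochastic terms of the claimed bound.

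The main technical obstacle is that $\bar D_\dyn$ is \emph{not} i.i.d.: its elements are indexed by pairs $(i,j)$ where $(s_i,a_i,s_i') \sim \mu_\dyn$ is repeated across all $|D_\goal|$ contexts $c_j \sim \mu_\goal$. To decouple the two sources of randomness, I would introduce, for fixed $(f_1,f_2,f,g,\pi)$,
\[
    H_{ij} \coloneq h\bigl((s_i,c_j),a_i,(s_i',c_j)\bigr), \qquad \bar G_i \coloneq \E_{c \sim \mu_\goal}\bigl[h((s_i,c), a_i, (s_i',c))\bigr],
\]
and write $\frac{1}{|D_\dyn||D_\goal|}\sum_{i,j}H_{ij} - \E_{\bar\mu_\dyn}[h] = A + B$, where
\[
    A \coloneq \frac{1}{|D_\dyn|}\sum_i\!\Bigl(\tfrac{1}{|D_\goal|}\sum_j H_{ij} - \bar G_i\Bigr), \qquad B \coloneq \frac{1}{|D_\dyn|}\sum_i \bar G_i - \E_{\mu_\dyn}[\bar G_i].
\]
Now $B$ is an average over $|D_\dyn|$ i.i.d.\ summands with $|\bar G_i| \le 2\|f_1-f_2\|_\infty$ and $\mathrm{Var}(\bar G_i) \le \E[h^2] \le 4\|f_1-f_2\|_{\bar\mu_\dyn}^2$, so a standard Bernstein bound gives $|B| \lesssim \|f_1-f_2\|_{\bar\mu_\dyn}\sqrt{\square/|D_\dyn|} + \square/|D_\dyn|$. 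For $A$, condition on $D_\dyn$; for each fixed $i$, Bernstein over the i.i.d.\ $c_j$'s gives $|\tfrac{1}{|D_\goal|}\sum_j H_{ij} - \bar G_i| \lesssim \sqrt{V_i \square / |D_\goal|} + \square/|D_\goal|$, where $V_i \le 4\E_c[(f_1-f_2)^2((s_i,c),a_i)]$. Averaging over $i$ and using Cauchy--Schwarz on the square-root term yields
\[
    |A| \lesssim \sqrt{\frac{\square}{|D_\goal|}}\,\sqrt{\frac{1}{|D_\dyn|}\sum_i V_i} + \frac{\square}{|D_\goal|},
\]
and a further concentration of $\tfrac{1}{|D_\dyn|}\sum_i V_i$ around $4\|f_1-f_2\|_{\bar\mu_\dyn}^2$ (again by Bernstein, or a simple Markov-type bound) contributes the $\|f_1-f_2\|_{\bar\mu_\dyn}\sqrt{\square/|D_\goal|}$ leading term plus the cross higher-order term $\square/\sqrt{|D_\dyn||D_\goal|}$.

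To make the above uniform in $(f_1,f_2,f,g,\pi)$, I would cover $\FF$ at scale $1/(|D_\dyn||D_\goal|)$ under $\|\cdot\|_\infty$, $\GG$ at the same scale, and $\Pi$ under $\|\cdot\|_{\infty,1}$; Lipschitz continuity of $h$ in each argument ensures the discretization error is absorbed into a lower-order additive $O(1/(|D_\dyn||D_\goal|))$ term. Taking a union bound contributes the combined covering-number logarithm $\square$. Summing the bounds for $A$ and $B$ yields the stated inequality. The most delicate step in this plan is keeping the variance proxy in the $A$-term expressed in terms of the population $\|f_1-f_2\|_{\bar\mu_\dyn}$ rather than a random empirical quantity, which requires the secondary concentration of $\tfrac{1}{|D_\dyn|}\sum_i V_i$; carefully accounting for the constants there (and tracking how they interact with the uniform cover) is where the bulk of the bookkeeping lies.
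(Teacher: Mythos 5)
Your plan follows the same strategy as the paper's proof: factor the difference of squared losses as $(f_1-f_2)(f_1+f_2-2\gamma\max(g,f'))$ so the variance is controlled by $\|f_1-f_2\|_{\bar\mu_\dyn}^2$, split the deviation into two stages via a conditional expectation, apply Bernstein with a union bound over $1/(MN)$-covers of $\FF,\GG,\Pi$ (writing $M=|D_\goal|$, $N=|D_\dyn|$), and then use a secondary Hoeffding step to replace the random variance proxy by the population norm, which is the source of the $\square/\sqrt{MN}$ cross term. The only structural difference is that your decomposition is the transpose of theirs: the paper introduces $\ell^j_{\mu_\dyn}$ (expectation over dynamics at each observed context $c_j$), concentrates over the $N$ dynamics samples per fixed context in the inner step and over the $M$ contexts in the outer step, whereas you condition on $D_\dyn$ and concentrate over the $M$ contexts in the inner step. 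By the symmetry of the product distribution $\bar\mu_\dyn=\mu_\dyn\times\mu_\goal$ this is immaterial; both orderings yield the same bound.

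There is one step where your version, as written, does not deliver the stated inequality. In the $A$ term you apply Cauchy--Schwarz to get $\frac1N\sum_i\sqrt{V_i}\le\sqrt{\frac1N\sum_i V_i}$ and then concentrate $\frac1N\sum_i V_i$ around $4\|f_1-f_2\|_{\bar\mu_\dyn}^2$. Since $V_i$ is bounded, that concentration has error $O(\sqrt{\square/N})$, so after taking the square root you pick up an additive $(\square/N)^{1/4}$, and multiplying by $\sqrt{\square/M}$ leaves a term of order $\square^{3/4}M^{-1/2}N^{-1/4}$. This is not dominated by $\|f_1-f_2\|_{\bar\mu_\dyn}\sqrt{\square/M}$, nor by $\square/\sqrt{MN}$, $\square/M$, or $\square/N$ (e.g.\ its ratio to $\square/\sqrt{MN}$ is $(N/\square)^{1/4}\to\infty$), so the claimed form of the lemma does not follow from this route. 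The fix is to skip Cauchy--Schwarz and instead concentrate the average of the \emph{square roots}: $\frac1N\sum_i\sqrt{V_i}$ is an average of i.i.d.\ bounded variables whose mean is at most $2\|f_1-f_2\|_{\bar\mu_\dyn}$ by Jensen, so Hoeffding gives $\frac1N\sum_i\sqrt{V_i}\le 2\|f_1-f_2\|_{\bar\mu_\dyn}+O(\sqrt{\square/N})$, and multiplying by $\sqrt{\square/M}$ produces exactly the leading term plus the $\square/\sqrt{MN}$ cross term. This is precisely how the paper handles the analogous quantity $\frac1M\sum_j\|f_1-f_2\|_{\mu_\dyn\times\{c_j\}}$. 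With that repair (and a union bound over $i$, whose $\log N$ cost is absorbed into $\square$), your argument goes through.
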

\begin{proof}
Our proof is similar to proof of corresponding results in \citet{xie2021bellman} (Lemma A.4) and \citet{cheng2022adversarially} (Lemma 10) but we derive the result for the product distribution $\bar \mu_{\dyn} = \mu_{\dyn} \times \mu_{\goal}$ and its empirical approximation using $\bar D_{\dyn}$. Throughout this proof, we omit the bar on $\bar \pi$ as $\ell_{\dyn}$ does not use the extended definition of the policy $\pi$ and further use $M,N$ for the dataset sizes $|D_\goal|,|D_\dyn|$. For any observed context $(c_j, s_j) \in D_{\goal}$, we define the following quantity:
\begin{align*}
    \ell^j_{\mu_{\dyn}}(f, \bar{f'}_{g'}, \pi) = \E_{(s,a,s') \sim \mu_{\dyn}}\left[\left(f((s,c_j),a) - \gamma \max(g'((s',c_j))), f'((s',c_j),\pi))\right)^2\right]
\end{align*}
For conciseness, we use notation $x_{\circ j}$ for $(s,c_j)$ and $x'_{\circ j}$ for $(s',c_j)$ where $(s,a,s')$ is sampled from a dynamics distribution and $c_j \in D_\goal$. We first start with the following:
\begin{align}
    & \ell_{\bar \mu_{\dyn}}(f_1, \bar{f}_g, \pi) - \ell_{\bar \mu_{\dyn}}(f_2, \bar{f}_g, \pi) - \ell_{\dyn}(f_1, \bar{f}_g, \pi) + \ell_{\dyn}(f_2, \bar{f}_g, \pi) & \nonumber\\
    & \le \ell_{\bar \mu_{\dyn}}(f_1, \bar{f}_g, \pi) - \ell_{\bar \mu_{\dyn}}(f_2, \bar{f}_g, \pi) - \frac{1}{M}\sum_{j=1}^{M} \ell^j_{\mu_{\dyn}}(f_1, \bar{f}_g, \pi) + \frac{1}{M}\sum_{j=1}^{M} \ell^j_{\mu_{\dyn}}(f_2, \bar{f}_g, \pi) \label{line:conc_term1}\\
& \quad + \sum_{j=1}^{M} \ell^j_{\mu_{\dyn}}(f_1, \bar{f}_g, \pi) - \sum_{j=1}^{M} \ell^j_{\mu_{\dyn}}(f_2, \bar{f}_g, \pi) - \ell_{\dyn}(f_1, \bar{f}_g, \pi) + \ell_{\dyn}(f_2, \bar{f}_g, \pi) \label{line:conc_term2}
\end{align}
We will derive the final deviation bound by bounding each of these two empirical deviations in lines~\eqref{line:conc_term1},\eqref{line:conc_term2}. First, we will bound the term in line~\eqref{line:conc_term1}:
\begin{align}
   & \sum_{j=1}^{M} \ell^j_{\mu_{\dyn}}(f_1, \bar{f}_g, \pi) - \sum_{j=1}^{M} \ell^j_{\mu_{\dyn}}(f_2, \bar{f}_g, \pi) & \nonumber\\
   & = \sum_{j=1}^{M} \ell^j_{\mu_{\dyn}}(f_1, \bar{f}_g, \pi) - \ell^j_{\mu_{\dyn}}(f_2, \bar{f}_g, \pi) & \nonumber\\
   & = \sum_{j=1}^{M} \E_{\mu_{\dyn}}\big[ (f_1(x_{\circ j},a) - \gamma \max(g(x'_{\circ j}), f(x'_{\circ j},\pi)))^2 - (f_2(x_{\circ j},a) - \gamma \max(g(x'_{\circ j}), f(x'_{\circ j},\pi)))^2 \big] & \nonumber\\
   & = \sum_{j=1}^{M} \E_{\mu_\dyn} \left[ (f_1(x_{\circ j},a) - f_2(x_{\circ j},a))(f_1(x_{\circ j},a) + f_2(x_{\circ j},a) - 2\gamma \max(g(x'_{\circ j}), f(x'_{\circ j},\pi))) \right] & \nonumber\\
   & = \sum_{j=1}^{M} \E_{(s,a,\cdot) \sim \mu_\dyn} \left[ (f_1(x_{\circ j},a) - f_2(x_{\circ j},a))(f_1(x_{\circ j},a) + f_2(x_{\circ j},a) - 2\bar \TT^{\pi} \bar{f}_g)(x_{\circ j},a) \right] & \label{eq:cj_var_term}\\
   & = \sum_{j=1}^{M} \E_{(s,a,\cdot) \sim \mu_\dyn} \left[ (f_1(x_{\circ j},a) - \bar \TT^{\pi} \bar{f}_g(x_{\circ j},a))^2 - (f_2(x_{\circ j},a) - \bar \TT^{\pi} \bar{f}_g(x_{\circ j},a))^2 \right] & \nonumber
\end{align}
Using a similar argument, we can show that:
\begin{align}
    & \ell_{\bar \mu_{\dyn}}(f_1, \bar{f}_g, \pi) - \ell_{\bar \mu_{\dyn}}(f_2, \bar{f}_g, \pi) & \nonumber \\
    & = \E_{\bar \mu_\dyn} \left[ (f_1((s,c),a) - \bar \TT^{\pi} \bar{f}_g((s,c),a))^2 - (f_2((s,c),a) - \bar \TT^{\pi} \bar{f}_g((s,c),a))^2 \right] & \label{eq:cj_sqdiff_expectation}
\end{align}
Let $\FF_{\epsilon}$, $\GG_{\epsilon}$ be $\epsilon$-cover of $\FF$ and $\GG$, and $\Pi_{\epsilon}$ be $\epsilon$-cover of $\Pi$, i.e., $\exists \tilde f_1, \tilde f_2, \tilde f \in \FF_{\epsilon}$, $\tilde g \in GG_{\epsilon}$ and $\tilde \pi \in \Pi_{\epsilon}$ such that $\|f-\tilde f\|_{\infty}, \|f_1-\tilde f_1\|_{\infty}, \|f_2-\tilde f_2\|_{\infty} \le \epsilon$ and $\|\pi \tilde \pi\|_{\infty,1} \le \epsilon$. 

Then, for any $f,f_1,f_2 \in \FF$, $g \in \GG$, $\pi \in \Pi$ and their corresponding $\tilde f,\tilde f_1,\tilde f_2 \in \FF_{\epsilon}$, $\tilde g \in \GG_{\epsilon}$, $\tilde \pi \in \Pi_{\epsilon}$,:
\begin{align}
    & \ell_{\bar \mu_{\dyn}}(\tilde f_1, \bar{\tilde f}_{\tilde g}, \tilde\pi) - \ell_{\bar \mu_{\dyn}}(\tilde f_2, \bar{\tilde f}_{\tilde g}, \tilde\pi) - \frac{1}{M}\sum_{j=1}^{M} \left(\ell^j_{\mu_{\dyn}}(\tilde f_1, \bar{\tilde f}_{\tilde g}, \tilde\pi) - \ell^j_{\mu_{\dyn}}(\tilde f_2, \bar{\tilde f}_{\tilde g}, \tilde\pi)\right) & \nonumber \\
    & = \E_{\bar \mu_\dyn} \left[ (\tilde f_1((s,c),a) - \bar \TT^{\tilde\pi} \bar{\tilde f}_{\tilde g}((s,c),a))^2 - (\tilde f_2((s,c),a) - \bar \TT^{\tilde\pi} \bar{\tilde f}_{\tilde g}((s,c),a))^2 \right] & \nonumber \\
    & \quad - \frac{1}{M}\sum_{j=1}^{M} \E_{(s,a,\cdot) \sim \mu_\dyn} \left[ (\tilde f_1(x_{\circ j},a) - \tilde f_2(x_{\circ j},a))(\tilde f_1(x_{\circ j},a) + \tilde f_2(x_{\circ j},a) - 2\bar \TT^{\tilde\pi} \bar{\tilde f}_{\tilde g}) \right] & \nonumber \\
    & \le \sqrt{\frac{4\mathbf{V} \log\left(\frac{\NN_\infty\left(\FF, \epsilon\right) \NN_\infty\left(\GG, \epsilon\right) \NN_{\infty,1}\left(\Pi,\epsilon\right)}{\delta}\right)}{M}} + \frac{2\log\left(\frac{\NN_\infty\left(\FF, \epsilon\right) \NN_\infty\left(\GG, \epsilon\right) \NN_{\infty,1}\left(\Pi,\epsilon\right)}{\delta}\right)}{3M} \nonumber.&
\end{align}
where the first equation follows from eqs.~\eqref{eq:cj_var_term} and \eqref{eq:cj_sqdiff_expectation}, and the last inequality follows from Bernstein's inequality with a union bound over the classes $\FF_{\epsilon}, \GG_{\epsilon}, \Pi_{\epsilon}$ where $\mathbf{V}$ is the variance term as follows:
\begin{align*}
    & \text{Var}_{c \sim \mu_\goal}\left[\E_{(s,a,\cdot) \sim \mu_\dyn} \left[ (f_1((s,c),a) - f_2((s,c),a))(f_1((s,c),a) + f_2((s,c),a) - 2\bar \TT^{\pi} \bar{f}_g((s,c),a)) \right]\right] & \\
    & \le \E_{c \sim \mu_\goal}\left[\E_{(s,a,\cdot) \sim \mu_\dyn} \left[ (f_1((s,c),a) - f_2((s,c),a))(f_1((s,c),a) + f_2((s,c),a) - 2\bar \TT^{\pi} \bar{f}_g((s,c),a)) \right]^2\right] & \\
    & \le 4 \E_{\bar \mu_\dyn} \left[(f_1((s,c),a) - f_2((s,c),a))^2\right]& 
\end{align*}
where we used that fact that $f, g \in [0,1]$. 

Thus, with probability $1-\delta$, 
\begin{align*}
    & \ell_{\bar \mu_{\dyn}}(\tilde f_1, \bar{\tilde f}_{\tilde g}, \tilde\pi) - \ell_{\bar \mu_{\dyn}}(\tilde f_2, \bar{\tilde f}_{\tilde g}, \tilde\pi) - \frac{1}{M}\sum_{j=1}^{M} \left(\ell^j_{\mu_{\dyn}}(\tilde f_1, \bar{\tilde f}_{\tilde g}, \tilde\pi) - \ell^j_{\mu_{\dyn}}(\tilde f_2, \bar{\tilde f}_{\tilde g}, \tilde\pi)\right) & \nonumber \\
    & \le 2\|\tilde f_1-\tilde f_2\|_{\bar \mu_{\dyn}}\sqrt{\frac{ \log\left(\frac{\NN_\infty\left(\FF, \epsilon\right) \NN_\infty\left(\GG, \epsilon\right) \NN_{\infty,1}\left(\Pi,\epsilon\right)}{\delta}\right)}{M}} + \frac{2\log\left(\frac{\NN_\infty\left(\FF, \epsilon\right) \NN_\infty\left(\GG, \epsilon\right) \NN_{\infty,1}\left(\Pi,\epsilon\right)}{\delta}\right)}{3M} .&
\end{align*}
Using the property of the set covers of $\FF,\GG, \Pi$, we can easily conclude that:
\begin{align}
    & \ell_{\bar \mu_{\dyn}}(f_1, \bar{f}_{g}, \pi) - \ell_{\bar \mu_{\dyn}}(f_2, \bar{f}_{ g}, \pi) - \frac{1}{M}\sum_{j=1}^{M} \left(\ell^j_{\mu_{\dyn}}( f_1, \bar{ f}_{ g}, \pi) - \ell^j_{\mu_{\dyn}}( f_2, \bar{ f}_{ g}, \pi)\right) & \nonumber \\
    & \lesssim \|f_1-f_2\|_{\bar \mu_{\dyn}}\sqrt{\frac{ \log\left(\frac{\NN_\infty\left(\FF, \epsilon\right) \NN_\infty\left(\GG, \epsilon\right) \NN_{\infty,1}\left(\Pi,\epsilon\right)}{\delta}\right)}{M}} + \frac{\log\left(\frac{\NN_\infty\left(\FF, \epsilon\right) \NN_\infty\left(\GG, \epsilon\right) \NN_{\infty,1}\left(\Pi,\epsilon\right)}{\delta}\right)}{M} & \nonumber \\
    & \quad + \epsilon \sqrt{\frac{ \log\left(\frac{\NN_\infty\left(\FF, \epsilon\right) \NN_\infty\left(\GG, \epsilon\right) \NN_{\infty,1}\left(\Pi,\epsilon\right)}{\delta}\right)}{M}} + \epsilon. \label{eq:conc_bound1}& 
\end{align}
Now, we bound the second deviation term in eq. line~\eqref{line:conc_term2}:
\begin{align}
    & \frac{1}{M}\sum_{j=1}^{M} \left(\ell^j_{\mu_{\dyn}}(f_1, \bar{f}_g, \pi) -  \ell^j_{\mu_{\dyn}}(f_2, \bar{f}_g, \pi)\right) - \left(\ell_{\dyn}(f_1, \bar{f}_g, \pi) - \ell_{\dyn}(f_2, \bar{f}_g, \pi)\right) & \nonumber\\
    & = \frac{1}{M}\sum_{j=1}^{M}\Bigg[\E_{ \mu_{\dyn}}\left[\left(f_1(x_{\circ j},a) - \gamma \max(g(x'_{\circ j}), f(x'_{\circ j},\pi))\right)^2 - \left(f_2(x_{\circ j},a) - \gamma \max(g(x'_{\circ j}), f(x'_{\circ j},\pi))\right)^2\right] & \nonumber \\
    & \quad  - \frac{1}{N}\sum_{i=1}^{N}\left[\left(f_1(x_{ij},a) - \gamma \max(g(x'_{ij}), f(x'_{ij},\pi))\right)^2 - \left(f_2(x_{ij},a) - \gamma \max(g(x'_{ij}), f(x'_{ij},\pi))\right)^2 \right]\Bigg]
\end{align}
For any fixed $c_j$, using the same strategy as we used for bounding the first term in eq. line~\eqref{line:conc_term1}, for any $f,f_1,f_2 \in \FF$, $g \in \GG$, $\pi \in \Pi$ and their corresponding $\tilde f,\tilde f_1,\tilde f_2 \in \FF_{\epsilon}$, $\tilde g \in \GG_{\epsilon}$, $\tilde \pi \in \Pi_{\epsilon}$, with probability at least $1-\delta$:
\begin{align}
    & \left(\ell^j_{\mu_{\dyn}}(\tilde f_1, \bar{\tilde f}_{\tilde g}, \tilde \pi) -  \ell^j_{\mu_{\dyn}}(\tilde f_2, \bar{\tilde f}_{\tilde g}, \tilde \pi)\right) - \left(\ell_{\dyn}(\tilde f_1, \bar{\tilde f}_{\tilde g}, \tilde \pi) - \ell_{\dyn}(\tilde f_2, \bar{\tilde f}_{\tilde g}, \tilde \pi)\right) & \nonumber \\
    & \lesssim \|\tilde f_1-\tilde f_2\|_{\mu_{\dyn} \times \{c_j\}}\sqrt{\frac{ \log\left(\frac{\NN_\infty\left(\FF, \epsilon\right) \NN_\infty\left(\GG, \epsilon\right) \NN_{\infty,1}\left(\Pi,\epsilon\right)}{\delta}\right)}{N}} + \frac{\log\left(\frac{\NN_\infty\left(\FF, \epsilon\right) \NN_\infty\left(\GG, \epsilon\right) \NN_{\infty,1}\left(\Pi,\epsilon\right)}{\delta}\right)}{N} & \nonumber \\
    & \quad + \epsilon \sqrt{\frac{ \log\left(\frac{\NN_\infty\left(\FF, \epsilon\right) \NN_\infty\left(\GG, \epsilon\right) \NN_{\infty,1}\left(\Pi,\epsilon\right)}{\delta}\right)}{N}} + \epsilon. \nonumber&
\end{align}
We can now consider the sum in the second term in eq. line \eqref{line:conc_term2} for $\tilde f, \tilde f_1, \tilde f_2, \tilde g, \tilde \pi$ as:
\begin{align}
    & \frac{1}{M}\sum_{j=1}^{M} \left(\ell^j_{\mu_{\dyn}}(\tilde f_1, \bar{\tilde f}_{\tilde g}, \tilde \pi) -  \ell^j_{\mu_{\dyn}}(\tilde f_2, \bar{\tilde f}_{\tilde g}, \tilde \pi)\right) - \left(\ell_{\dyn}(\tilde f_1, \bar{\tilde f}_{\tilde g}, \tilde \pi) - \ell_{\dyn}(\tilde f_2, \bar{\tilde f}_{\tilde g}, \tilde \pi)\right) & \nonumber\\
    & \lesssim \frac{1}{M}\sum_{j=1}^{M}\|\tilde f_1-\tilde f_2\|_{\mu_{\dyn} \times \{c_j\}}\sqrt{\frac{ \log\left(\frac{\NN_\infty\left(\FF, \epsilon\right) \NN_\infty\left(\GG, \epsilon\right) \NN_{\infty,1}\left(\Pi,\epsilon\right)}{\delta}\right)}{N}}  & \nonumber \\
    &\quad  + \frac{\log\left(\frac{\NN_\infty\left(\FF, \epsilon\right) \NN_\infty\left(\GG, \epsilon\right) \NN_{\infty,1}\left(\Pi,\epsilon\right)}{\delta}\right)}{N} + \epsilon \sqrt{\frac{ \log\left(\frac{\NN_\infty\left(\FF, \epsilon\right) \NN_\infty\left(\GG, \epsilon\right) \NN_{\infty,1}\left(\Pi,\epsilon\right)}{\delta}\right)}{N}} + \epsilon. & \nonumber \\
    & \lesssim \left(\frac{1}{M}\sum_{j=1}^{M}\|\tilde f_1-\tilde f_2\|_{\mu_{\dyn} \times \{c_j\}}-\|\tilde f_1-\tilde f_2\|_{\bar \mu_{\dyn}}\right)\sqrt{\frac{ \log\left(\frac{\NN_\infty\left(\FF, \epsilon\right) \NN_\infty\left(\GG, \epsilon\right) \NN_{\infty,1}\left(\Pi,\epsilon\right)}{\delta}\right)}{N}}  & \nonumber \\
    & \quad + \|\tilde f_1-\tilde f_2\|_{\bar \mu_{\dyn}}\sqrt{\frac{ \log\left(\frac{\NN_\infty\left(\FF, \epsilon\right) \NN_\infty\left(\GG, \epsilon\right) \NN_{\infty,1}\left(\Pi,\epsilon\right)}{\delta}\right)}{N}} & \nonumber \\ 
    & \quad + \frac{\log\left(\frac{\NN_\infty\left(\FF, \epsilon\right) \NN_\infty\left(\GG, \epsilon\right) \NN_{\infty,1}\left(\Pi,\epsilon\right)}{\delta}\right)}{N} + \epsilon \sqrt{\frac{ \log\left(\frac{\NN_\infty\left(\FF, \epsilon\right) \NN_\infty\left(\GG, \epsilon\right) \NN_{\infty,1}\left(\Pi,\epsilon\right)}{\delta}\right)}{N}} + \epsilon. & \nonumber \\
    & \lesssim \|\tilde f_1-\tilde f_2\|_{\bar \mu_{\dyn}}\sqrt{\frac{ \log\left(\frac{\NN_\infty\left(\FF, \epsilon\right) \NN_\infty\left(\GG, \epsilon\right) \NN_{\infty,1}\left(\Pi,\epsilon\right)}{\delta}\right)}{N}} + \frac{ \log\left(\frac{\NN_\infty\left(\FF, \epsilon\right) \NN_\infty\left(\GG, \epsilon\right) \NN_{\infty,1}\left(\Pi,\epsilon\right)}{\delta}\right)}{\sqrt{NM}} & \nonumber \\ 
    & \quad + \frac{\log\left(\frac{\NN_\infty\left(\FF, \epsilon\right) \NN_\infty\left(\GG, \epsilon\right) \NN_{\infty,1}\left(\Pi,\epsilon\right)}{\delta}\right)}{N} + \epsilon \sqrt{\frac{ \log\left(\frac{\NN_\infty\left(\FF, \epsilon\right) \NN_\infty\left(\GG, \epsilon\right) \NN_{\infty,1}\left(\Pi,\epsilon\right)}{\delta}\right)}{N}} + \epsilon. & \nonumber \\
\end{align}
where the last inequality follows from Hoeffding's inequality. We can now bound the term in eq. line~\eqref{line:conc_term2} as:
\begin{align}
    & \frac{1}{M}\sum_{j=1}^{M} \left(\ell^j_{\mu_{\dyn}}(f_1, \bar{f}_g, \pi) -  \ell^j_{\mu_{\dyn}}(f_2, \bar{f}_g, \pi)\right) - \left(\ell_{\dyn}(f_1, \bar{f}_g, \pi) - \ell_{\dyn}(f_2, \bar{f}_g, \pi)\right) & \nonumber\\
    & \lesssim \|\tilde f_1-\tilde f_2\|_{\bar \mu_{\dyn}}\sqrt{\frac{ \log\left(\frac{\NN_\infty\left(\FF, \epsilon\right) \NN_\infty\left(\GG, \epsilon\right) \NN_{\infty,1}\left(\Pi,\epsilon\right)}{\delta}\right)}{N}} + \frac{ \log\left(\frac{\NN_\infty\left(\FF, \epsilon\right) \NN_\infty\left(\GG, \epsilon\right) \NN_{\infty,1}\left(\Pi,\epsilon\right)}{\delta}\right)}{\sqrt{NM}} & \nonumber \\ 
    & \quad  + \frac{\log\left(\frac{\NN_\infty\left(\FF, \epsilon\right) \NN_\infty\left(\GG, \epsilon\right) \NN_{\infty,1}\left(\Pi,\epsilon\right)}{\delta}\right)}{N} + \epsilon \sqrt{\frac{ \log\left(\frac{\NN_\infty\left(\FF, \epsilon\right) \NN_\infty\left(\GG, \epsilon\right) \NN_{\infty,1}\left(\Pi,\epsilon\right)}{\delta}\right)}{N}} + \epsilon. \label{eq:conc_bound2}
\end{align}
Combining eqs.~\eqref{eq:conc_bound1} and \eqref{eq:conc_bound2} with $\epsilon = \mathcal{O}(\frac{1}{M N})$, we get the final result.
\end{proof}

\begin{lemma}
\label{lem:sq_err_diff_goal}
With probability at least $1-\delta$, for any $g,g_1,g_2 \in \GG$ and $f \in \FF$, we have:
\begin{align*}
    & \ell_{\bar \mu_{\goal}}(\bar f_{g_1}) - \ell_{\bar \mu_{\goal}}(\bar f_{g_2}) - \ell_{\goal}(\bar f_{g_1}) + \ell_{\goal}(\bar f_{g_2}) & \\
    & \le \mathcal{O}\left( \|g_1-g_2\|_{\bar \mu_{\goal}} \sqrt{\frac{\log\left(\frac{\NN_\infty(\FF, 1/|D_\goal|)}{\delta} \right)}{|D_\goal|}} + \frac{\log\left(\frac{\NN_\infty (\FF, 1/|D_\goal|)}{\delta}\right)}{|D_\goal|} \right).
\end{align*}
\end{lemma}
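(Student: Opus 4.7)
The plan is to mirror the Bernstein-plus-covering argument used for \cref{lem:sq_err_diff_dyn}, but exploit that $\ell_{\bar\mu_\goal}$ and $\ell_\goal$ depend only on $g$ evaluated at goal states and involve no next-state expectation, no policy-dependent Bellman backup, and no product-structure of the data distribution. Concretely, for each sample $(x_j, a^+, 1, x_j^+) \in \bar D_\goal$, define the per-sample difference
\[
Z_j(g_1, g_2) \coloneq (g_1(x_j) - 1)^2 - (g_2(x_j) - 1)^2,
\]
so that $\frac{1}{|D_\goal|}\sum_j Z_j(g_1,g_2)$ is exactly $\ell_\goal(\bar f_{g_1}) - \ell_\goal(\bar f_{g_2})$ and its expectation is $\ell_{\bar\mu_\goal}(\bar f_{g_1}) - \ell_{\bar\mu_\goal}(\bar f_{g_2})$. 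The goal is to bound their deviation uniformly in $(g_1, g_2) \in \GG \times \GG$.

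The key algebraic step is to factor the difference of squares as $Z_j(g_1, g_2) = (g_1(x_j) - g_2(x_j))\bigl(g_1(x_j) + g_2(x_j) - 2\bigr)$. Since $g_1, g_2 \in [0,1]$, we have $|g_1 + g_2 - 2| \le 2$, giving the range bound $|Z_j| \le 2$ and the variance bound
\[
\E[Z_j^2] \le 4\,\E\bigl[(g_1(x_j) - g_2(x_j))^2\bigr] = 4\,\|g_1 - g_2\|_{\bar\mu_\goal}^2.
\]
This variance control is the ingredient that produces the $\|g_1-g_2\|_{\bar\mu_\goal}$ factor multiplying the $1/\sqrt{|D_\goal|}$ term (rather than a crude $O(1/\sqrt{|D_\goal|})$).

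With those two bounds in hand, I would apply Bernstein's inequality to each fixed pair $(\tilde g_1, \tilde g_2)$ in an $\epsilon$-cover $\GG_\epsilon$ of $\GG$ under $\|\cdot\|_\infty$, union bound over the at most $\NN_\infty(\GG,\epsilon)^2$ such pairs at confidence $\delta/\NN_\infty(\GG,\epsilon)^2$, and then extend from the cover to arbitrary $g_1, g_2 \in \GG$ at an additional $O(\epsilon)$ discretization cost (since the map $g \mapsto (g(x)-1)^2$ is $2$-Lipschitz on $[0,1]$). Choosing $\epsilon = 1/|D_\goal|$ makes the discretization contribution negligible compared to the main term, and the square on the covering number inside the union bound only doubles the logarithmic factor, yielding exactly the stated form.

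I do not anticipate a substantive obstacle: this lemma is strictly easier than \cref{lem:sq_err_diff_dyn} precisely because there is no inner expectation over $P(s'|s,a)$ requiring a second Hoeffding step, and no factorization $\bar\mu_\dyn = \mu_\dyn \times \mu_\goal$ requiring the $\frac{1}{\sqrt{NM}}$ cross-term. The only subtlety worth flagging is reconciling the stated bound (which writes $\NN_\infty(\FF, 1/|D_\goal|)$) with the fact that $\ell_\goal(\bar f_g)$ depends only on $g$; the proof above naturally produces $\NN_\infty(\GG, 1/|D_\goal|)$, which is the quantity consistent with how this lemma is invoked in the proof of \cref{lm:generalization error)}.
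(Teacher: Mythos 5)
Your proposal is correct and is essentially the argument the paper intends: the paper's own proof is a one-line reference to repeating the Bernstein-plus-covering argument of \cref{lem:sq_err_diff_dyn} ``using a covering argument just over $\GG$,'' which is exactly the difference-of-squares factorization, variance bound $4\|g_1-g_2\|_{\bar\mu_\goal}^2$, union bound over an $\epsilon$-cover, and $\epsilon = 1/|D_\goal|$ discretization that you spell out. Your flag about $\NN_\infty(\FF,\cdot)$ versus $\NN_\infty(\GG,\cdot)$ is also right: the appearance of $\FF$ in the displayed bound is a typo, as confirmed both by the paper's proof sketch and by the form in which the bound is used in \cref{lm:generalization error)}.
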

\begin{proof}
This result can be proven using the same arguments as used in Lemma~\ref{lem:sq_err_diff_dyn} using a covering argument just over $\GG$.
\end{proof}

Using these two main concentration results, we can now prove Lemmas~\ref{lm:Qpi in the set} and \ref{lm:generalization error)}.

\begin{proof}[\textbf{Proof of Lemma~\ref{lm:Qpi in the set}}]
Note $\bar{Q}^{\bar\pi} = \bar{f}_g$ for some $f\in\FF$ and $g\in\GG$ (\cref{lm:realizability}) and 
\begin{align*}
 0 = {} & \E_{x,a\sim\mu_\dyn} [(\bar{Q}^{\bar\pi}(x,a) - \bar\TT^{\bar\pi} \bar{Q}^{\bar\pi}(x,a))^2] \\
 = {} & \E_{x,a\sim\mu_\dyn} [(\bar{Q}^{\bar\pi}(x,a) - 0 - \gamma \E_{x'\sim \bar{P}(\cdot|x,a)} [ \phi(\bar{Q}^{\bar\pi})(x',\pi) ] )^2]
\end{align*}
The lemma can now be proved by following a similar proof of Theorem A.1 of \citet{xie2021bellman}. The key difference is the use of our concentration bounds in Lemmas \ref{lem:sq_err_diff_dyn} and \ref{lem:sq_err_diff_goal} instead of Lemma A.4 in the proof of \citet{xie2021bellman}. On the other hand, $\ell_\goal (\bar{f}_g) = 0$ because the reward $R(x)$ is deterministic which results in the second inequality.
\end{proof}

\begin{proof}[\textbf{Proof of Lemma~\ref{lm:generalization error)}}]
This result can again be proved using the same steps as in Lemma A.5 from \citet{xie2021bellman} based on the concentration bound in Lemmas \ref{lem:sq_err_diff_dyn} and \ref{lem:sq_err_diff_goal}.
\end{proof}
\section{Experimental Details}

\subsection{Hyperparameters and Experimental Settings}
\label{app:hyper}

\paragraph{IQL.} For IQL, we keep the hyperparameter of $\gamma = 0.99$, $\tau=0.9$, $\beta=10.0$, and $\alpha = 0.005$ in~\cite{kostrikov2021offline}, and tune other hyperparameters on the antmaze-medium-play-v2 environment and choose batch size = 1024 from candidate choices \{256, 512, 1024, 2046\}, learning rate = $10^{-4}$ from candidate choices \{$5\cdot 10^{-5}, 10^{-4}, 3\cdot 10^{-4}$\} and 3 layer MLP with RuLU activating and 256 hidden units for all networks. We use the same set of IQL hyperparameters for both our methods and all the baseline methods included in Section~\ref{exp:baseline}, and apply it to all environments. 
In the experiments, we follow the convention of the $-1/0$ reward in the IQL implementation for Antmaze, which can be shown to be the same as the $0/1$ reward notion in terms of ranking policies under the discounted MDP setting.


\paragraph{Reward Prediction (RP).}
For naive reward prediction, we use the full context-goal dataset as positive data, and train a reward model with 3-layer MLP and ReLU activations, learning rate = $10^{-4}$, batch size = 1024, and training for 100 epochs for convergence. 
To label the transition dataset, we need to find some appropriate threshold to label states predicted as goals given contexts. We choose the percentile as 5\% in the reward distribution evaluated by the context-goal set as the threshold to label goals (if a reward is larger than the threshold than it is labeled as terminal), from candidate choices \{0\%, 5\%, 10\%\}. Then we apply it to all environments. Another trick we apply for the reward prediction is that instead of predicting 0 for the context-goal dataset, we let it predict 1 but shift the reward prediction by -1 during reward evaluation, which prevents the model from learning all 0 weights. Similar tricks are also used in other reward learning baselines.

\paragraph{UDS+RP.} We use the same structure and training procedure for the reward model as RP, except that we also randomly sample a minibatch of ``negative" contextual transitions with the same batch size for a balanced distribution, which is constructed by randomly sampling combinations of a state in the trajectory-only dataset and a context from the context-goal dataset. To create a balanced distribution of positive and negative samples, we sample from each dataset with equal probability. For the threshold, we choose the percentile as 5\% in the reward distribution evaluated by the context-goal set as the threshold to label goals in the antmaze-medium-play-v2 environment, from candidate choices \{0\%, 5\%, 10\%\}. Then we apply it to all environments.

\paragraph{PDS.} We use the same structure and training procedure for the reward model as RP, except that we train an ensemble of $10$ networks as in~\cite{hu2023provable}. To select the threshold percentile and the pessimistic weight $k$, we choose the percentile as 15\% in the reward distribution evaluated by the context-goal set as the threshold to label goals from candidate choices \{0\%, 5\%, 10\%, 15\%, 20\%\}, and $k=15$ from the candidate choices \{5,10,15,20\} in the antmaze-medium-play-v2 environment. Then we apply them to all environments.
\paragraph{\algo (ours).} We do not require extra parameters other than the possibility of sampling from the real and fake transitions. 
Intuitively, we should sample from both datasets with the same probability to create an overall balanced distribution. We ran additional experiments to study the effect of this sampling ratio hyperparameter: ratio of samples from the context-goal dataset $D_\goal$ to total samples in each minibatch. Table~\ref{tab:sample_ratio} shows that \algo well as long as the ratio is roughly balanced in sampling from both dataset.
\paragraph{Compute Resources.} For all methods, each training run takes about 8h on a NVIDIA T4 GPU. 

\begin{table*}[htpb]
\centering
\caption{Average success rate (\%) in AntMaze-v2 from all environments, with different sampling ratios from the context-goal dataset.}
\scalebox{0.9}{
\begin{tabular}{l|ccccc}
\toprule
  Env/Ratio             & 0.1      & 0.3 & 0.5       & 0.7 & 0.9  \\
   \hline
umaze         & 91.6±1.3&92.4±1.0&94.8±1.3&86.4±1.8&84.8±3.0 \\
umaze diverse  &76.8±1.9 &79.2±1.6&72.8±7.7&76.6±2.3&65.4±8.8\\
medium play    &82.3±2.1&85.0±1.8&75.8±1.9&72.8±1.3&76.6±1.3 \\
medium diverse & 79.4±1.6&76.6±3.0&84.5±5.2&75.6±2.0&72.0±3.5\\
large play  &50.8±2.0&45.2±3.7&60.0±7.6&43.6±2.3&46.6±2.3\\
large diverse &35.8±5.7&37.4±4.7&36.8±6.9&34.4±2.4&27.0±2.1 \\
\hline
average &69.5&68.9 &70.8&64.9& 62.1\\
 \bottomrule
\end{tabular}
}
\label{tab:sample_ratio}
\end{table*}

\subsection{Context-Goal dataset Construction and Environmental Evaluation.}
\label{app:context_setup}

Here we introduce the context-goal dataset in the three levels of context-goal setup mentioned in Section~\ref{sec:experiments} and how to evaluate in each setup. We also include our code implementation for reference.

\paragraph{Original Antmaze.} We extract the 2D locations from the states in the trajectory dataset with terminal=True as the context (in original antmaze, it suffices to reach the $L_2$ ball with radius 0.5 around the center), where the contexts are distributed very closely as visualized in Figure~\ref{fig:cgo}(a), and the corresponding states serve as the goal examples with Gaussian perturbations $N(0,0.05)$ on the dimensions other than the 2D location.

\paragraph{Four Rooms.} For each maze map, we partition 4 rooms like Figure~\ref{fig:cgo}(b) and use the room number as the context. To construct goal examples, we create a copy of all states in the trajectory dataset, perturb the states in the copy by $N(0,0.05)$ on each dimension, and then randomly select the states (up to 20K) according to the room partition.


\paragraph{Random Cells.} For each maze map, we construct a range of non-wall 2D locations in the maze map and uniformly sample from it to get the training contexts. To construct the goal set given context, we randomly sample up to 20K states with the 2D locations within the $L_2$ ball with radius $2$. Figure~\ref{fig:cgo}(C) is a intuitive visualization of the corresponding context-goal sets. For test distributions, we have two settings: 1) the same as the training distribution; 2) test contexts are drawn from a limited area that is far away from the starting point of the agent.

\paragraph{Evaluation.} We follow the conventional evaluation procedure in~\cite{kostrikov2021offline}, where the success rate is normalized to be 0-100 and evaluated with 100 trajectories. We report the result with standard error across 5 random seeds. The oracle condition we define in each context-goal setup is used to evaluate whether the agent has successfully reached the goal and also defines the termination of an episode.

\subsection{Reward Model Evaluation}
\label{app:reward_eval}
For reward learning baselines, we evaluate the learned reward model to showcase whether the learned reward function can successfully capture context-goal relationships. 

\paragraph{Evaluation dataset construction.}We construct the positive dataset from context-goal examples, and the negative dataset from the combination of the context set and all states in the trajectory-only data, then use the oracle context-goal definition in each setup to filter out the positive ones. We then evaluate the predicted reward on both positive and negative datasets, generating boxplots to visualize the distributions of the predicted reward for both datasets. 


\paragraph{Results.} Here we present boxplots for reward models with experimental setups in Section~\ref{sec:results}.
Overall we observe that PDS+RP is consistently better at separating positive and negative distributions than UDS and naive reward prediction. However, PDS can still fail at fully separating positive and negative examples.


\begin{figure}[h]
\begin{subfigure}[b]{0.29\textwidth}
     \centering
     \includegraphics[width=\textwidth]{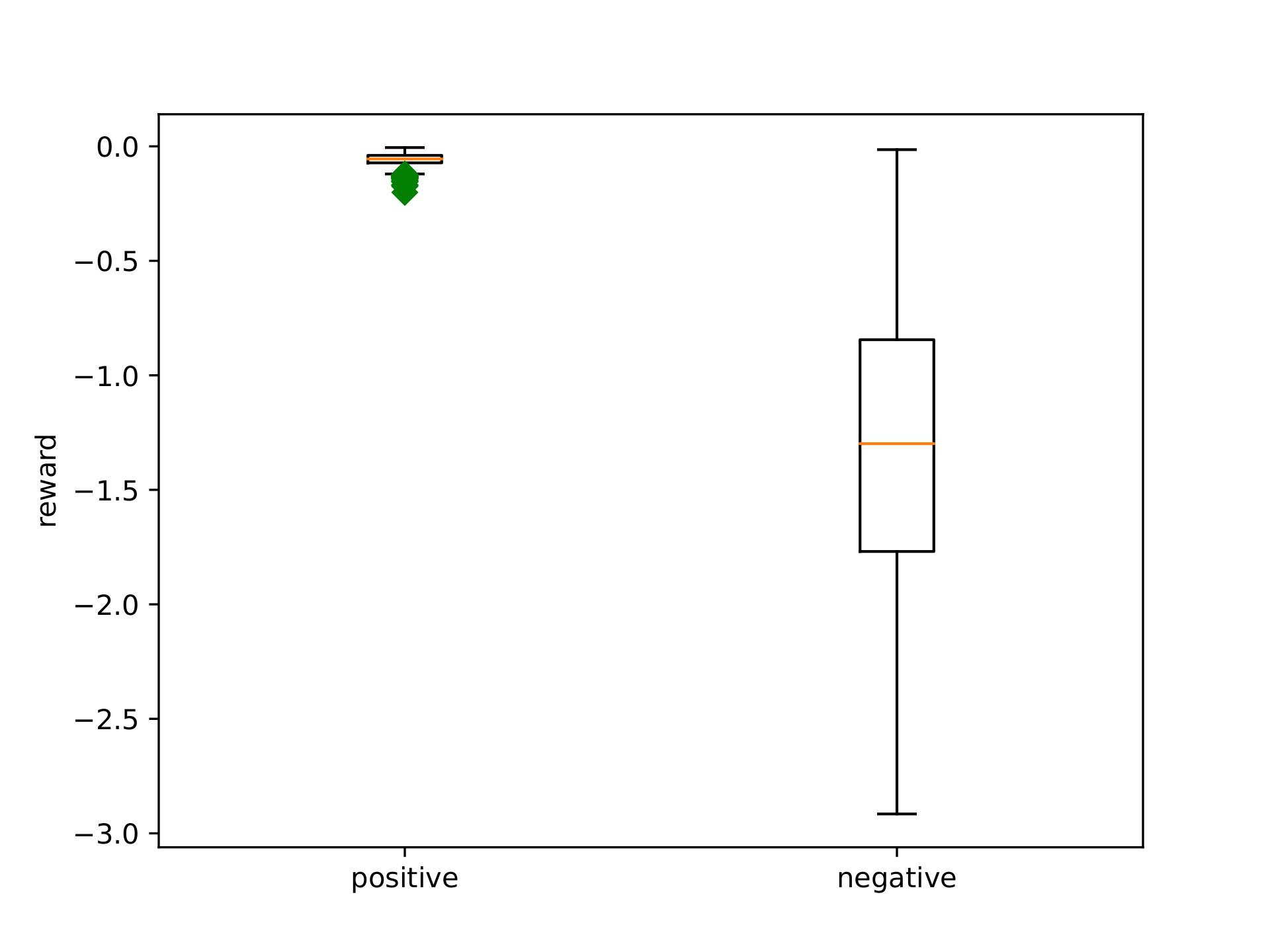}
     \caption{PDS}
 \end{subfigure}
 \hfill
 \begin{subfigure}[b]{0.29\textwidth}
     \centering
     \includegraphics[width=\textwidth]{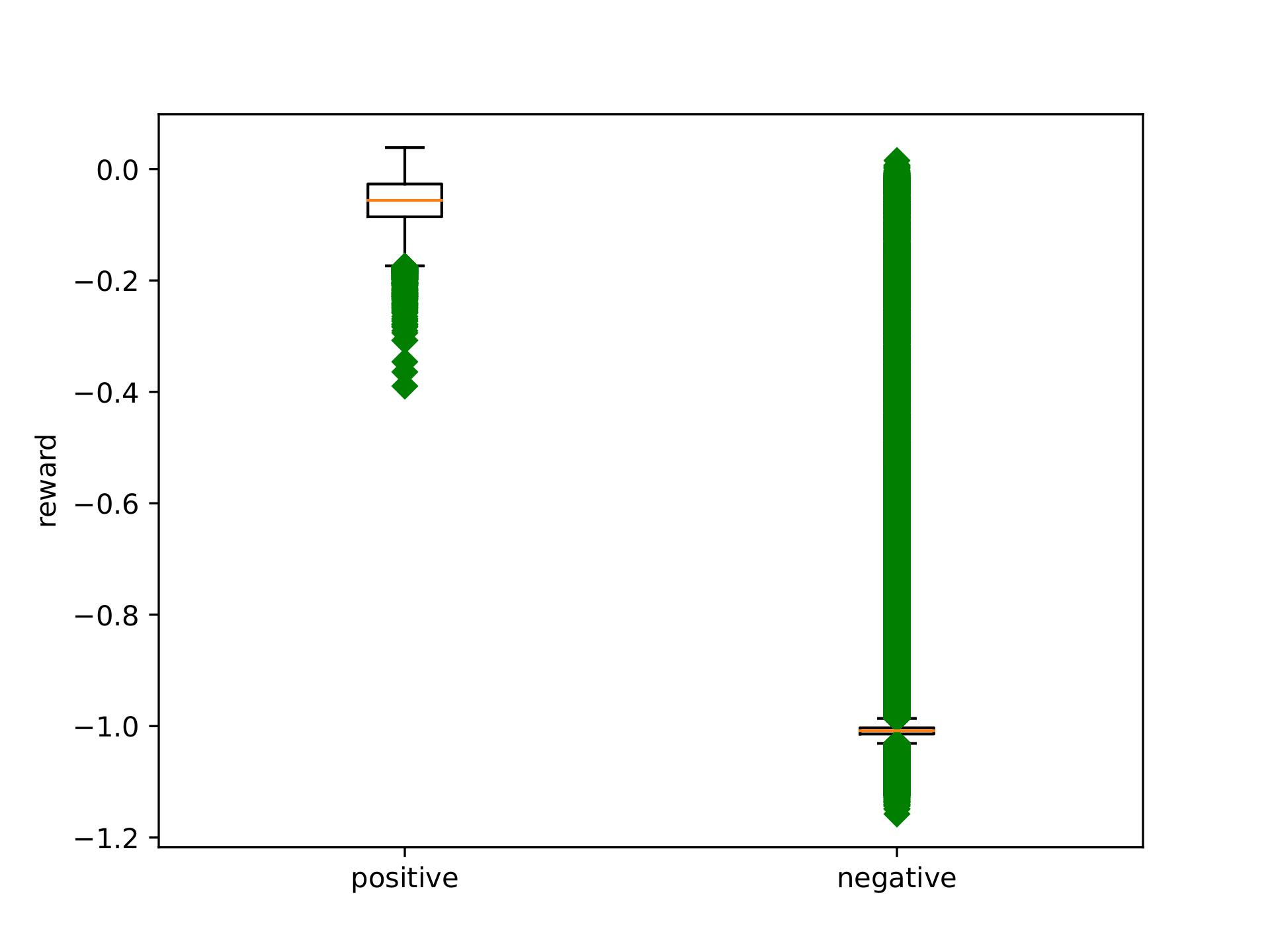}
     \caption{UDS+RP}
 \end{subfigure}
 \hfill
  \begin{subfigure}[b]{0.29\textwidth}
     \centering
     \includegraphics[width=\textwidth]{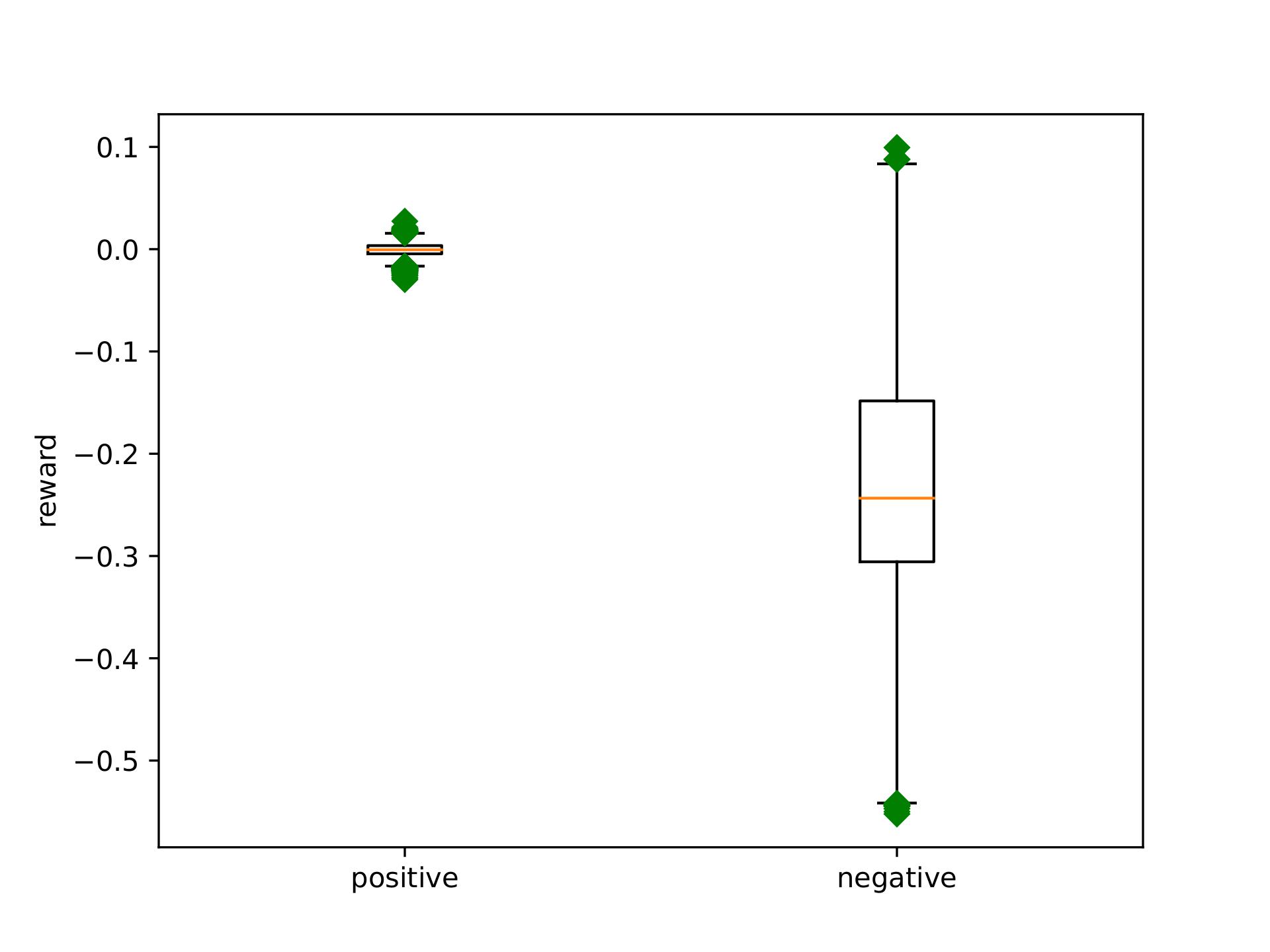}
     \caption{RP}
 \end{subfigure}
 \caption{Reward model evaluation for the large-diverse dataset for original AntMaze environment. Green dots are outliers.}
 \label{fig:large}
 \vspace{-2mm}
\end{figure}

\begin{figure}[h]
\begin{subfigure}[b]{0.32\textwidth}
     \centering
     \includegraphics[width=\textwidth]{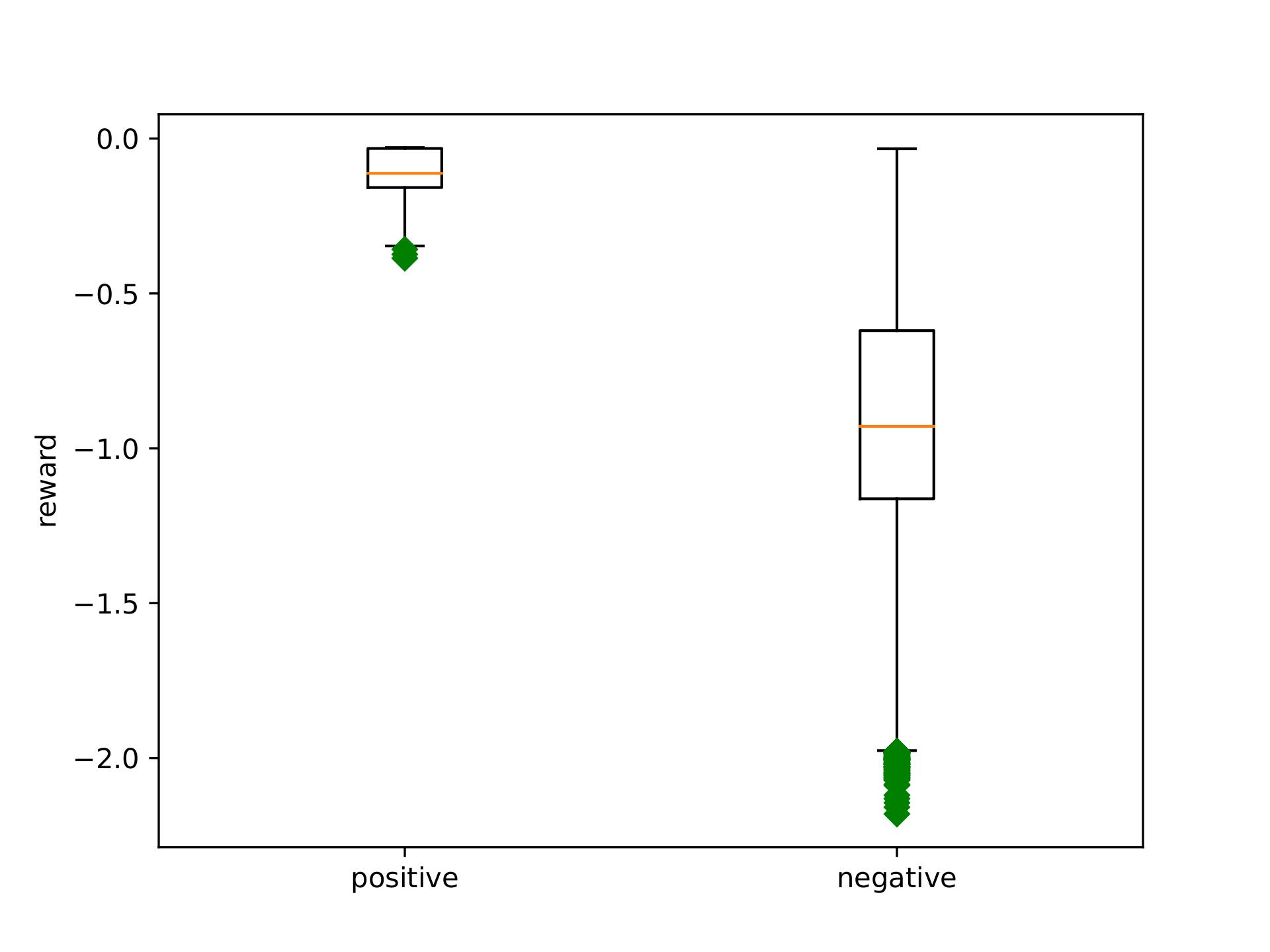}
     \caption{PDS}
 \end{subfigure}
 \hfill
 \begin{subfigure}[b]{0.32\textwidth}
     \centering
     \includegraphics[width=\textwidth]{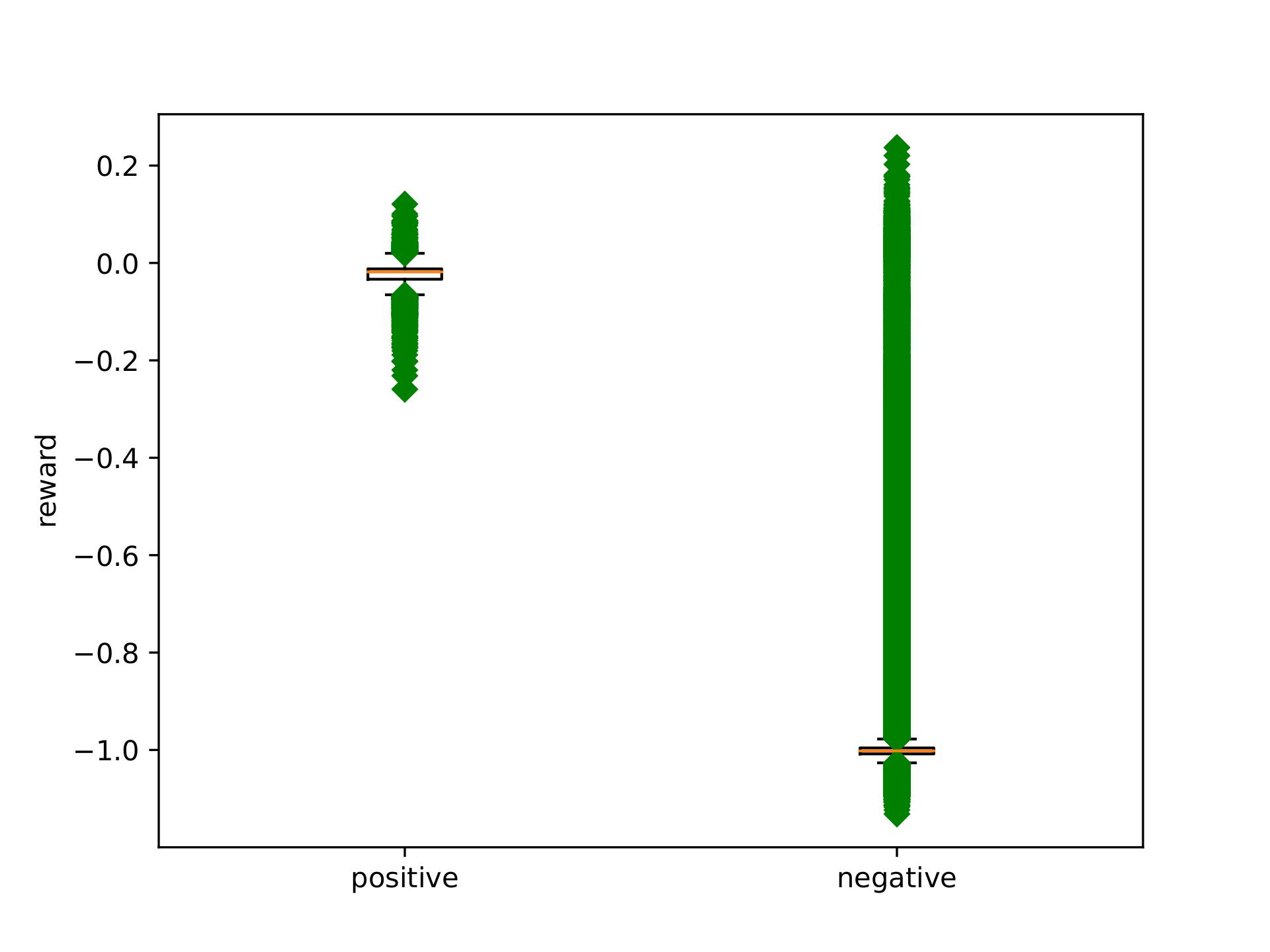}
     \caption{UDS+RP}
 \end{subfigure}
 \hfill
  \begin{subfigure}[b]{0.32\textwidth}
     \centering
     \includegraphics[width=\textwidth]{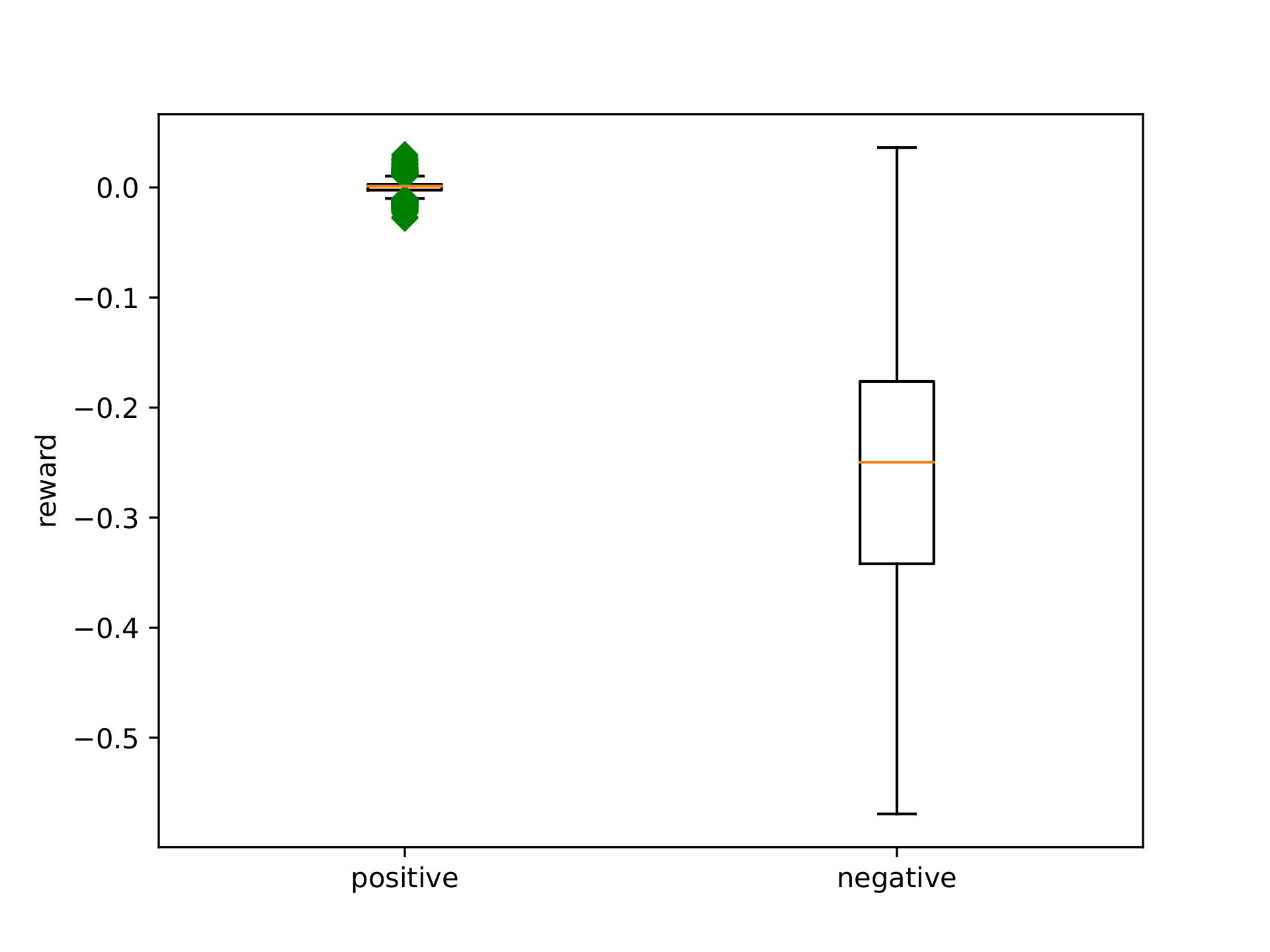}
     \caption{RP}
 \end{subfigure}
 \caption{Reward model evaluation for the medium-diverse dataset for the original AntMaze environment. Green dots are outliers. }
 \label{fig:medium}
\end{figure}

\begin{figure}[htpb]
\begin{subfigure}[b]{0.32\textwidth}
     \centering
     \includegraphics[width=\textwidth]{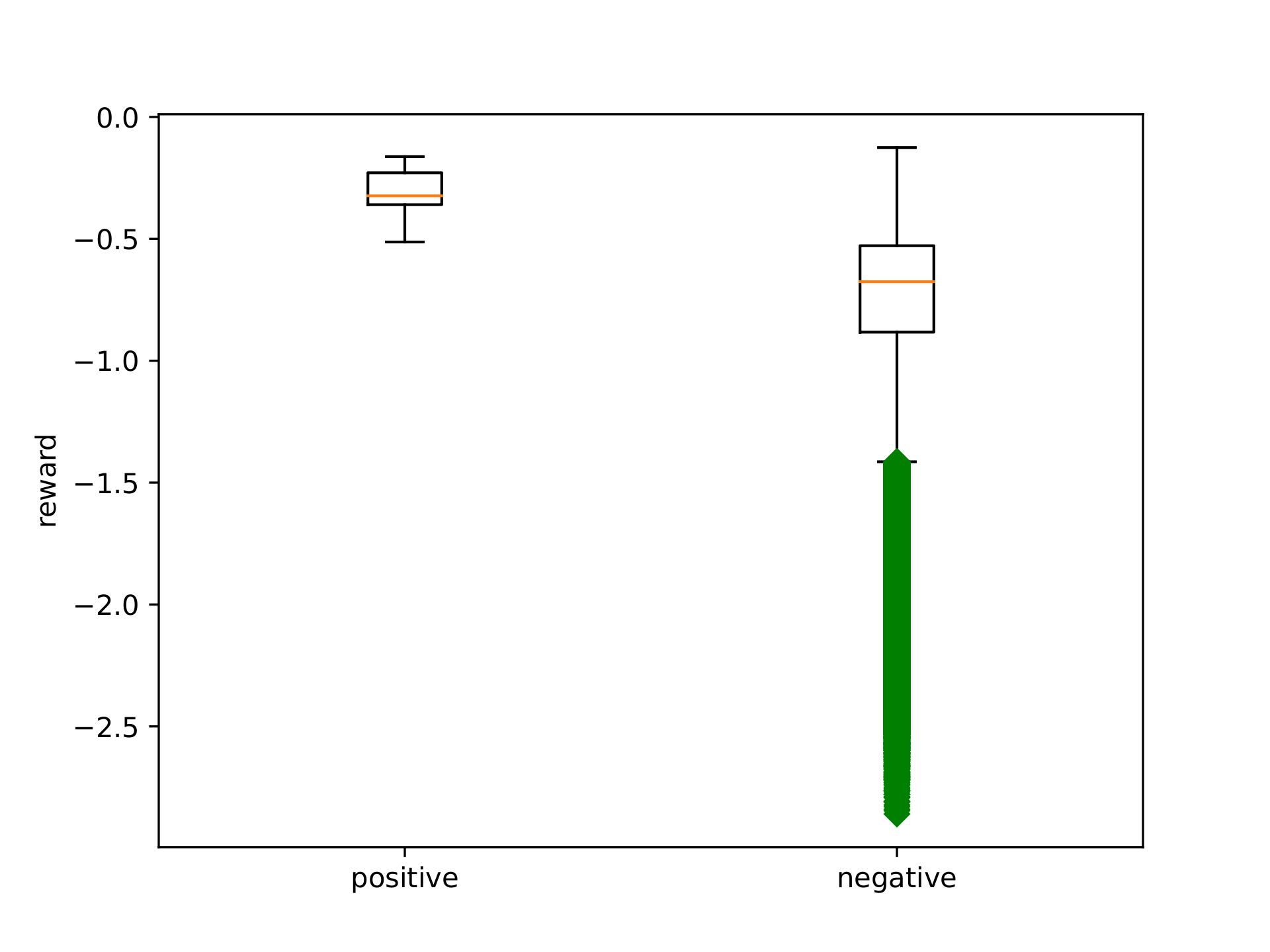}
     \caption{PDS}
 \end{subfigure}
 \hfill
 \begin{subfigure}[b]{0.32\textwidth}
     \centering
     \includegraphics[width=\textwidth]{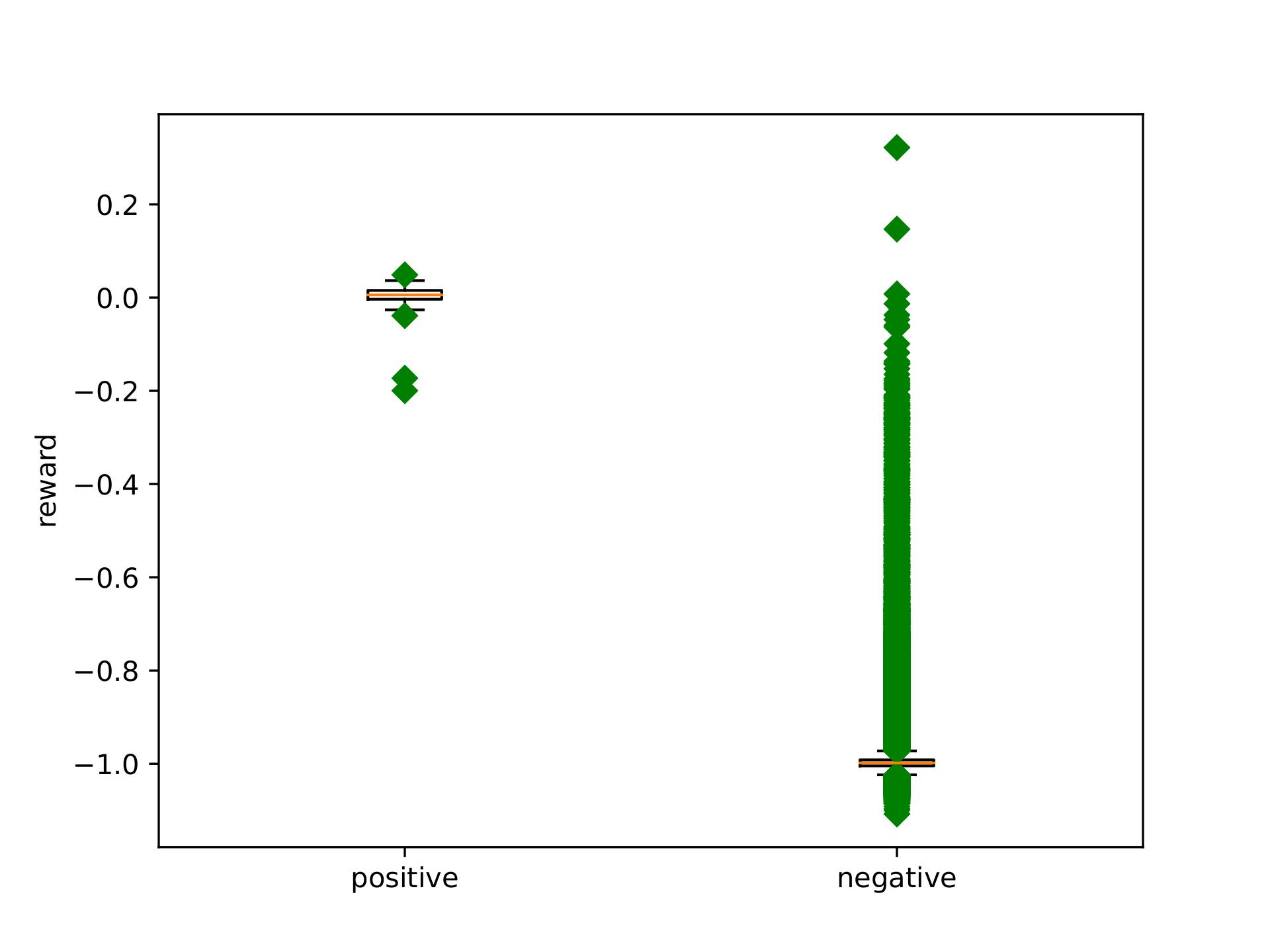}
     \caption{UDS+RP}
 \end{subfigure}
 \hfill
  \begin{subfigure}[b]{0.32\textwidth}
     \centering
     \includegraphics[width=\textwidth]{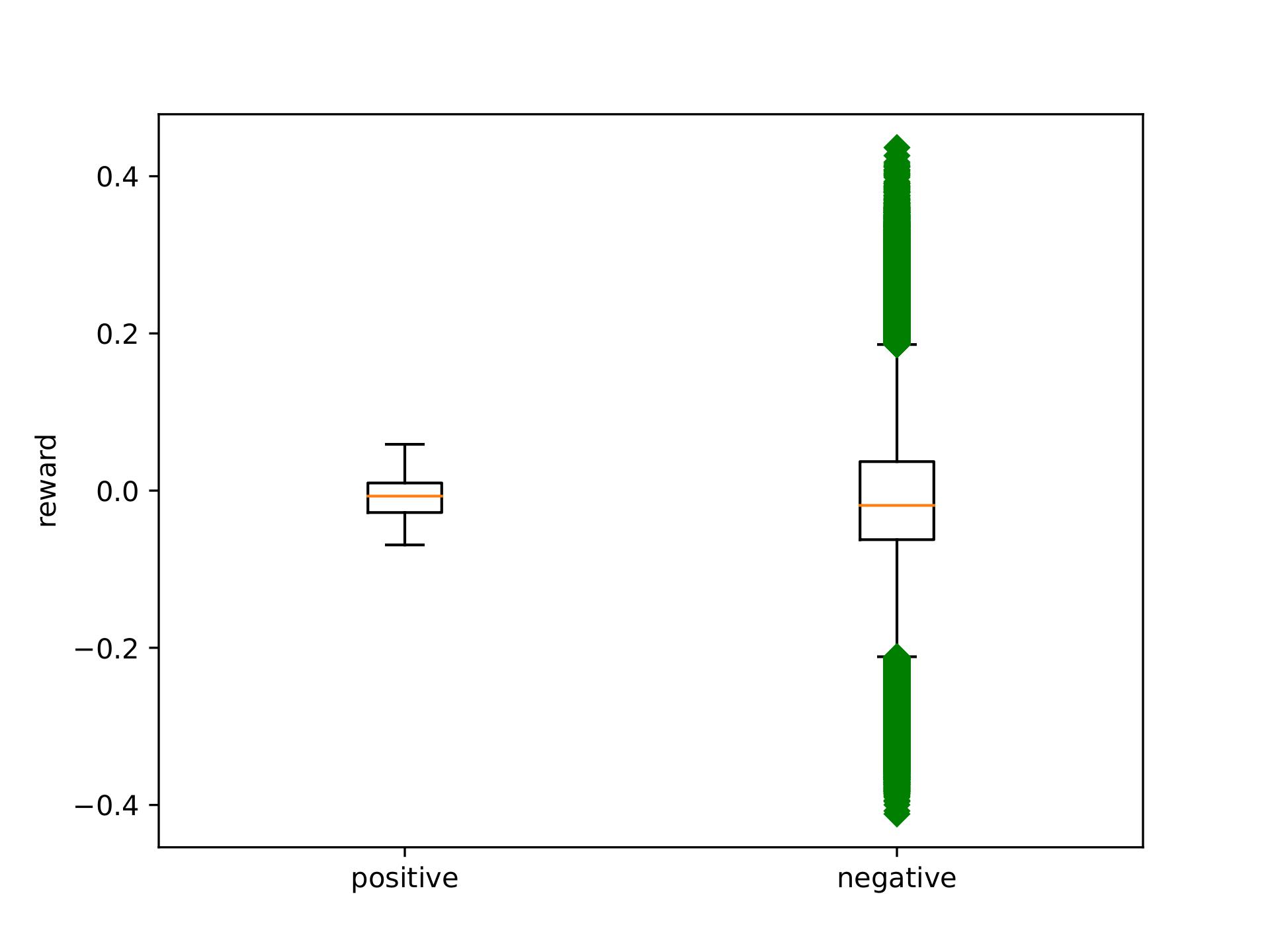}
     \caption{RP}
 \end{subfigure}
 \caption{Reward model evaluation for the umaze-diverse dataset for the original AntMaze environment. Green dots are outliers. }
 \label{fig:umaze}
\end{figure}

\begin{figure}[H]
\begin{subfigure}[b]{0.24\textwidth}
     \centering
     \includegraphics[width=\textwidth]{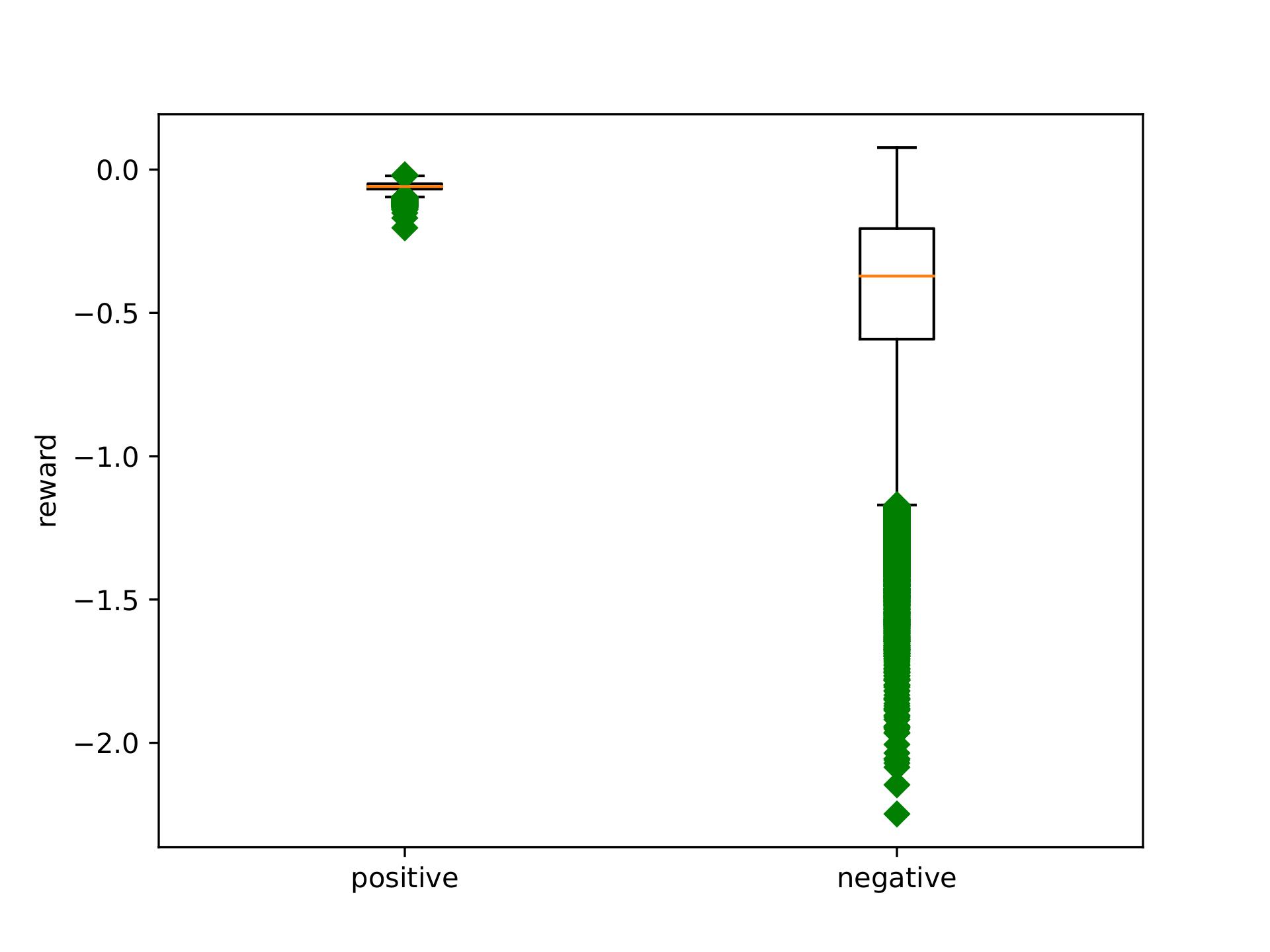}
     \caption{Large, PDS}
 \end{subfigure}
 \hfill
 \begin{subfigure}[b]{0.24\textwidth}
    \centering
\includegraphics[width=\textwidth]{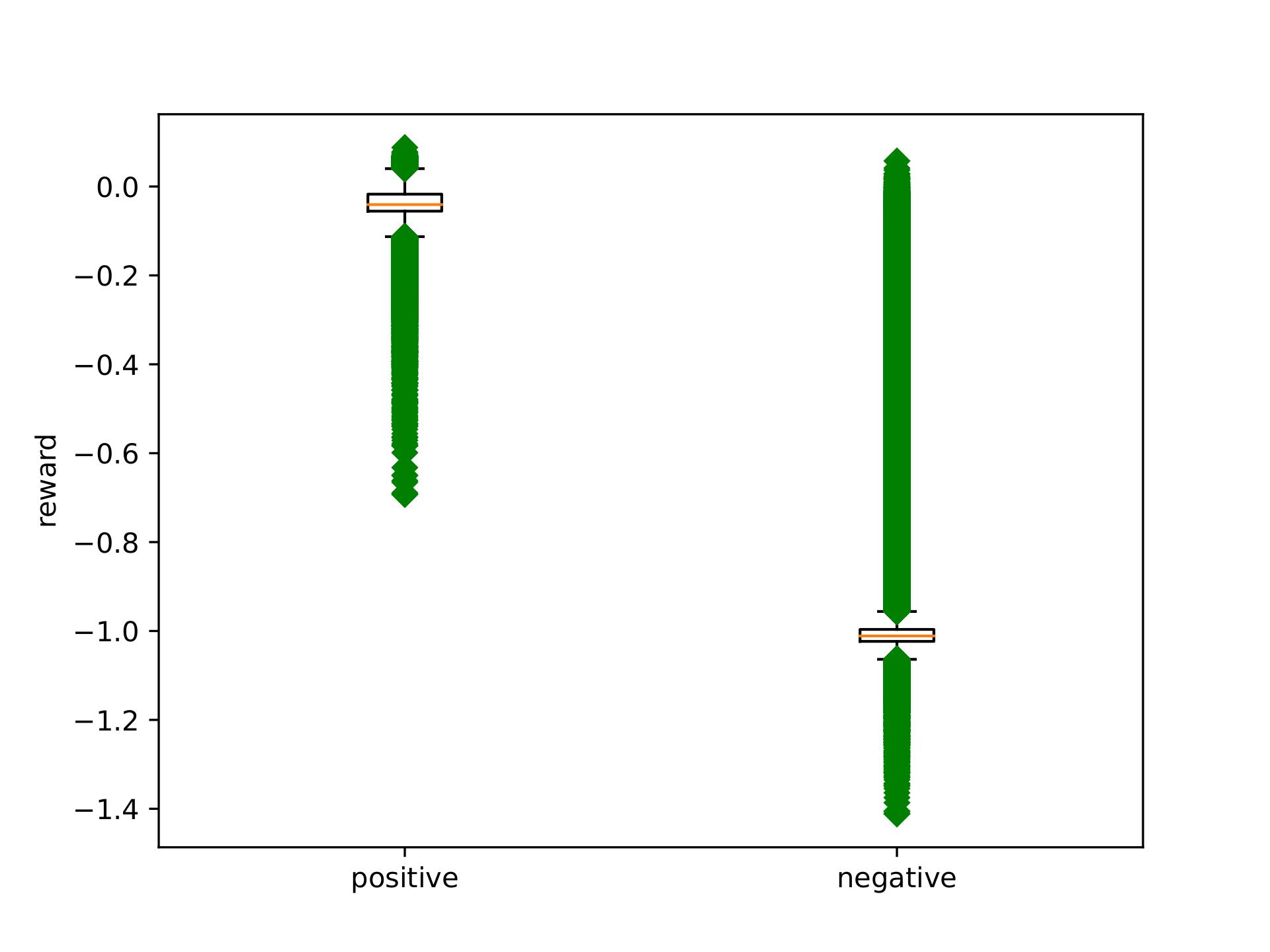}
     \caption{Large, UDS+RP}
 \end{subfigure}
\hfill
\begin{subfigure}[b]{0.24\textwidth}
     \centering
     \includegraphics[width=\textwidth]{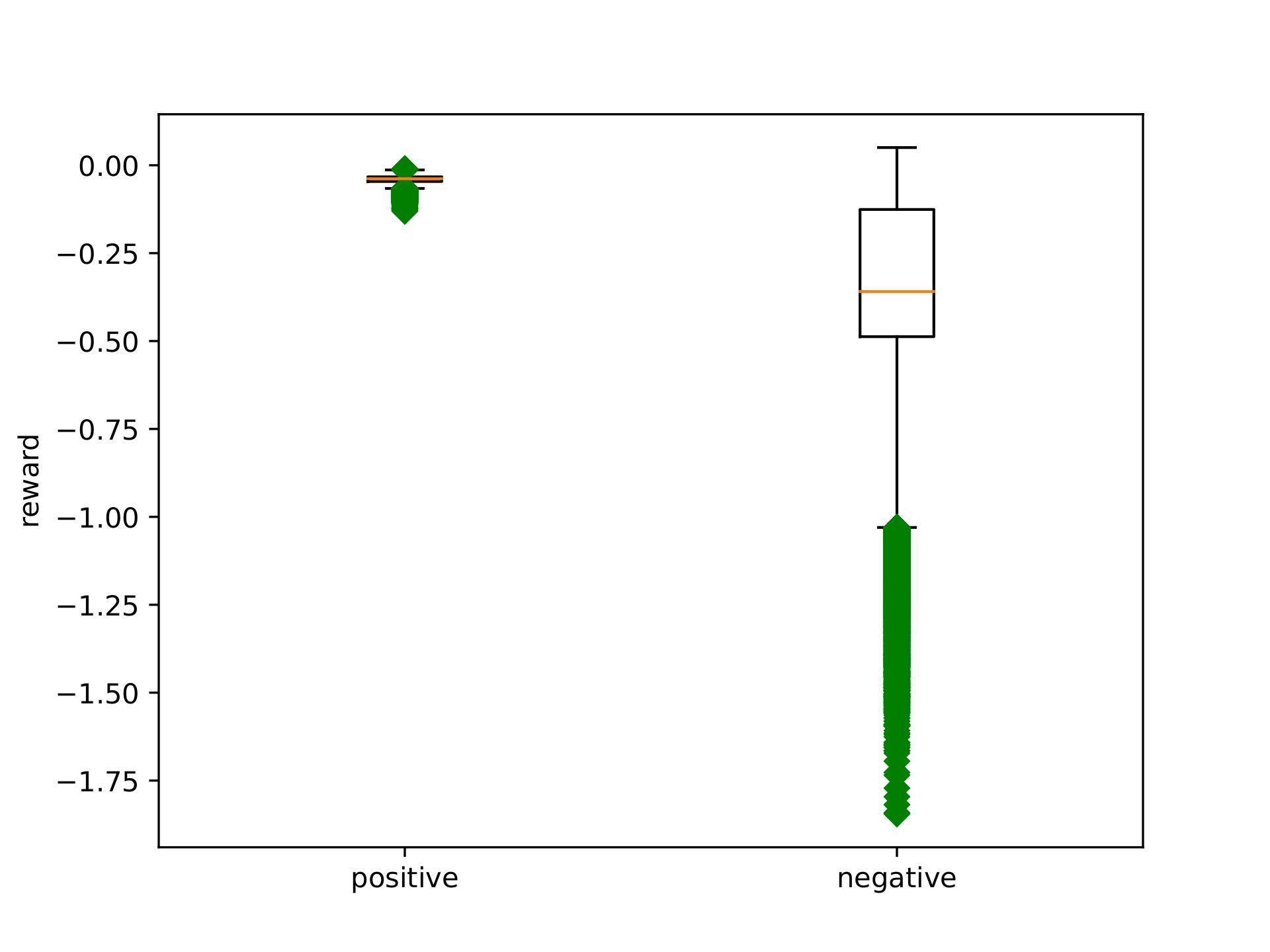}
     \caption{Medium, PDS}

 \end{subfigure}
 \hfill
 \begin{subfigure}[b]{0.24\textwidth}
    \centering
\includegraphics[width=\textwidth]{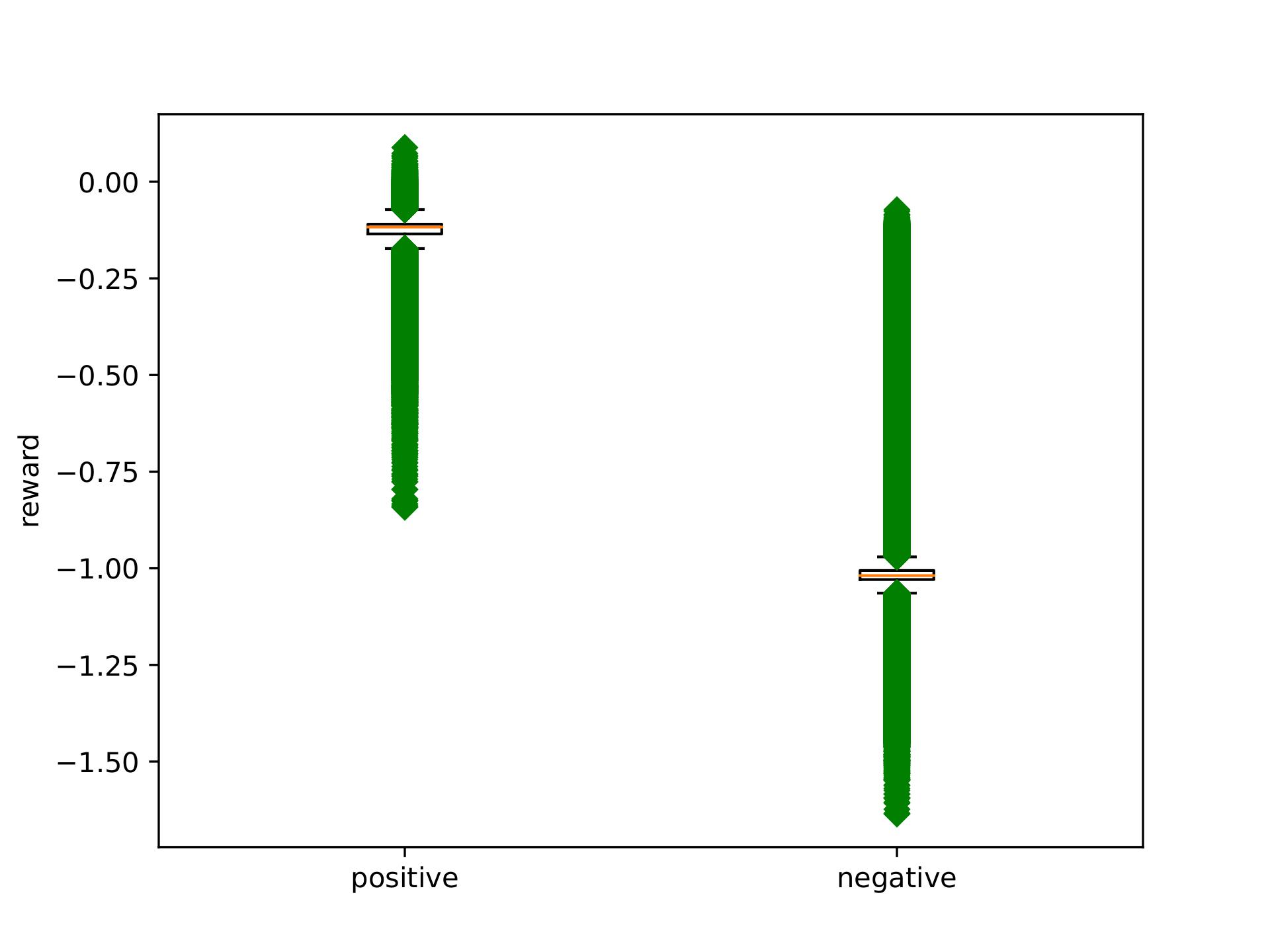}
     \caption{Medium, UDS+RP}

 \end{subfigure}
 \caption{Reward model evaluation for the Four Rooms environment. Green dots are outliers. }
 \label{fig:four_rooms}
\end{figure}

\begin{figure}[H]
\begin{subfigure}[b]{0.24\textwidth}
     \centering
     \includegraphics[width=\textwidth]{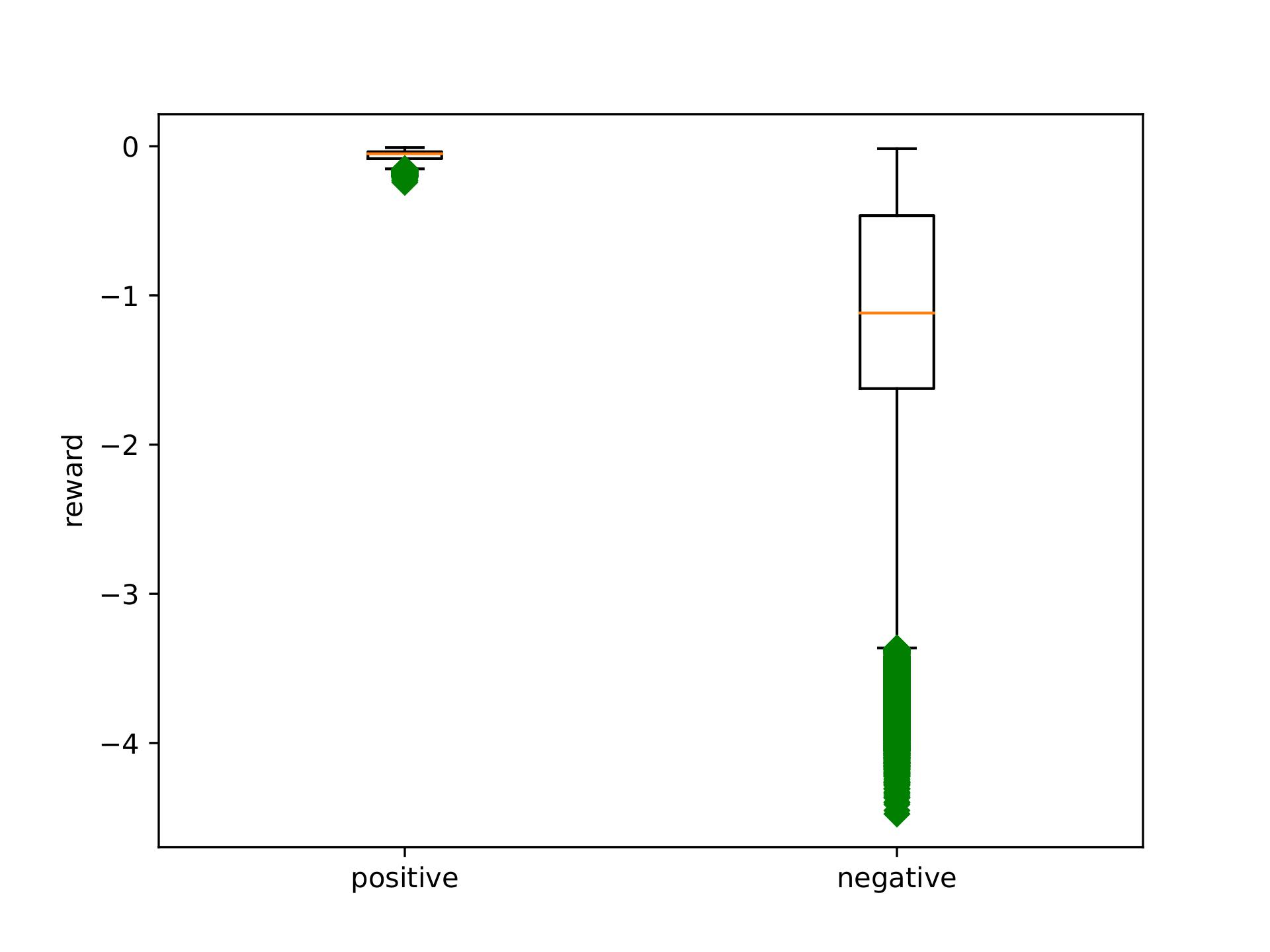}
     \caption{Large, PDS}

 \end{subfigure}
 \hfill
 \begin{subfigure}[b]{0.24\textwidth}
    \centering
\includegraphics[width=\textwidth]{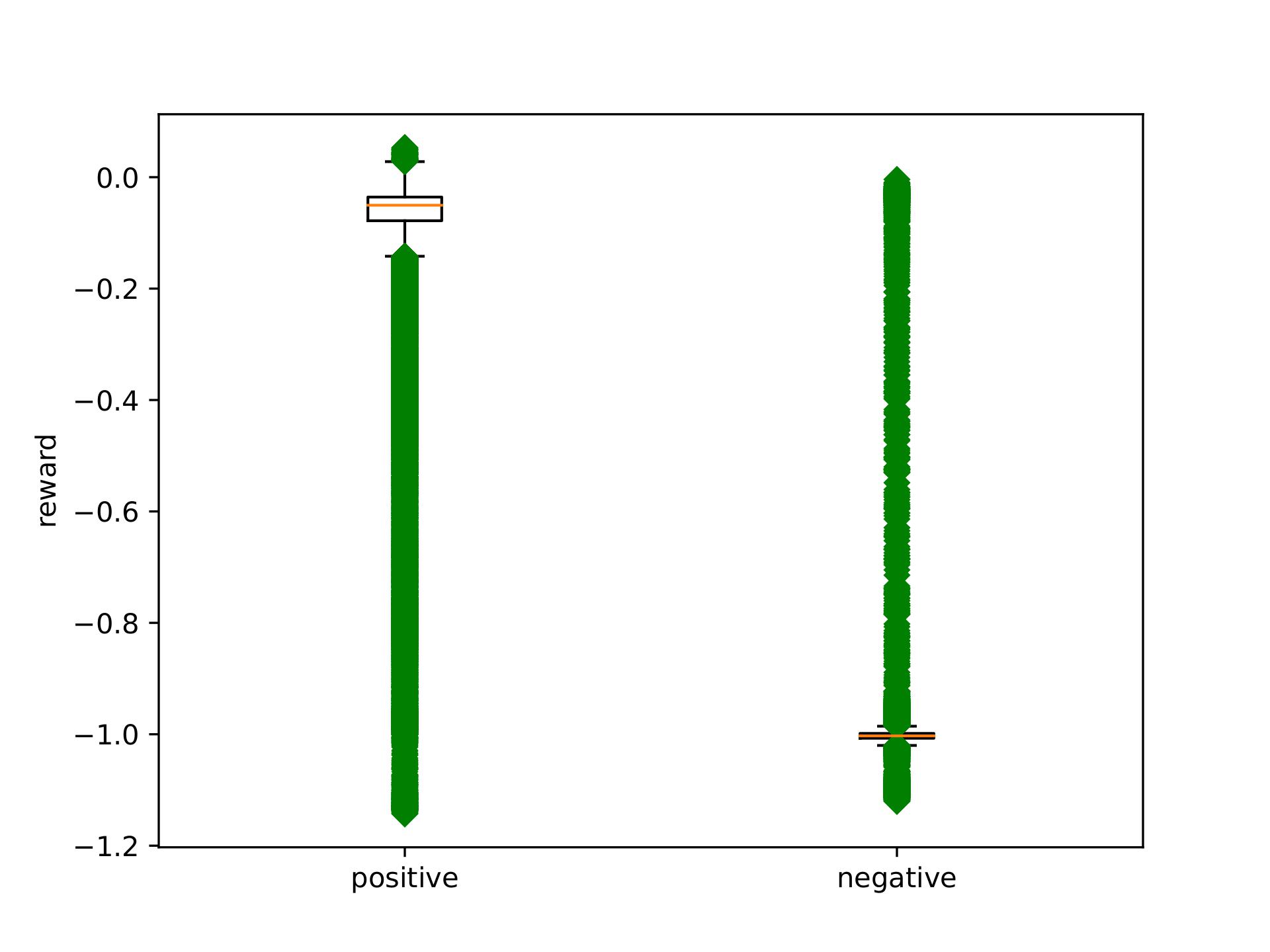}
     \caption{Large, UDS+RP}

 \end{subfigure}
\hfill
\begin{subfigure}[b]{0.24\textwidth}
     \centering
     \includegraphics[width=\textwidth]{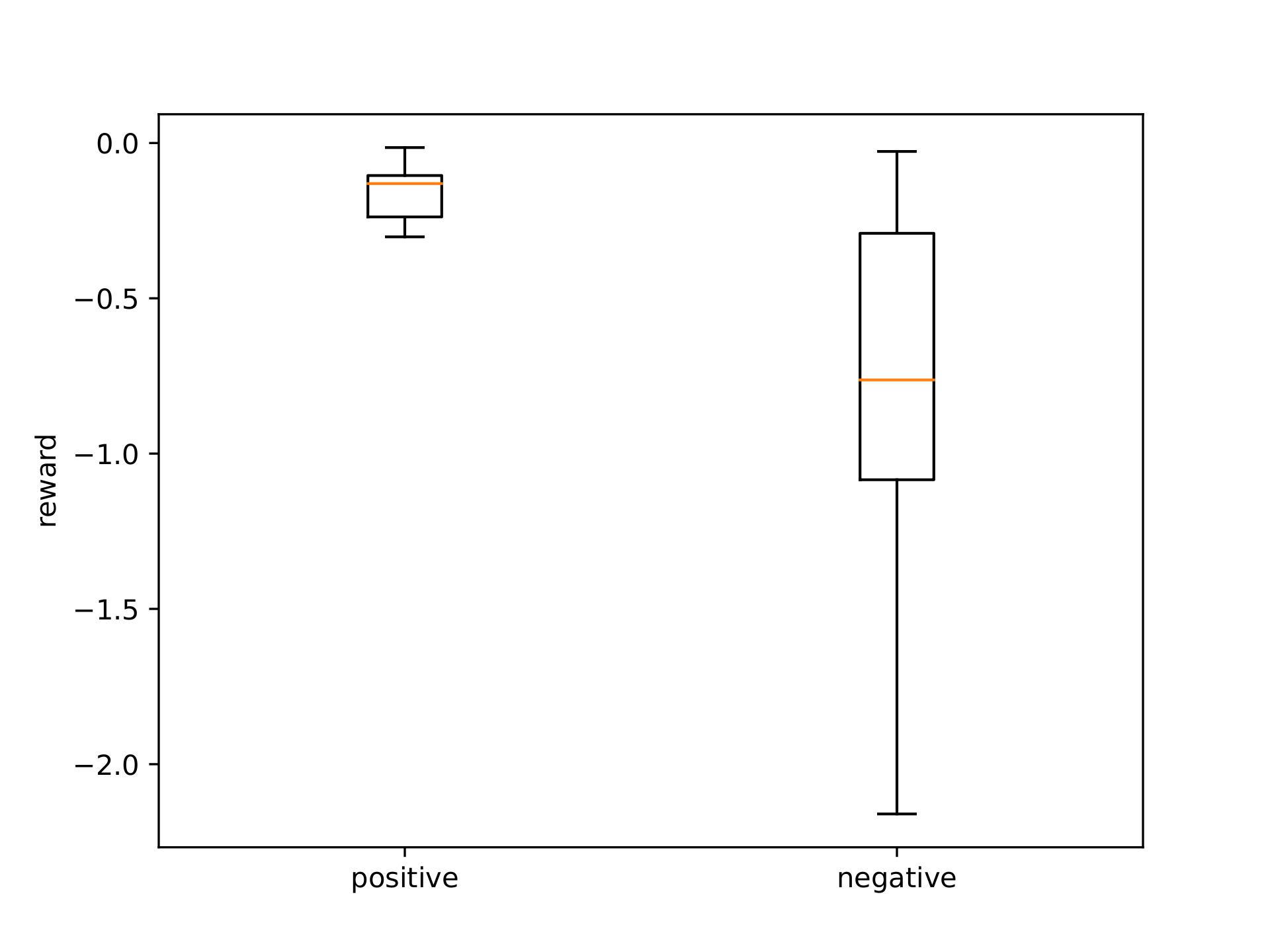}
     \caption{Medium, PDS}

 \end{subfigure}
 \hfill
 \begin{subfigure}[b]{0.24\textwidth}
    \centering
\includegraphics[width=\textwidth]{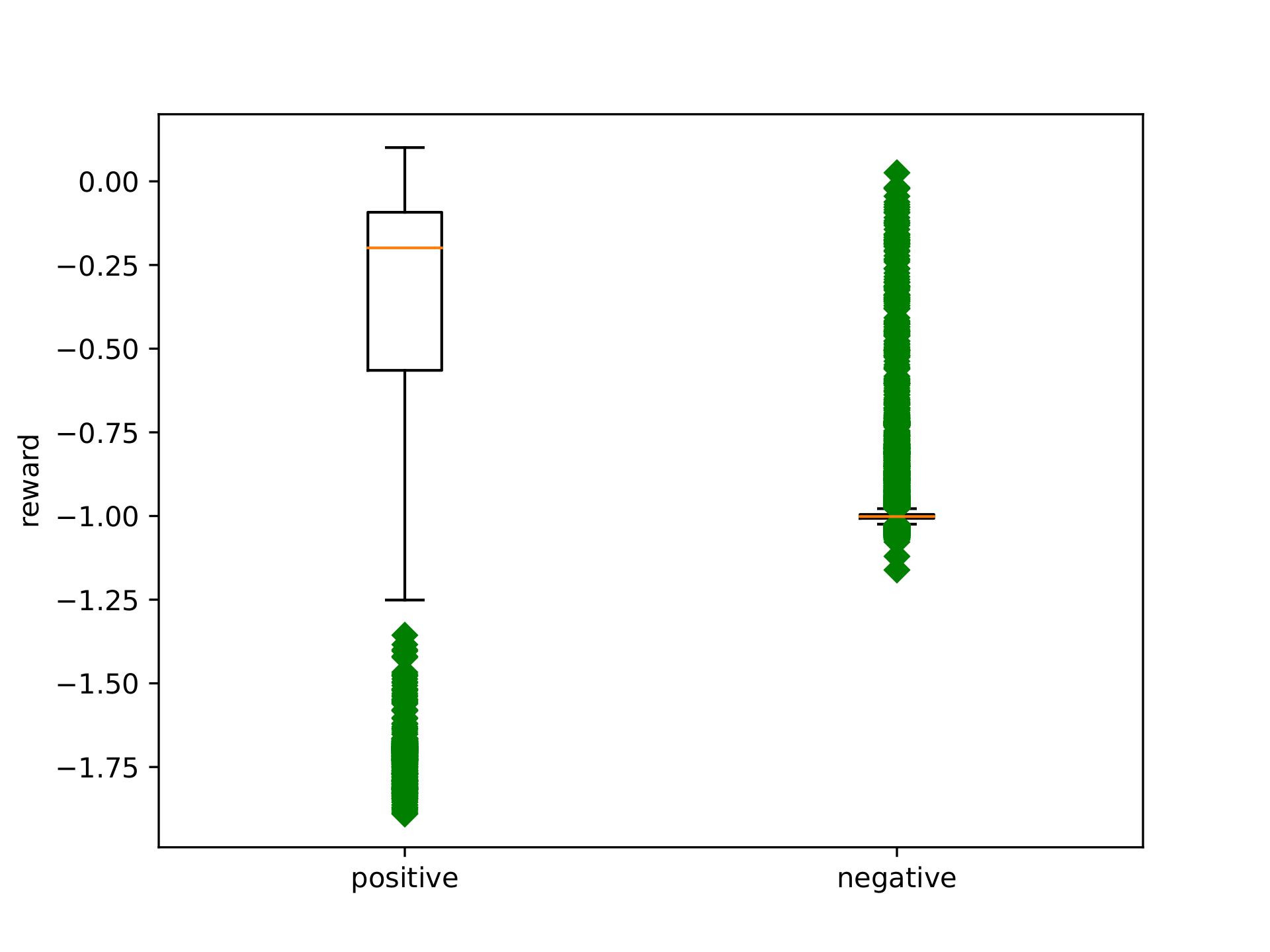}
     \caption{Medium, UDS+RP}

 \end{subfigure}
 \caption{Reward evaluation for Random Cells environment (the test context distribution is the same as training). Green dots are outliers. }
 \label{fig:cell}
\end{figure}



\newpage
\clearpage

\section*{NeurIPS Paper Checklist}

\begin{enumerate}

\item {\bf Claims}
    \item[] Question: Do the main claims made in the abstract and introduction accurately reflect the paper's contributions and scope?
    \item[] Answer: \answerYes{} 
    \item[] Justification: All the theoretical results and experimental findings referenced in the abstract and introduction are included in the paper.
    \item[] Guidelines:
    \begin{itemize}
        \item The answer NA means that the abstract and introduction do not include the claims made in the paper.
        \item The abstract and/or introduction should clearly state the claims made, including the contributions made in the paper and important assumptions and limitations. A No or NA answer to this question will not be perceived well by the reviewers. 
        \item The claims made should match theoretical and experimental results, and reflect how much the results can be expected to generalize to other settings. 
        \item It is fine to include aspirational goals as motivation as long as it is clear that these goals are not attained by the paper. 
    \end{itemize}

\item {\bf Limitations}
    \item[] Question: Does the paper discuss the limitations of the work performed by the authors?
    \item[] Answer: \answerYes{} 
    \item[] Justification: We have added a limitations section (see Section~\ref{sec:limitation}).
    \item[] Guidelines:
    \begin{itemize}
        \item The answer NA means that the paper has no limitation while the answer No means that the paper has limitations, but those are not discussed in the paper. 
        \item The authors are encouraged to create a separate "Limitations" section in their paper.
        \item The paper should point out any strong assumptions and how robust the results are to violations of these assumptions (e.g., independence assumptions, noiseless settings, model well-specification, asymptotic approximations only holding locally). The authors should reflect on how these assumptions might be violated in practice and what the implications would be.
        \item The authors should reflect on the scope of the claims made, e.g., if the approach was only tested on a few datasets or with a few runs. In general, empirical results often depend on implicit assumptions, which should be articulated.
        \item The authors should reflect on the factors that influence the performance of the approach. For example, a facial recognition algorithm may perform poorly when image resolution is low or images are taken in low lighting. Or a speech-to-text system might not be used reliably to provide closed captions for online lectures because it fails to handle technical jargon.
        \item The authors should discuss the computational efficiency of the proposed algorithms and how they scale with dataset size.
        \item If applicable, the authors should discuss possible limitations of their approach to address problems of privacy and fairness.
        \item While the authors might fear that complete honesty about limitations might be used by reviewers as grounds for rejection, a worse outcome might be that reviewers discover limitations that aren't acknowledged in the paper. The authors should use their best judgment and recognize that individual actions in favor of transparency play an important role in developing norms that preserve the integrity of the community. Reviewers will be specifically instructed to not penalize honesty concerning limitations.
    \end{itemize}

\item {\bf Theory Assumptions and Proofs}
    \item[] Question: For each theoretical result, does the paper provide the full set of assumptions and a complete (and correct) proof?
    \item[] Answer: \answerYes{} 
    \item[] Justification: We include the full detailed proof of our theoretical result in the appendix.
    \item[] Guidelines:
    \begin{itemize}
        \item The answer NA means that the paper does not include theoretical results. 
        \item All the theorems, formulas, and proofs in the paper should be numbered and cross-referenced.
        \item All assumptions should be clearly stated or referenced in the statement of any theorems.
        \item The proofs can either appear in the main paper or the supplemental material, but if they appear in the supplemental material, the authors are encouraged to provide a short proof sketch to provide intuition. 
        \item Inversely, any informal proof provided in the core of the paper should be complemented by formal proofs provided in appendix or supplemental material.
        \item Theorems and Lemmas that the proof relies upon should be properly referenced. 
    \end{itemize}

    \item {\bf Experimental Result Reproducibility}
    \item[] Question: Does the paper fully disclose all the information needed to reproduce the main experimental results of the paper to the extent that it affects the main claims and/or conclusions of the paper (regardless of whether the code and data are provided or not)?
    \item[] Answer: \answerYes{} 
    \item[] Justification: Our experiments are based on standard RL domains and we discuss all the modifications that we consider for our setup in detail. Further, all algorithmic details and hyperparameters are also discussed.
    \item[] Guidelines:
    \begin{itemize}
        \item The answer NA means that the paper does not include experiments.
        \item If the paper includes experiments, a No answer to this question will not be perceived well by the reviewers: Making the paper reproducible is important, regardless of whether the code and data are provided or not.
        \item If the contribution is a dataset and/or model, the authors should describe the steps taken to make their results reproducible or verifiable. 
        \item Depending on the contribution, reproducibility can be accomplished in various ways. For example, if the contribution is a novel architecture, describing the architecture fully might suffice, or if the contribution is a specific model and empirical evaluation, it may be necessary to either make it possible for others to replicate the model with the same dataset, or provide access to the model. In general. releasing code and data is often one good way to accomplish this, but reproducibility can also be provided via detailed instructions for how to replicate the results, access to a hosted model (e.g., in the case of a large language model), releasing of a model checkpoint, or other means that are appropriate to the research performed.
        \item While NeurIPS does not require releasing code, the conference does require all submissions to provide some reasonable avenue for reproducibility, which may depend on the nature of the contribution. For example
        \begin{enumerate}
            \item If the contribution is primarily a new algorithm, the paper should make it clear how to reproduce that algorithm.
            \item If the contribution is primarily a new model architecture, the paper should describe the architecture clearly and fully.
            \item If the contribution is a new model (e.g., a large language model), then there should either be a way to access this model for reproducing the results or a way to reproduce the model (e.g., with an open-source dataset or instructions for how to construct the dataset).
            \item We recognize that reproducibility may be tricky in some cases, in which case authors are welcome to describe the particular way they provide for reproducibility. In the case of closed-source models, it may be that access to the model is limited in some way (e.g., to registered users), but it should be possible for other researchers to have some path to reproducing or verifying the results.
        \end{enumerate}
    \end{itemize}

\item {\bf Open access to data and code}
    \item[] Question: Does the paper provide open access to the data and code, with sufficient instructions to faithfully reproduce the main experimental results, as described in supplemental material?
    \item[] Answer:  \answerYes{} 
    \item[] Justification: We provide code in supplementary files.
    \item[] Guidelines:
    \begin{itemize}
        \item The answer NA means that paper does not include experiments requiring code.
        \item Please see the NeurIPS code and data submission guidelines (\url{https://nips.cc/public/guides/CodeSubmissionPolicy}) for more details.
        \item While we encourage the release of code and data, we understand that this might not be possible, so “No” is an acceptable answer. Papers cannot be rejected simply for not including code, unless this is central to the contribution (e.g., for a new open-source benchmark).
        \item The instructions should contain the exact command and environment needed to run to reproduce the results. See the NeurIPS code and data submission guidelines (\url{https://nips.cc/public/guides/CodeSubmissionPolicy}) for more details.
        \item The authors should provide instructions on data access and preparation, including how to access the raw data, preprocessed data, intermediate data, and generated data, etc.
        \item The authors should provide scripts to reproduce all experimental results for the new proposed method and baselines. If only a subset of experiments are reproducible, they should state which ones are omitted from the script and why.
        \item At submission time, to preserve anonymity, the authors should release anonymized versions (if applicable).
        \item Providing as much information as possible in supplemental material (appended to the paper) is recommended, but including URLs to data and code is permitted.
    \end{itemize}

\item {\bf Experimental Setting/Details}
    \item[] Question: Does the paper specify all the training and test details (e.g., data splits, hyperparameters, how they were chosen, type of optimizer, etc.) necessary to understand the results?
    \item[] Answer: \answerYes{} 
    \item[] Justification: We provide all the necessary details in the main paper (Section~\ref{sec:experiments}) with additional information in the appendix.
    \item[] Guidelines:
    \begin{itemize}
        \item The answer NA means that the paper does not include experiments.
        \item The experimental setting should be presented in the core of the paper to a level of detail that is necessary to appreciate the results and make sense of them.
        \item The full details can be provided either with the code, in appendix, or as supplemental material.
    \end{itemize}

\item {\bf Experiment Statistical Significance}
    \item[] Question: Does the paper report error bars suitably and correctly defined or other appropriate information about the statistical significance of the experiments?
    \item[] Answer: \answerYes{} 
    \item[] Justification: We report the standard errors across multiple runs for all our experiments.
    \item[] Guidelines:
    \begin{itemize}
        \item The answer NA means that the paper does not include experiments.
        \item The authors should answer "Yes" if the results are accompanied by error bars, confidence intervals, or statistical significance tests, at least for the experiments that support the main claims of the paper.
        \item The factors of variability that the error bars are capturing should be clearly stated (for example, train/test split, initialization, random drawing of some parameter, or overall run with given experimental conditions).
        \item The method for calculating the error bars should be explained (closed form formula, call to a library function, bootstrap, etc.)
        \item The assumptions made should be given (e.g., Normally distributed errors).
        \item It should be clear whether the error bar is the standard deviation or the standard error of the mean.
        \item It is OK to report 1-sigma error bars, but one should state it. The authors should preferably report a 2-sigma error bar than state that they have a 96\% CI, if the hypothesis of Normality of errors is not verified.
        \item For asymmetric distributions, the authors should be careful not to show in tables or figures symmetric error bars that would yield results that are out of range (e.g. negative error rates).
        \item If error bars are reported in tables or plots, The authors should explain in the text how they were calculated and reference the corresponding figures or tables in the text.
    \end{itemize}

\item {\bf Experiments Compute Resources}
    \item[] Question: For each experiment, does the paper provide sufficient information on the computer resources (type of compute workers, memory, time of execution) needed to reproduce the experiments?
    \item[] Answer:  \answerYes{}
    \item[] Justification: We provide compute resource in the appendix.
    \item[] Guidelines:
    \begin{itemize}
        \item The answer NA means that the paper does not include experiments.
        \item The paper should indicate the type of compute workers CPU or GPU, internal cluster, or cloud provider, including relevant memory and storage.
        \item The paper should provide the amount of compute required for each of the individual experimental runs as well as estimate the total compute. 
        \item The paper should disclose whether the full research project required more compute than the experiments reported in the paper (e.g., preliminary or failed experiments that didn't make it into the paper). 
    \end{itemize}
    
\item {\bf Code Of Ethics}
    \item[] Question: Does the research conducted in the paper conform, in every respect, with the NeurIPS Code of Ethics \url{https://neurips.cc/public/EthicsGuidelines}?
    \item[] Answer: \answerYes{} 
    \item[] Justification: Based on the NeurIPS code of ethics, we do not see any direct ethical or societal impact considerations for our work.
    \item[] Guidelines:
    \begin{itemize}
        \item The answer NA means that the authors have not reviewed the NeurIPS Code of Ethics.
        \item If the authors answer No, they should explain the special circumstances that require a deviation from the Code of Ethics.
        \item The authors should make sure to preserve anonymity (e.g., if there is a special consideration due to laws or regulations in their jurisdiction).
    \end{itemize}

\item {\bf Broader Impacts}
    \item[] Question: Does the paper discuss both potential positive societal impacts and negative societal impacts of the work performed?
    \item[] Answer: \answerNA{} 
    \item[] Justification: Based on the NeurIPS code of ethics, we do not see any direct ethical or societal impact considerations for our work.
    \item[] Guidelines:
    \begin{itemize}
        \item The answer NA means that there is no societal impact of the work performed.
        \item If the authors answer NA or No, they should explain why their work has no societal impact or why the paper does not address societal impact.
        \item Examples of negative societal impacts include potential malicious or unintended uses (e.g., disinformation, generating fake profiles, surveillance), fairness considerations (e.g., deployment of technologies that could make decisions that unfairly impact specific groups), privacy considerations, and security considerations.
        \item The conference expects that many papers will be foundational research and not tied to particular applications, let alone deployments. However, if there is a direct path to any negative applications, the authors should point it out. For example, it is legitimate to point out that an improvement in the quality of generative models could be used to generate deepfakes for disinformation. On the other hand, it is not needed to point out that a generic algorithm for optimizing neural networks could enable people to train models that generate Deepfakes faster.
        \item The authors should consider possible harms that could arise when the technology is being used as intended and functioning correctly, harms that could arise when the technology is being used as intended but gives incorrect results, and harms following from (intentional or unintentional) misuse of the technology.
        \item If there are negative societal impacts, the authors could also discuss possible mitigation strategies (e.g., gated release of models, providing defenses in addition to attacks, mechanisms for monitoring misuse, mechanisms to monitor how a system learns from feedback over time, improving the efficiency and accessibility of ML).
    \end{itemize}
    
\item {\bf Safeguards}
    \item[] Question: Does the paper describe safeguards that have been put in place for responsible release of data or models that have a high risk for misuse (e.g., pretrained language models, image generators, or scraped datasets)?
    \item[] Answer: \answerNA{} 
    \item[] Justification: These safeguard concerns do not apply to our experimental domains.
    \item[] Guidelines:
    \begin{itemize}
        \item The answer NA means that the paper poses no such risks.
        \item Released models that have a high risk for misuse or dual-use should be released with necessary safeguards to allow for controlled use of the model, for example by requiring that users adhere to usage guidelines or restrictions to access the model or implementing safety filters. 
        \item Datasets that have been scraped from the Internet could pose safety risks. The authors should describe how they avoided releasing unsafe images.
        \item We recognize that providing effective safeguards is challenging, and many papers do not require this, but we encourage authors to take this into account and make a best faith effort.
    \end{itemize}

\item {\bf Licenses for existing assets}
    \item[] Question: Are the creators or original owners of assets (e.g., code, data, models), used in the paper, properly credited and are the license and terms of use explicitly mentioned and properly respected?
    \item[] Answer: \answerYes{} 
    \item[] Justification: We include the information in the supplementary files.
    \item[] Guidelines:
    \begin{itemize}
        \item The answer NA means that the paper does not use existing assets.
        \item The authors should cite the original paper that produced the code package or dataset.
        \item The authors should state which version of the asset is used and, if possible, include a URL.
        \item The name of the license (e.g., CC-BY 4.0) should be included for each asset.
        \item For scraped data from a particular source (e.g., website), the copyright and terms of service of that source should be provided.
        \item If assets are released, the license, copyright information, and terms of use in the package should be provided. For popular datasets, \url{paperswithcode.com/datasets} has curated licenses for some datasets. Their licensing guide can help determine the license of a dataset.
        \item For existing datasets that are re-packaged, both the original license and the license of the derived asset (if it has changed) should be provided.
        \item If this information is not available online, the authors are encouraged to reach out to the asset's creators.
    \end{itemize}

\item {\bf New Assets}
    \item[] Question: Are new assets introduced in the paper well documented and is the documentation provided alongside the assets?
    \item[] Answer: \answerNA{} 
    \item[] Justification: Not applicable to this paper.
    \item[] Guidelines:
    \begin{itemize}
        \item The answer NA means that the paper does not release new assets.
        \item Researchers should communicate the details of the dataset/code/model as part of their submissions via structured templates. This includes details about training, license, limitations, etc. 
        \item The paper should discuss whether and how consent was obtained from people whose asset is used.
        \item At submission time, remember to anonymize your assets (if applicable). You can either create an anonymized URL or include an anonymized zip file.
    \end{itemize}

\item {\bf Crowdsourcing and Research with Human Subjects}
    \item[] Question: For crowdsourcing experiments and research with human subjects, does the paper include the full text of instructions given to participants and screenshots, if applicable, as well as details about compensation (if any)? 
    \item[] Answer: \answerNA{} 
    \item[] Justification: No crowdsourcing or human subjects involved.
    \item[] Guidelines:
    \begin{itemize}
        \item The answer NA means that the paper does not involve crowdsourcing nor research with human subjects.
        \item Including this information in the supplemental material is fine, but if the main contribution of the paper involves human subjects, then as much detail as possible should be included in the main paper. 
        \item According to the NeurIPS Code of Ethics, workers involved in data collection, curation, or other labor should be paid at least the minimum wage in the country of the data collector. 
    \end{itemize}

\item {\bf Institutional Review Board (IRB) Approvals or Equivalent for Research with Human Subjects}
    \item[] Question: Does the paper describe potential risks incurred by study participants, whether such risks were disclosed to the subjects, and whether Institutional Review Board (IRB) approvals (or an equivalent approval/review based on the requirements of your country or institution) were obtained?
    \item[] Answer: \answerNA{} 
    \item[] Justification: No human subjects or crowd-sourcing involved in experiments for this paper.
    \item[] Guidelines:
    \begin{itemize}
        \item The answer NA means that the paper does not involve crowdsourcing nor research with human subjects.
        \item Depending on the country in which research is conducted, IRB approval (or equivalent) may be required for any human subjects research. If you obtained IRB approval, you should clearly state this in the paper. 
        \item We recognize that the procedures for this may vary significantly between institutions and locations, and we expect authors to adhere to the NeurIPS Code of Ethics and the guidelines for their institution. 
        \item For initial submissions, do not include any information that would break anonymity (if applicable), such as the institution conducting the review.
    \end{itemize}

\end{enumerate}

\end{document}